\PassOptionsToPackage{pdftex}{hyperref}
 \documentclass[year=26]{fmcad}

\makeatletter
\@ifundefined{spnewtheorem}{%
	\newcommand{\spnewtheorem}{\@ifstar\spnewtheorem@star\spnewtheorem@nostar}
	\newcommand{\spnewtheorem@star}[4]{\newtheorem{#1}{#2}}
	\newcommand{\spnewtheorem@nostar}[4]{\newtheorem{#1}{#2}}
}{}
\makeatother

\providecommand{\authorrunning}[1]{}
\providecommand{\titlerunning}[1]{}
\providecommand{\inst}[1]{}
\makeatletter
\@ifundefined{c@theorem}{}{}
\@ifundefined{c@lemma}{\newtheorem{lemma}{Lemma}}{}
\@ifundefined{c@proposition}{}{}
\@ifundefined{c@corollary}{}{}
\@ifundefined{c@definition}{}{}
\@ifundefined{c@remark}{\newtheorem{remark}{Remark}}{}
\@ifundefined{c@example}{\newtheorem{example}{Example}}{}
\@ifundefined{proof}{\newenvironment{proof}[1][Proof]{\par\noindent\textit{#1. }}{\hfill$\square$\par}}{}

\@ifundefined{theHALG@line}{}{\renewcommand\theHALG@line{\thealgorithm.\arabic{ALG@line}}}
\makeatother

\usepackage[T1]{fontenc}
\usepackage[english]{babel}
\usepackage[utf8]{inputenc}

\usepackage{amsmath,amssymb,amsfonts,xcolor,graphicx,wrapfig,paralist}
\usepackage{upgreek}
\usepackage{algorithm}
\usepackage[noEnd,indLines=false,italicComments=true,commentColor=darkgray]{algpseudocodex}
\usepackage{mathtools}
\usepackage{booktabs}
\usepackage{adjustbox}
\usepackage{xspace}
\usepackage{enumitem}
\usepackage{standalone}
\usetikzlibrary{arrows}
\usetikzlibrary{arrows.meta}
\usepackage{pgfplots}
\usepackage{csquotes} %
\usepackage{array}
\usepackage{csvsimple}
\usepackage{multirow}
\usepackage{textgreek}
\usepackage[pdftex]{hyperref}
\usepackage{xcolor}
\usepackage{siunitx}
\usepackage{etoolbox} %
\usepackage{fontawesome5}
\usepackage{fancyvrb}

\definecolor[named]{ACMBlue}{cmyk}{1,0.1,0,0.1}
\definecolor[named]{ACMYellow}{cmyk}{0,0.16,1,0}
\definecolor[named]{ACMOrange}{cmyk}{0,0.42,1,0.01}
\definecolor[named]{ACMRed}{cmyk}{0,0.90,0.86,0}
\definecolor[named]{ACMLightBlue}{cmyk}{0.49,0.01,0,0}
\definecolor[named]{ACMGreen}{cmyk}{0.20,0,1,0.19}
\definecolor[named]{ACMPurple}{cmyk}{0.55,1,0,0.15}
\definecolor[named]{ACMDarkBlue}{cmyk}{1,0.58,0,0.21}

\hypersetup{
	colorlinks,
	linkcolor=ACMPurple,
	citecolor=ACMPurple,
	urlcolor=ACMDarkBlue,
	filecolor=ACMDarkBlue}

\usepackage[nolist]{acronym}
\usepackage{breakcites} %

\DeclareFixedFont{\ttb}{T1}{txtt}{bx}{n}{8} %
\DeclareFixedFont{\ttm}{T1}{txtt}{m}{n}{8}  %
\DeclareFixedFont{\tti}{T1}{txtt}{it}{n}{8}

\usepackage{color}
\definecolor{deepblue}{rgb}{0,0,0.5}
\definecolor{deepred}{rgb}{0.6,0,0}
\definecolor{deepgreen}{rgb}{0,0.5,0}

\usepackage{listings}

\newcommand\pythonstyle{\lstset{
		language=Python,
		basicstyle=\ttm,
		morekeywords={self},              %
		keywordstyle=\ttb\color{deepblue},
		emph={MyClass,__init__},          %
		emphstyle=\ttb\color{deepred},    %
		stringstyle=\color{deepgreen},
		commentstyle=\tti\color{purple!60!black},
		frame=tb,                         %
		showstringspaces=false
}}

\lstnewenvironment{python}[1][]
{
	\pythonstyle
	\lstset{#1}
}
{}

\newcommand\pythoninline[1]{{\pythonstyle\lstinline!#1!}}

\lstdefinestyle{customJ}{
  belowcaptionskip=1\baselineskip,
  breaklines=true,
  frame=L,
  xleftmargin=\parindent,
  language=Java,
  showstringspaces=false,
  basicstyle=\footnotesize\ttfamily,
  keywordstyle=\bfseries\color{green!40!black},
  commentstyle=\itshape\color{purple!40!black},
  identifierstyle=\color{blue},
  stringstyle=\color{orange},
}
\usepackage[capitalise]{cleveref}

\usepackage{thm-restate}

\usepackage{placeins} %

\usepackage{booktabs, multirow} %

\usepackage[most]{tcolorbox}

\tcbset{every box/.style={
	left=1mm,right=1mm,top=1pt,bottom=1pt,boxrule=1pt,colbacktitle=white!0,colback=white!0,coltitle=black
}}

\colorlet{linecolor}{gray}
\newtcolorbox{linebox}[1]{
	empty,
	left skip=1mm,
	attach boxed title to top left,
	minipage boxed title,
	title=#1,
	boxed title style={empty,size=minimal,toprule=0pt,top=1mm,left=2mm,bottom=1mm,overlay={}},
	coltitle=black,fonttitle=\bfseries\scshape,
	before=\par\medskip\noindent,parbox=false,boxsep=0pt,left=2mm,right=0mm,top=2pt,breakable,pad at break=0mm,
	before upper=\csname @totalleftmargin\endcsname0pt,
	overlay unbroken={\draw[linecolor,line width=2pt] ([xshift=-0pt]title.north west) -- ([xshift=-0pt]frame.south west); },
	overlay first={\draw[linecolor,line width=2pt] ([xshift=-0pt]title.north west) -- ([xshift=-0pt]frame.south west); },
	overlay middle={\draw[linecolor,line width=2pt] ([xshift=-0pt]frame.north west) -- ([xshift=-0pt]frame.south west); },
	overlay last={\draw[linecolor,line width=2pt] ([xshift=-0pt]frame.north west) -- ([xshift=-0pt]frame.south west); },%
}

\input{format/macros}
\usepackage{xcolor}

\newtoggle{showplots}
 \toggletrue{showplots}

\usepackage[Tol]{colorblind}
\colorlet{color1}{T-Q-MC2}
\colorlet{color2}{T-Q-MC3}
\colorlet{color3}{T-Q-MC4}
\colorlet{color4}{T-Q-MC5}
\colorlet{color5}{T-Q-MC6}
\colorlet{color6}{T-Q-MC7}
\colorlet{color7}{T-Q-MC0}

\newcommand{\quantileplotxlabel}{solved benchmarks}
\newcommand{\quantileplotylabel}{treesize}
\newlength{\quantileplotwidth}
\newlength{\quantileplotheight}
\newcommand{\quantileplotlegendcols}{1}

\newcommand{\standardquantileplotlinestyle}{ultra thick}
\newcommand{\quantileplotlinestyle}{\standardquantileplotlinestyle}

\newcommand{\quantileplot}[9]{%
	\begin{tikzpicture}
	\begin{axis}[
	width=\quantileplotwidth,
	height=\quantileplotheight,
	xmin=#4,
	xmax=#5,
	ymin=#6,
	ymax=#7,
	ymajorgrids,
	ymode=log,
	axis x line=bottom,
	axis y line=left,
	unbounded coords=discard,filter discard warning=false, %
	xlabel=\quantileplotxlabel,
	xlabel style={font=\scriptsize,yshift=4pt},%
	ylabel=\quantileplotylabel,
	ylabel style={font=\scriptsize,yshift=-4pt},%
	yticklabel style={font=\scriptsize},
	scaled y ticks=false,
	xticklabel style={font=\scriptsize},
	legend columns=\quantileplotlegendcols,
	legend pos={#8},
	legend style={nodes={scale=0.75, transform shape},inner sep=1pt,#9},
/pgfplots/legend image code/.code={\draw[mark repeat=2,mark phase=2,##1] plot coordinates {(0cm,0cm) (0.3cm,0cm)};}, %
	every axis plot/.append style={\quantileplotlinestyle},
	legend cell align={left}
	]
	\iftoggle{showplots}{
	\foreach \tool\color in {#2}{%
		\edef\loopbody{
			\noexpand\addplot[\color] table [x=n,y=\tool, col sep=comma] {#1};
		}
		\loopbody
	}
	\legend{#3}}{\node[anchor=south west, align=center, red] {\huge NOT\\ COMPILED};}
	\end{axis}
	\end{tikzpicture}
}

\newlength\scatterplotsize
\newcommand{\scatterplottime}[6]{%
	\begin{tikzpicture}
		\begin{axis}[
			width=\scatterplotsize,
			height=\scatterplotsize,
			axis equal image,
			xmin=1,
			ymin=1,
			ymax=2048,
			xmax=2048,
			xmode=log,
			ymode=log,
			axis x line=bottom,
			axis y line=left,
			xtick={1,2,4,8,16,32,64,128,256},
			xticklabels={1,2,4,8,16,,64,,\!\!256},
			extra x ticks = {512,1024,2048},
			extra x tick labels = {,TO, err},
			extra x tick style = {grid = major},
			ytick={1,2,4,8,16,32,64,128,256},
			yticklabels={1,2,4,8,16,32,64,128,256},
			extra y ticks = {512, 1024, 2048},
			extra y tick labels = {$\ge$512,TO, err},
			extra y tick style = {grid = major},
			xlabel={#3},
			xlabel style={font=\scriptsize,yshift=5pt},%
			ylabel={#5},
			ylabel style={font=\scriptsize,yshift=-9pt},%
			yticklabel style={font=\scriptsize},
			xticklabel style={font=\scriptsize},%
			legend pos=north east,
			legend columns=3,
			legend style={nodes={scale=0.75, transform shape},inner sep=1.5pt, xshift=1mm, yshift=7mm},
			set layers,
			mark layer=axis background
			]
			
			\iftoggle{showplots}{\addplot[
				scatter,
				only marks,
				scatter/classes={
					a={mark=pentagon*,color3,mark size=1.75}%
				},
				scatter src=explicit symbolic
				]%
				table [col sep=comma,x=#2,y=#4
				,meta=Class
				] {#1};
			}{\node[anchor=south west, align=center, red] {\huge NOT\\ COMPILED};}
			\addplot[no marks] coordinates {(0.01,0.01) (512,512) };
			\addplot[no marks, densely dotted] coordinates {(0.01,0.02) (256,512)};
			\addplot[no marks, densely dotted] coordinates {(0.02,0.01) (512,256)};
		\end{axis}
	\end{tikzpicture}
}

\newlength\scatterplot
\newcommand{\scatterplottrees}[6]{%
	\begin{tikzpicture}
		\begin{axis}[
			width=\scatterplotsize,
			height=\scatterplotsize,
			axis equal image,
			xmin=1,
			ymin=1,
			ymax=2048,
			xmax=2048,
			xmode=log,
			ymode=log,
			axis x line=bottom,
			axis y line=left,
			xtick={1,2,4,8,16,32,64,128,256},
			xticklabels={1,2,4,8,16,,64,,\!\!256},
			extra x ticks = {512,1024, 2048},
			extra x tick labels = {,TO, err},
			extra x tick style = {grid = major},
			ytick={1,2,4,8,16,32,64,128,256},
			yticklabels={1,2,4,8,16,32,64,128,256},
			extra y ticks = {512, 1024, 2048},
			extra y tick labels = {$\ge$512,TO, err},
			extra y tick style = {grid = major},
			xlabel={#3},
			xlabel style={font=\scriptsize,yshift=5pt},%
			ylabel={#5},
			ylabel style={font=\scriptsize,yshift=-9pt},%
			yticklabel style={font=\scriptsize},
			xticklabel style={font=\scriptsize},%
			legend pos=north east,
			legend columns=3,
			legend style={nodes={scale=0.75, transform shape},inner sep=1.5pt, xshift=1mm, yshift=7mm},
			set layers,
			mark layer=axis background
			]
			
			\iftoggle{showplots}{\addplot[
				scatter,
				only marks,
				scatter/classes={
					a={mark=pentagon*,color3,mark size=1.75}%
				},
				scatter src=explicit symbolic
				]%
				table [col sep=comma,x=#2,y=#4
				,meta=Class
				] {#1};
			}{\node[anchor=south west, align=center, red] {\huge NOT\\ COMPILED};}
			\addplot[no marks] coordinates {(0.01,0.01) (512,512) };
			\addplot[no marks, densely dotted] coordinates {(0.01,0.02) (256,512)};
			\addplot[no marks, densely dotted] coordinates {(0.02,0.01) (512,256)};
			\addplot[no marks, densely dotted] coordinates {(0.01,0.1) (51.2,512)};
			\addplot[no marks, densely dotted] coordinates {(0.1,0.01) (512,51.2)};
		\end{axis}
	\end{tikzpicture}
}

\newcommand{\scatterplottimeToModel}[6]{%
	\begin{tikzpicture}
		\begin{axis}[
			width=\scatterplotsize,
			height=\scatterplotsize,
			xmin=20000,
			ymin=1,
			ymax=700,
			xmax=600000,
			xmode=log,
			ymode=linear,
			axis x line=bottom,
			axis y line=left,
			xtick={20000,100000,500000},
			xticklabels={\num{20000},\num{100000},\num{500000}},
			extra x ticks = {},
			extra x tick labels = {},
			extra x tick style = {grid = major},
			ytick={1,100,200,300,400,500},
			yticklabels={1,100,200,300,400,500},
			extra y ticks = {600},
			extra y tick labels = {TO},
			extra y tick style = {grid = major},
			xlabel={#3},
			xlabel style={font=\scriptsize,yshift=5pt},%
			ylabel={#5},
			ylabel style={font=\scriptsize,yshift=-9pt},%
			yticklabel style={font=\scriptsize},
			xticklabel style={font=\scriptsize},%
			legend pos=north east,
			legend columns=3,
			legend style={nodes={scale=0.75, transform shape},inner sep=1.5pt, xshift=1mm, yshift=10mm},
			set layers,
			mark layer=axis background
			]
			
			\iftoggle{showplots}{\addplot[
				scatter,
				only marks,
				scatter/classes={
					ours-controller-simulation-simple={mark=diamond*,color1,mark size=1.75},
					ours-combined-agency-simple={mark=pentagon*,color3,mark size=1.75},
					ours-permissive-cost-of-error-simple={mark=triangle*,color2,mark size=1.75},
					cav15={mark=*,color5,mark size=1.75},
					vanilla={mark=otimes,color6,mark size=1.75},
					virtualBest={mark=empty,color3,mark size=1.75},
					ours-combined-simulation-simple={mark=empty,color3,mark size=1.75},
					ours-permissive-simulation-simple={mark=empty,color3,mark size=1.75},
					ours-permissive-agency-simple={mark=empty,color3,mark size=1.75},
					ours-controller-agency-simple={mark=empty,color3,mark size=1.75},
					ours-controller-cost-of-error-simple={mark=empty,color3,mark size=1.75},
					ours-combined-cost-of-error-simple={mark=empty,color3,mark size=1.75},
					ours-permissive-uniform-simple={mark=empty,color3,mark size=1.75},
					ours-controller-uniform-simple={mark=empty,color3,mark size=1.75},
					ours-combined-uniform-simple={mark=empty,color3,mark size=1.75}
				},
				scatter src=explicit symbolic
				]%
				table [col sep=comma,x=#2,y=#4
				,meta=Class
				] {#1};
			}{\node[anchor=south west, align=center, red] {\huge NOT\\ COMPILED};}
				\ifthenelse{\NOT\equal{#6}{false}}{\legend{\wsimulation, \wagency, \wmessup, CAV15, \dtcontrol}}{}
		\end{axis}
	\end{tikzpicture}
}

\newcommand{\scatterplotSizeToModel}[6]{%
	\begin{tikzpicture}
		\begin{axis}[
			width=\scatterplotsize,
			height=\scatterplotsize,
			unbounded coords=discard,
			filter discard warning=false,
			xmin=20000,
			ymin=1,
			ymax=15000,
			xmax=600000,
			xmode=log,
			ymode=log,
			axis x line=bottom,
			axis y line=left,
			xtick={20000,100000,500000},
			xticklabels={\num{20000},\num{100000},\num{500000}},
			extra x ticks = {},
			extra x tick labels = {},
			extra x tick style = {grid = major},
			ytick={1,10, 100, 1000,10000},
			yticklabels={1,10, 100, 1000,10000},
			extra y ticks = {4000},
			extra y tick labels = {},
			extra y tick style = {},
			xlabel={#3},
			xlabel style={font=\scriptsize,yshift=5pt},%
			ylabel={#5},
			ylabel style={font=\scriptsize,yshift=-9pt},%
			yticklabel style={font=\scriptsize},
			xticklabel style={font=\scriptsize},%
			legend pos=north east,
			legend columns=3,
			legend style={nodes={scale=0.75, transform shape},inner sep=1.5pt, xshift=1mm, yshift=10mm},
			set layers,
			mark layer=axis background
			]
			
			\iftoggle{showplots}{\addplot[
				scatter,
				only marks,
				scatter/classes={
					ours-controller-simulation-simple={mark=diamond*,color1,mark size=1.75},
					ours-combined-agency-simple={mark=pentagon*,color3,mark size=1.75},
					ours-permissive-cost-of-error-simple={mark=triangle*,color2,mark size=1.75},
					cav15={mark=*,color5,mark size=1.75},
					vanilla={mark=otimes,color6,mark size=1.75},
					virtualBest={mark=empty,color3,mark size=1.75},
					ours-combined-simulation-simple={mark=empty,color3,mark size=1.75},
					ours-permissive-simulation-simple={mark=empty,color3,mark size=1.75},
					ours-permissive-agency-simple={mark=empty,color3,mark size=1.75},
					ours-controller-agency-simple={mark=empty,color3,mark size=1.75},
					ours-controller-cost-of-error-simple={mark=empty,color3,mark size=1.75},
					ours-combined-cost-of-error-simple={mark=empty,color3,mark size=1.75},
					ours-permissive-uniform-simple={mark=empty,color3,mark size=1.75},
					ours-controller-uniform-simple={mark=empty,color3,mark size=1.75},
					ours-combined-uniform-simple={mark=empty,color3,mark size=1.75}
				},
				scatter src=explicit symbolic
				]%
				table [col sep=comma,x=#2,y=#4
				,meta=Class
				] {#1};
			}{\node[anchor=south west, align=center, red] {\huge NOT\\ COMPILED};}
			\ifthenelse{\NOT\equal{#6}{false}}{\legend{\wsimulation, \wagency, \wmessup, CAV15, \dtcontrol}}{}
		\end{axis}
	\end{tikzpicture}
}

\pgfplotsset{compat=1.17}
\usetikzlibrary{backgrounds,scopes,calc,positioning,arrows,shapes}

\hypersetup{hypertexnames=false}

\makeatletter

\renewcommand*{\theHALG@line}{\arabic{page}.\arabic{ALG@line}}
\makeatother

\newlist{todolist}{itemize}{2}
\setlist[todolist]{label=$\square$}
\usepackage{pifont}

\title{
	\dte: Trading Optimality for Explainability in 
	MDPs via Decision Trees
}

\author{
\IEEEauthorblockN{%
	Tereza Kinsk\'{a}\textsuperscript{1}\orcid{0009-0006-2968-4006},\;
	Jan K\v{r}et\'{i}nsk\'{y}\textsuperscript{1,2}\orcid{0000-0002-8122-2881},\;
	Tobias Meggendorfer\textsuperscript{3}\orcid{0000-0002-1712-2165},\;
	Sabine Rieder\textsuperscript{1,2}\orcid{0009-0006-6397-3100},\;
	Maximilian Weininger\textsuperscript{4}\orcid{0000-0002-0163-2152}%
}
\IEEEauthorblockA{\textsuperscript{1}Masaryk University, Brno, Czech Republic\\Email: \{xkinska, jan.kretinsky, xrieder\}@fi.muni.cz}
\IEEEauthorblockA{\textsuperscript{2}Technical University of Munich, Munich, Germany \\Email: \{jan.kretinsky, sabine.rieder\}@tum.de}
\IEEEauthorblockA{\textsuperscript{3}Lancaster University Leipzig, Leipzig, Germany \\ Email: t.meggendorfer@lancaster.ac.uk}
\IEEEauthorblockA{\textsuperscript{4}Ruhr-University Bochum, Bochum, Germany \\ Email: maximilian.weininger@rub.de}
}

\authorrunning{T. Kinsk\'{a} et al.}

\begin{document}

\maketitle

\begin{abstract}
	Over the past decade, decision trees have been %
	used to represent controllers (a.k.a.\ policies) in an explainable way, with \dtcontrol as a current state-of-the-art tool.
	However, for systems that are large or have many corner cases, even such representations tend to be too complex and not human-comprehensible. %
	Unfortunately, reducing the size of the decision tree is not straightforward, as missing just a single crucial case might result in an incorrect controller.
	We tackle this issue in the setting of Markov decision processes, extending \dtcontrol by \enquote{$\varepsilon$} functionality:
	Given an allowed imprecision $\varepsilon\geq 0$, we construct a smaller decision tree, distilling the essence of the controller, while still guaranteeing its $\varepsilon$-optimality.
	This enables us to provide tunably simpler explanations, omitting a controllable amount of detail.
	Our tool constructs decision trees that are orders of magnitude smaller than the state of the art.
\end{abstract} 

\section{Introduction}
\acresetall

\newcommand{\para}[1]{\smallskip\noindent\textbf{#1}}

\para{Need for Compact and Explainable Controllers.} %
\emph{Controller synthesis}~\cite{DBLP:journals/pieee/RamadgeW89,DBLP:conf/popl/PnueliR89} automatically constructs a controller for a non-deterministic system so that it satisfies a given specification.
While being a tempting alternative to error-prone human-designed controllers, several hurdles prevent widespread adoption:
Automatically synthesised controllers are typically represented explicitly as \emph{unstructured, large tables}, assigning to each state of the system an action to take. %
In contrast, numerous practical purposes require controllers to be \emph{concise} and \emph{explainable}.
Regarding size, it may be challenging to implement tables with millions of rows, %
especially on embedded devices with limited hardware.
Regarding explainability, such a table's sheer size and lack of structure prohibits even domain experts from understanding the controller.
In addition to engineers' natural reluctance to use controllers they cannot understand (and thus sensibly maintain and certify), explaining the controller contributes significantly to \emph{validating} the whole modelling process. Finally, explainability per se may even become a \emph{legal requirement} \cite{EUX,USX} whenever AI techniques are aiding the process. %
A plethora of surveys on explainability underline its relevance~\cite{DBLP:journals/scirobotics/GunningSCMSY19,DBLP:journals/inffus/ArrietaRSBTBGGM20,DBLP:journals/widm/AngelovSJAA21,DBLP:journals/tnn/TjoaG21,DBLP:conf/hicss/GerlingsSC21,meske2022explainable,DBLP:journals/ai/MillerHAH22}.

\para{Decision trees (DT)} are among the most explainable formalisms, being both transparent and interpretable, e.g.~\cite{DBLP:journals/widm/AngelovSJAA21, DBLP:journals/eswa/PiltaverLGM16, DBLP:journals/air/CostaP23}.
For a decade, they have been used for representing controllers, e.g.~\cite{liu2010compact,CAV15,DBLP:conf/qest/AshokKLCTW19,dtcontrol2,DBLP:conf/aaai/TopinMFV21,DBLP:conf/ijcai/VosV23,DBLP:conf/xai/DubslaffKP24}
and as such found various usages, namely for detecting bugs~\cite{CAV15,dtcontrol2}, as part of modelling~\cite{KushJonis}, or for explaining complex strategies~\cite{DBLP:conf/vecos/BuddeDH24}.
Mature tool support is available~\cite{dtcontrol2,dtpaynt}.
However, in an interesting contrast to (generally imprecise) DT learning, the safety-criticality motivated these tools to focus on \emph{precise representation} of controllers, i.e.\ for \emph{every} state providing an optimal (winning/safe) action.

\para{Our focus} is on representing controllers arising in %
Markov decision processes (MDPs)~\cite{puterman}.
There, representing the full controller completely is often \emph{not} necessary, as many states are only reached with small probability and behaving sub-optimally in them does not significantly change the outcome, cf.~\cite{KM20-cores,DBLP:conf/atva/Meggendorfer22}.
Moreover, most scalable methods for controller synthesis do not provide an optimal controller, but only an \emph{$\varepsilon$-optimal} one~\cite{qcomp-BHKKPQTZ20,HJQW23}, i.e.\ whose performance is $\varepsilon$-close to optimal for a given $\varepsilon\geq0$.
Exploiting this fact, we allow the DT to ignore some decisions, provided the resulting performance is still $\varepsilon$-optimal.
DTs~\cite{DBLP:journals/eswa/PiltaverLGM16,DBLP:journals/natmi/Rudin19}, though comprehensibility also depends on predicate complexity, not size alone~\cite{DBLP:journals/dss/HuysmansDMVB11}.
A related idea was presented in~\cite{CAV15} for reachability objectives, where simulation was used to heuristically learn only the decision in frequently visited states, ignoring others.
However, this heuristic was \enquote{aggressive}, potentially compromising $\varepsilon$-optimality.
Moreover, the prototypical implementation of~\cite{CAV15} is unavailable.
In contrast, we provide a self-contained, richly parametrisable tool that guarantees $\varepsilon$-optimality of the resulting controller across the full standard spectrum of objectives.

\para{Our contribution} is the tool \dte: It extends 
\dtcontrol~\cite{dtcontrol2}, most importantly by adding the functionality---dubbed \enquote{$\varepsilon$}---to construct \emph{smaller and more explainable DTs}, \emph{exploiting the allowed imprecision $\varepsilon$}.

The workflow description (\cref{fig:tool-overview}, \cref{sec:3-workflow}) outlines all the technical novelties.
The key enabler is our tight coupling of the tool with the probabilistic model checker \storm~\cite{storm}, based on which we provide numerous ways for DT size reduction while guaranteeing $\varepsilon$-optimality:
(i)~The initial controller is permissive, containing multiple optimal actions per state.
Thus, we can profit from \dtcontrol's powerful determinisation heuristics.
(ii)~We reduce the size of the controller by identifying which decisions are of key importance, and which have little or no impact on the resulting behaviour. %
While \dtcontrol captures decisions in all states, \dte builds smaller DTs containing only relevant decisions. %
Unlike~\cite{CAV15}, most of these heuristics are \enquote{safe}, i.e.\ they do not compromise $\varepsilon$-optimality.
(iii)~The high-level description of the model allows us to suggest more efficient predicates to be used in DT construction, improving on the semi-automatic approach of~\cite{dtcontrol2}.
(iv)~Model checking a DT before returning it provides its current precision, %
which allows for trying aggressive heuristics such as the one of~\cite{CAV15} or extensive pruning while still guaranteeing $\varepsilon$-optimality.
Practically, we encountered several (unexpected) technical challenges, such as needing to extend \stormpy, e.g. to handle randomising and non-deterministic controllers, required for efficient DT validation.

Finally, our approach is applicable to all the standard objectives such as reachability/safety, reach-avoid, or reward-based objectives, covering the standard benchmark collections~\cite{qcomp-benchmarkset,hartmanns2025revised}.
Notably, our tool is the only one supporting linear temporal logic (LTL) properties, see \cref{app:ltl}.

\para{Improvements over other tools.}
Our evaluation compares to \dtcontrol{}~\cite{dtcontrol2}, \dtpaynt~\cite{dtpaynt}, and \dtnest~\cite{DTNest}.
Already for precise representation ($\varepsilon=0$), \dte achieves order-of-magnitude improvements over \dtcontrol{} due to the improved initial controller, safe heuristics, and automatic predicate suggestions.
\dtpaynt mostly times out, and on over half the remaining benchmarks produces larger DTs.
\dtnest{} is the strongest competitor; still, \dte produces DTs at most half the size in 23 of 38 benchmarks and an order of magnitude smaller in 10.
Moreover, \dte is best at exploiting allowed imprecision: when allowing $\varepsilon = 10^{-2}$, we obtain single-node DTs for nearly half the benchmarks, exposing that focusing on a single action is $\varepsilon$-optimal — a previously unobserved fact.
In contrast, \dtnest mostly fails due to internal errors and, surprisingly, may even produce \emph{larger} DTs when $\varepsilon>0$.

\para{Related work.}
\dtcontrol{}~\cite{dtcontrol2}, the direct predecessor of \dte, comes from a stream of work on \emph{precise} controller representation~\cite{CAV15,DBLP:conf/qest/AshokKLCTW19,DBLP:conf/hybrid/AshokJJKWZ20,dtcontrol2,DBLP:journals/sttt/JungermannKW23}.

The closest work to our $\varepsilon$-approach is~\cite{CAV15}: it prunes the DT and weights learning by visit frequency from simulation.
Both ideas are \emph{aggressive}, i.e., they can compromise $\varepsilon$-optimality, and ~\cite{CAV15} handles this by restarting from scratch if the result is unsound.
In contrast, \dte cleanly separates safe from aggressive heuristics, guaranteeing $\varepsilon$-optimality without restarting, and builds on \dtcontrol's tailored DT algorithms rather than a general-purpose ML package~\cite[Sec.\,6.1]{CAV15}.
Verification of the resulting DT is automated in \dte.
In contrast, \cite{CAV15} required manual binary search over hyperparameters.
Finally,~\cite{CAV15} covers only reachability and uses a state$\times$action ($\states\times\actions\to\{0,1\}$) representation, while \dte supports all standard objectives and the $\states\to\actions$ representation of the \dtcontrol tradition.

\dtpaynt~\cite{dtpaynt} finds policies representable as minimal-depth DTs via abstraction-refinement and SMT.
However, its scalability is limited, timing out on most of our benchmarks.
\dtnest~\cite{DTNest} builds on both \dtcontrol and \dtpaynt: it constructs a DT with \dtcontrol and iteratively shrinks subtrees with \dtpaynt, feeding the result back to both tools for local search.
This is orthogonal to our approach and can be seen as an elaborate pruning step; a tight coupling of \dte with \dtnest could combine both strengths.
OMDT~\cite{DBLP:conf/ijcai/VosV23} finds optimally performing trees of a prescribed size, but provides no error control and does not scale beyond depth~3; it is significantly outperformed by \dtpaynt~\cite[Sec.\,5]{dtpaynt}.
Standard DT learning~\cite{liu2010compact} can represent controllers but is not tailored to the task of finding $\varepsilon$-optimal controllers that can be represented by small DTs.

Furthermore, representing RL policies as DTs has received considerable attention, e.g.\ via direct RL frameworks~\cite{DBLP:conf/aaai/TopinMFV21,NEURIPS2018_e6d8545d}, policy gradients~\cite{Das_Gupta_Talvitie_Bowling_2015}, distillation~\cite{kohler2024interpretableeditableprogrammatictree}, and dynamic programming~\cite{JMLR:v23:20-520}.
DTs have also been used for shields~\cite{brorholt2025uppaalcoshyautomaticsynthesis}.
The RL community has also studied approximating value functions via linear programming~\cite{GoodReprSufficientRL}, domain-specific languages~\cite{verma2019programmaticallyinterpretablereinforcementlearning,batz2023programmaticstrategysynthesisresolving}, and shallow neural networks~\cite{shallow_nn,article_shallow_nn}.
We do not compare to these, as they do not provide any control of the (im)precision.

Binary decision diagrams (BDDs) are an alternative to DTs but are hindered in this context by bit-blasting of non-Boolean variables, and fixed variable ordering resulting in larger representations~\cite{DBLP:conf/hybrid/AshokJJKWZ20,dtcontrol2}.
Recent work~\cite{JanHSCC25} partially closes this gap.
Neural networks are fundamentally less explainable than DTs~\cite{DBLP:journals/widm/AngelovSJAA21}.

\section{Technical Preliminaries} \label{sec:prelimns}
\subsection{Markov Decision Processes}
Let $\Distributions(X)$ denote the set of all probability distributions on a finite set $X$.
A (finite-state) \emph{Markov decision process} (MDP) \cite{puterman,BK08,kallenberg2011markov} is a tuple $(\states,\actions,\initstate,\trans)$, where $\states$ and $\actions$ are finite set of states and actions, $\initstate\in\states$ is an initial state, and $\trans \colon \states \times\actions\rightharpoonup \Distributions(\states)$ is a (partial) transition function such that the set of \emph{available actions} $\actions(s)\eqdef \{a \mid (s,a) \in \text{domain}(\trans)\}$ is non-empty for all $s\in\states$.
Intuitively, in every state $s\in\states$, there is a non-deterministic choice between the available actions $\actions(s)$; choosing an action samples a successor according to $\trans(s,a)$, where the process is repeated.
For readability, we may write $\trans(s, a, s')$ instead of $\trans(s, a)(s')$.

A (memoryless) \emph{controller} (a.k.a.\ scheduler, policy, strategy) is a function $\controller \colon \states \to \Distributions(\actions)$ that for every state resolves the non-deterministic choice by picking a distribution over (available) actions (i.e.\ $\controller(s)(a) > 0$ implies $a \in \actions(s)$).
We write $\policies$ to denote the set of all controllers.

The \emph{semantics} of MDPs are defined as usual (e.g.~\cite[Chp.~10]{BK08}):
A path is an infinite sequence of states $s_1 s_2 \ldots$, and $\paths$ is the set of all such paths.
Together with a controller $\controller$ and state $s$, we obtain a unique probability measure $\prob_s^\controller$ for (the standard $\sigma$-algebra over) $\paths$.
We denote the expectation under this probability measure by $\mathbb{E}_s^\controller$.

An \emph{objective} $\Phi$ assigns values to paths.
	\emph{Reachability} is the objective we focus on in the main body.
	Given a set of goal states $\gStates$, we are interested in $\lozenge \gStates = \{\rho \in \paths \mid \exists i.\ \rho_i \in \gStates\}$, i.e.\ the set of all paths that reach any state of $\gStates$.
	These paths are assigned 1, all others 0, i.e.\ $\Phi(\rho) = 1$ if $\rho \in \lozenge \gStates$ and $0$ otherwise.
	\emph{Safety} is the dual of reachability. 
	We want to \emph{avoid} reaching the given set of states, i.e.\ the set of paths with assigned 1 are $\prob_\initstate^{\opt}[\overline{\lozenge \gStates}] = 1 - \prob_\initstate^{\opt}[{\lozenge \gStates}]$.
	\emph{Constrained reachability}, also known as a \emph{reach-while-avoid} property, is their combination: Given unsafe and goal states $X$ and $\gStates$, we are interested in the set of paths $\{\rho \mid \exists i.\ \rho_i \in \gStates \land \forall j < i.\ \rho_j \notin X\}$.
	Note that in general this is slightly different from $\overline{\lozenge X} \cap \lozenge \gStates$, which prohibits reaching $X$ even after goal $\gStates$ is reached.

	\emph{Reward-based objectives} make use of a reward function $r : \states \to \mathbb{R}_0^+$, mapping states to non-negative numbers.
	There are also variants which additionally assign rewards to individual actions or transitions.
		\emph{Total reward} is defined as $\mathsf{TR}(\rho) = \sum_{i=1}^\infty r(\rho_i)$, i.e.\ the sum of all rewards along the path.
		Since the rewards are non-negative, the sum is well-defined.
		\emph{Reachability cost} is then defined similarly.
		Additionally given a goal set $\gStates$, we set $\mathsf{RC}(\rho) = \infty$ if $\rho \notin \lozenge \gStates$, i.e.\ the goal is not reached, and $\sum_{i=1}^x r(\rho_i)$ where $x = \min \{i \mid \rho_i \in \gStates\} - 1$, i.e.\ the sum of all rewards (or, in this case, rather costs) until the goal set is reached.
	Further details and proof that (globally) optimal controllers exist can be found in \cite{SGreward}.
	
	\emph{Linear temporal logic (LTL)} can be used to express more complex goals (e.g., "pick up and deliver a package while ensuring safety").
	We refer to~\cite{BK08} for a formal definition, as a precise treatment of LTL objectives for MDPs goes beyond the scope of this work.
	We highlight that in contrast to previously mentioned objectives, LTL requires controllers with memory, which \dte{} supports (see \cref{app:ltl}).

The \emph{value} of a state $s$ under a given controller $\controller$ is the expectation of $\Phi$, the random variable assigning numbers to paths, under the probability measure $\prob_s^\controller$, i.e.\
$\val^\controller(s)\eqdef \mathbb{E}_s^\controller[\Phi]$.
For reachability, safety, and constrained reachability, this coincides with the probability of the set of paths we are interested in, e.g.\ for reachability we have $\val^\controller(s) = \mathbb{E}_s^\controller[\Phi] = \prob_s^\controller[\lozenge \gStates]$.

The \emph{optimal value} of a state is the optimal value over all controllers, i.e.\ $\prob_s^{\opt}[\lozenge\gStates] \eqdef \opt_{\controller\in\policies} \prob_s^\controller[\lozenge\gStates]$ (for $\opt \in \{\min, \max\}$).
When the goal states and optimization are clear from the context, we write $\val^\controller(s) \eqdef \prob_s^\controller[\lozenge\gStates]$ for the value of a controller $\controller$ in $s$, $\val(s) \eqdef \prob_s^{\opt}[\lozenge\gStates]$ for the optimal value in $s$, and define the \emph{value of an MDP} as the value of its initial state $\val(\initstate)$.
A controller is \emph{optimal} (in $\initstate$) if $\val^\controller(\initstate) = \val(\initstate)$, and \emph{globally optimal} if $\val^\controller(s) = \val(s)$ for all $s \in \states$.
Globally optimal controllers exist for the objectives we consider~\cite{BK08,SGreward}.
Finally, for $\varepsilon \in [0,1]$, $\controller$ is $\varepsilon$-optimal (in $\initstate$) if $\abs{\val^\controller(\initstate) - \val(\initstate)} \leq \varepsilon$.
We write $\val(s, a) \eqdef \sum_{s'\in\states}\trans(s,a)(s') \cdot \val(s')$ for the value obtained by following $a$ in state $s$ and then playing optimally.
Note that this may be different from the value obtained by always choosing action $a$ in $s$, e.g.\ if $a$ loops back to $s$.

\subsection{Decision Trees}

\def\ltrue{\textsf{true}}
\def\lfalse{\textsf{false}}

A \emph{decision tree} (DT) $\decisiontree$~\cite{mitchell97} is a binary tree where leafs are labelled by actions and inner nodes are labelled by \emph{predicates} $\pred \in \predicates$, $\pred \colon \states \to \{\ltrue, \lfalse\}$.
States typically comprise a tuple of variables, e.g.\ $x$- and $y$-coordinates, allowing for compact symbolic predicates, e.g.\ $x \leq 2$, see \cref{fig:example} (right).

A DT yields an action for a given state $s$ as follows:
Start from the root node.
In an inner node, evaluate its predicate $\pred(s)$ and go to the left or right child if it evaluates to \ltrue{} or \lfalse{}, respectively.
In a leaf node labelled $a$, pick $a$ if $a\in \actions(s)$, and otherwise %
uniformly at random choose an available action.
We denote the last case by the special label $\uniforma$.
We write $\decisiontree(s)$ for the action (or uniform choice) yielded by the DT $\decisiontree$ in state $s$ and identify the DT with the corresponding controller.

\begin{remark}
	Note that the above subtly assumes that the system can at runtime \enquote{pick any action (at random)}.
	Since even \enquote{do nothing} is an action to be modelled, we believe this assumption is natural.
	However, if this is not the case, one can simply disable related heuristics (\eqval{} and \allSafe{}, defined below); the rest of our approach does not rely on them.
\end{remark}

\begin{algorithm}[!t]
    \caption{\dtconstruct}\label{alg:DT-construct}
    \begin{algorithmic}[1]
        \Require Dataset $\mathcal{D} \colon \states' \to \actions$ where $\states'\subseteq \states$
        \Ensure Decision Tree representing $\mathcal{D}$
        \If{exists $a\in\actions$ with $\mathcal{D}(s)=a$ for all $s\in\states'$} %
            \State \Return Leaf with label $a$
        \EndIf
        \State $\pred^* \gets \argmin_{\pred\in\predicates} \impurity(\mathcal{D}, \pred)$
        \State \Return Node with label $\pred^*$ and children
        \Statex \phantom{\Return} $\dtconstruct(\pred^*(\mathcal{D}))$ and
				\Statex \phantom{\Return} $\dtconstruct(\lnot \pred^*(\mathcal{D}))$
    \end{algorithmic}
\end{algorithm}%

Constructing a minimal DT is NP-complete~\cite{DBLP:journals/ipl/HyafilR76}, so we base our approach on the standard heuristic-based algorithm outlined in \cref{alg:DT-construct}.
Here, the input dataset is a controller, mapping states to actions.
If all given states use the same action, the algorithm constructs a leaf node with that common action.
Otherwise, we attempt to \emph{split} the dataset using a predicate $\pred^*$, chosen from a set $\predicates$ using an \emph{impurity measure} $\impurity$, 
which intuitively assesses how \enquote{heterogeneous} $\pred(\mathcal{D})$ and $\lnot \pred(\mathcal{D})$ are (where $\pred(\mathcal{D}) = \{(s, a) \in \mathcal{D} \mid \pred(s)\}$ and $\lnot \pred(\mathcal{D})$ analogously).
This heuristically indicates how many further splits are required to represent the dataset.
Thus, we insert the predicate minimizing impurity as a tree node, split the dataset accordingly and recurse on the two smaller dataset which only contain choices for a subset $\states'\subseteq\states$.

\begin{remark}
	We point out two differences to the definition in~\cite{dtcontrol2}:
	Firstly, we label leaves with a single action, whereas they use sets of actions.
	While our theory is also applicable to using sets, there are exponentially many sets of actions, and \emph{determinising}, i.e.\ committing to one safe choice, reduces the final DT's size and increases readability.
	(We, however, do employ non-determinism in intermediate steps, see below.)
	Secondly, \cite{dtcontrol2} ensures that $\controller(s)\in\actions(s)$ for every $s$, i.e.\ the DT only recommends available actions.
	We lift this requirement to allow cases where the choice in $s$ is irrelevant and thus does not need to be represented explicitly.
\end{remark}

\section{Tool Overview} \label{sec:overview_example}

\newcommand{\obstacleicon}{\textcolor{red}{\faSkull}}%

This section sketches the main ideas and overall structure of \dtcontroleps{}.
The tool aims to efficiently create small, explainable decision tree controllers for probabilistic systems.
The key novelty and difference to \dtc{} is employing the \emph{information about the model} and the \emph{allowed imprecision~$\varepsilon$}.

Our goal then can be stated as follows:

\begin{linebox}{Problem Statement}
	Given an MDP, objective, and precision $\varepsilon \geq 0$, obtain a (small) decision tree $\decisiontree$ such that $|\val^{\decisiontree}(\initstate) - \val(\initstate)| \leq \varepsilon$.
\end{linebox}
\noindent
(Note that $\varepsilon = 0$ means that $\decisiontree$ should represent an optimal controller.)
The key challenge is to exploit the allowed imprecision while maintaining all: scalability (in contrast to, e.g., \cite{DTNest}), correctness (in contrast to, e.g.,~\cite{CAV15}, which relies on discovering all important states during simulation, which sometimes yields a DT that is not epsilon-optimal),
and completeness (if a decision tree exists, our tool will always find one by omitting aggressive heuristics when necessary).

\subsection{Illustrative Example}

\begin{figure} %
	\centering
	\begin{tikzpicture}[auto,xscale=.4,yscale=.4]
		\draw[step=1.0] (-4,-3) grid (4,3);
		\draw[red,ultra thick] (-4,-3) rectangle (4,3);
		\node[anchor=center] at (3.5,-0.5) {\color{green}\faTools};
		\node[anchor=center] at (-2.5,-0.5) {\faAndroid};
		\node[anchor=center] at (-2.5,-2.5) {\obstacleicon};
		\node[anchor=center] at (-0.5,-0.5) {\obstacleicon};
		\node[anchor=south] at (-3.5,3.1) [] {\small 0};
		\node[anchor=south] at (3.5,3.1) [] {\small 7};
		\node[anchor=south] at (0,3.1) [] {\small x};
		
		\node[anchor=east] at (-4.1,2.5) [] {\small 0};
		\node[anchor=east] at (-4.1,-2.5) [] {\small 5};
		\node[anchor=east] at (-4.1,0) [] {\small y};
	\end{tikzpicture}\hspace{.5cm}%
	\begin{tikzpicture}[
		node/.style={draw,ellipse,inner sep=1pt,minimum height=.6cm,font=\small},
		leaf/.style={draw,rectangle,rounded corners,minimum height=.6cm,minimum width=.6cm,rectangle,font=\small},
		tedge/.style={draw},
		fedge/.style={draw,dashed},
		auto,yscale=0.9
		]
		\node[node] at (0,0) (n1) {$y \leq 0$};
		\node[node] at (-1,-1) (n2) {$x \leq 6$};
		\node[leaf] at (-1.75,-2) (l1) {$\rightarrow$};
		\node[leaf] at (-0.25,-2) (l2) {$\downarrow$};
		\node[leaf] at (1,-1) (l3) {$\uparrow$};
		
		\path[->]
		(n1) edge[tedge,swap] node[inner sep=.5pt,font=\scriptsize] {Y} (n2)
		(n1) edge[fedge] node[inner sep=.5pt,font=\scriptsize] {N} (l3)
		(n2) edge[tedge,swap] node[inner sep=.5pt,font=\scriptsize] {Y} (l1)
		(n2) edge[fedge] node[inner sep=.5pt,font=\scriptsize] {N} (l2)
		;
		
	\end{tikzpicture}
	\caption{
		Visualisation of a grid-world robot navigation task (left) and a DT computed by our tool \dtcontroleps{} (right).
		In the example, the robot (\faAndroid) tries to reach the location in need of repair ({\color{green}\faTools}) while avoiding the~obstacles~(\obstacleicon).
	}
	\label{fig:example}
\end{figure}

We illustrate the core ideas of our tool: 
Consider a robot \raisebox{-1pt}{\faAndroid{}} in a grid world environment on a hillside.
The robot can move %
in the cardinal directions $\rightarrow,\leftarrow,\uparrow,\downarrow$,
but always has a $10\%$ chance of sliding south instead.
The goal is to reach the repair location \textcolor{green}{\scriptsize\faTools} while avoiding the obstacles {\small\obstacleicon{}}. %
\cref{fig:example} illustrates the environment and a near-optimal controller ($\varepsilon=0.001$) obtained by our tool.

Modelling this scenario as an MDP, we can use a probabilistic model checker (like \storm~\cite{storm}) to compute a controller achieving the optimal reachability probability.
Despite the simplicity of the example, this optimal controller consists of dozens of state-action mappings. %
The DT computed by \dtcontrol{}, accurately representing this controller, uses 33 nodes (see \cref{fig:dt_example_vanilla_reduced} (left) in \cref{app:dt_example_figures}).
In contrast, \dte{} uses \emph{information about the model} to apply several reductions,
e.g.\ that states with $x=0$ are not reached under the optimal controller and there is no need to learn choices for them.
This and similar observations already allow to reduce the DT size to 21 nodes (see \cref{fig:dt_example_vanilla_reduced} (right)). %
This reduced DT still achieves the optimal value.
By allowing for an imprecision $\varepsilon$, we can afford to make sub-optimal choices in some states
and prune parts of the DT that are only relevant with very small probability.
Concretely, allowing a minuscule deviation of $\varepsilon=0.001$, we can reduce the DT to \emph{five} nodes and only three decision paths (see \cref{fig:example} (right)).
This DT captures the \enquote{essence} of the behaviour:
Go up to the top border, then go right to the border, then go down.
Note that this controller is indeed not optimal, as it risks repeatedly sliding down into an obstacle by going up rather then moving to the right.

\subsection{Functionality and Usage}

Our tool fully automatically builds a DT representing a controller satisfying the problem stated above.
For example, to construct the DT in \cref{fig:example} (right) from the model in \cref{fig:example} (left), the call is:
\begin{Verbatim}[fontsize=\footnotesize]
 $ python main.py --model tinyrobot.prism \
 --property 'Pmax=?[(!"dead") U "repair"]' \ 
 build --precision 1e-3 --export-dot tinyrobot.dot
\end{Verbatim}
The output DT is given in JSON or GraphViz format. %
The former can be used to apply a DT to different versions of a model, in the spirit of~\cite{DTstrat}.

\subsection{Tool Structure}\label{sec:3-workflow}

\begin{figure*}[t!]
 \centering
 \tikzset{
 partbox/.style={font=\scriptsize,minimum width=1.4cm,rectangle,rounded corners},
 toolstep/.style={partbox,minimum height=.6cm,draw,thick},
 iobox/.style={partbox,fill=gray!20,minimum height=.5cm},
 externalbox/.style={draw=gray,rectangle,rounded corners,text=darkgray!80,thick},
 stormbox/.style={externalbox,fill=T-Q-MC3!20,inner sep=3pt,thick,font=\scriptsize\stormicon\,},
 stormarrow/.style={<->,>={Stealth[scale=.6]},thick,draw=darkgray!60},
 stepedge/.style={->,>=stealth,thick},
 ioedge/.style={->,>=stealth,thick},
 }%
 \newcommand{\safeopticon}{\textcolor{green}\faCheck}%
 \newcommand{\safeoptitem}{\safeopticon}%
 \newcommand{\boundedsafeopticon}{\textcolor{green}\faCheckCircle}%
 \newcommand{\boundedsafeoptitem}{\boundedsafeopticon}%
 \newcommand{\maybopticon}{\textcolor{yellow}{\faQuestion}}%
 \newcommand{\mayboptitem}{\hspace{.5pt}\maybopticon{}\hspace{.5pt}}%
 \newcommand{\aggropticon}{\textcolor{red}{\faFire*}}%
 \newcommand{\aggroptitem}{\hspace{.5pt}\aggropticon\hspace{.5pt}}%
 \newcommand{\stormicon}{\textcolor{yellow}{\faCloudShowersHeavy}}%
 \centering
 \resizebox{0.8\textwidth}{!}{
 \begin{tikzpicture}[node distance=0.5cm,remember picture]
 \node[iobox,text width=1.75cm,align=center] at (0,0) (input) {Input\tiny\begin{itemize}[topsep=2pt,labelsep=2pt,leftmargin=6pt,label=$\bullet$]\item Model\item Objective\item Precision $\varepsilon{\geq}0$\end{itemize}};
 
 \node[toolstep,right=1cm of input] (dtbuild) {1: Solve};
 \node[anchor=north,align=left,font=\tiny,text width=2cm] at (dtbuild.south) {\begin{itemize}[topsep=0pt,labelsep=2pt,label=$\bullet$]\item Model\item opt.\ value\item opt.\ $\controller$\end{itemize}};
 
 \node[toolstep,right=of dtbuild] (dtreduce) {2: Dataset};
 \node[anchor=north,align=center,font=\tiny] at (dtreduce.south) (dtreduceexp) {\begin{minipage}{1.5cm}Initial:\begin{itemize}[topsep=2pt,labelsep=2pt,leftmargin=7pt]\item[\safeoptitem]\datasetController\item[\safeopticon]\datasetPermissive\item[\safeopticon]\datasetCombined\end{itemize}Reduction:
 \begin{itemize}[topsep=2pt,labelsep=2pt,leftmargin=7pt]
 \item[\safeoptitem]\unreach($\mathcal{D}$)
 \item[\safeoptitem]\allSafe
 \item[\boundedsafeoptitem]\agency
 \item[\safeoptitem]\dominance
 \end{itemize}\vfill\end{minipage}};
 
 \node[toolstep,right=of dtreduce] (dtlearn) {3: Learn DT};
 \node[anchor=north,align=center,externalbox,font=\tiny,outer sep=1pt] at ([yshift=-2pt]dtlearn.south) (dtlearndtcontrol) {\dtcontrol{} \cite{dtcontrol2}};
 \node[anchor=north,align=center,font=\tiny,outer sep=1pt,inner sep=0pt] at (dtlearndtcontrol.south) (dtlearnplus) {\faPlus};
 \node[anchor=north,align=center,font=\tiny,outer ysep=-1.5pt,inner sep=0pt,text width=1.9cm] at (dtlearnplus.south) (dtlearnexp) {\begin{itemize}[topsep=0pt,labelsep=2pt,leftmargin=7pt]\item[\safeoptitem]Better \predicates\item[\safeoptitem]Better \impurity\tikzmark{impuritytext}\begin{itemize}[topsep=0pt,labelsep=2pt,leftmargin=7pt]\item[\safeoptitem]Agency\item[\safeoptitem]Cost of Error\item[\safeopticon]Simulation\item[\aggroptitem]CAV15 \cite{CAV15}\end{itemize}\item[\aggroptitem]\earlystop\item[\aggroptitem]Pruning\end{itemize}};

 \node[stormbox,above=.4 cm of dtbuild] (stormbuild) {Build+Solve \cite{storm}};

 \node[stormbox,above=.4 cm of dtlearn] (stormcheck) {Check \cite{storm}};

 \begin{scope}[on background layer]
 \node[fit=(dtbuild) (dtreduceexp) (dtlearnexp),inner sep=.2cm,draw=T-Q-MC4,rectangle,rounded corners=.2cm,ultra thick,fill=T-Q-MC4!5] (dtbox) {};
 \end{scope}
 
 \node[draw=T-Q-MC4,rectangle,rounded corners=4pt,thick,fill=white,anchor=west,inner sep=4pt] at ([xshift=.3cm]dtbox.south west){\scriptsize\dtcontroleps{}};
 
 \node[iobox,right=1cm of dtlearn,text width=1.9cm,align=center] (output) {DT repr.\\of $\varepsilon$-opt.\ $\controller$\tiny\begin{itemize}[topsep=2pt,labelsep=2pt,leftmargin=6pt,label=$\bullet$]\item GraphViz (dot)\item JSON\item \dots\end{itemize}};
 
 \path[stormarrow]
 (dtbuild) edge (stormbuild)
 (dtlearn) edge (stormcheck)
 ;
 \path[stepedge]
 (dtbuild) edge (dtreduce)
 (dtreduce) edge (dtlearn)
 ;
 \path[ioedge]
 (input) edge (dtbuild)
 (dtlearn) edge (output)
 ;
 \end{tikzpicture}}
 \caption{
 Structural overview of \dtcontroleps.
 The tool builds a DT %
 representing an $\varepsilon$-optimal controller for an MDP. %
 Given the model description, objective, and precision, our tool calls \storm (\stormicon)~\cite{storm} to build the model and obtain the optimal value and controller.
 This controller is transformed into a reduced dataset of relevant decisions using our, in the DT-learning context novel, \enquote{safe} methods preserving the value of the controller (\safeopticon), and those that can result in bounded imprecision (\boundedsafeopticon).
 Then, we learn a small DT consistent with this dataset, potentially employing \enquote{aggressive} (\aggropticon{}) heuristics, potentially resulting in unbounded imprecision.
 Using \storm, we check the $\varepsilon$-optimality of the constructed DT and, in case of violation, 
 reduce aggressive optimisations.
 The new tool components are initial dataset construction and reduction, predicate extraction, improved impurity measures except \cite{CAV15}, early stopping, pruning, and, enabling all of these, the overall pipeline and connection to \storm.
 }
 \label{fig:tool-overview}
\end{figure*}

\noindent
We summarise the major steps of the workflow of \dte{} (see \cref{fig:tool-overview}).
Technical insights are provided in the following section.

\para{Step~1: \stepOne.}
We use \stormpy to construct the provided model (supporting both PRISM and JANI format), parse the objective, and compute the optimal value and an optimal controller for the given objective.

\para{Step~2: \stepTwo.}
Using the optimal value and controller, we construct
an initial dataset $\mathcal{D}$, where any controller \emph{consistent} with it (i.e., $\decisiontree(s) \in \mathcal{D}(s)$ for all $s$ where $\mathcal{D}$ is defined) is guaranteed to be ($\varepsilon$-)optimal; especially, these can be \emph{permissive} 
(non-deterministic),
enabling determinisation heuristics.
Moreover, we identify and remove \emph{irrelevant} states, i.e.\ where the controller's choice does not affect the obtained value, using model- and objective-specific information.
In contrast, for \dtcontrol, the user has to manually obtain a (non-permissive) controller from \storm for which \dtc then learns all decisions, including for irrelevant states.
We discuss the details in \cref{sec:step2}.

\para{Step~3: \stepThree.}
We employ our adaptation of \dtcontrol to learn a DT based on the reduced dataset.
Among other improvements, we extract model-specific predicates as potential decision nodes in the tree and employ specialised impurity calculations and pruning techniques.
We provide details in \cref{sec:step3}.

\begin{remark}
Most of our improvements in Steps~2 and 3 rely on the tight integration with the model checking pipeline;
e.g.,\ by automatically deriving algebraic predicates from the model description, which previously required expert guidance~\cite{dtcontrol2}.
Moreover, we can employ \enquote{aggressive} heuristics, namely~\cite{CAV15}, early-stopping or pruning, which in some cases might yield DTs that are not $\varepsilon$-optimal ($|\val^{\decisiontree}(\initstate) - \val(\initstate)| > \varepsilon$).
We check this using model checking and, if needed, relax the aggressive heuristics and repeat Step 3.
\end{remark}

\begin{remark}
	\dte{} employs the same pipeline for all objectives, with adjustments within each step.
	While most objectives, like reachability, safety, and total reward, can be achieved with a memoryless controller, this is not the case for LTL, where the controller is extended with a finite set of memory states $M$.
	Therefore, the DT maps from a state-memory pair, instead just the state, to an action, i.e.\ the action chosen in a state depends on the current memory state (see \cref{app:ltl} for details).
	Additionally, we output the underlying memory structure in form of an automaton.
Please note that while some works consider automata as an explainable structure, e.g. \cite{neider2016automaton, carr2021verifiable}, for instance the three state automata for Until formulae, this is disputable for more complex properties. There, semantic structure of automata states, e.g.~\cite{cav23}, could provide information for future approaches. %
\end{remark}

\section{Details of Step 2: \stepTwo}
\label{sec:step2}

Following the procedure of our tool, we first provide insights into the creation of the initial dataset, which is then reduced using our heuristics.

\subsection{Creation of the Initial Dataset/Controller}
When creating the initial dataset, we can either focus on a controller that is itself small and only contains one action per state, or we can try to start with a permissive controller.
For DT learning, a \emph{permissive} dataset, i.e.\ one that maps states to a \emph{set} of allowed actions, is often advantageous~\cite{DBLP:conf/hybrid/AshokJJKWZ20,dtcontrol2}.
When learning a DT \emph{consistent} with a permissive dataset ($\decisiontree(s) \in \mathcal{D}(s)$ for all $s$ where $\mathcal{D}$ is defined), it is often possible to terminate the DT construction earlier by using that actions are shared between states and determinizing by choosing one of them.
For example, consider the optimal controller $\controller$ for the MDP in \cref{fig:permissive} with $\controller(s)=l$ and $\controller(t)=r$.
DT construction would yield a 3-node tree when representing the controller exactly.
By additionally allowing action $r$ in $s$, we can obtain a simpler 1-node tree that always picks $r$.
Previously, the initial dataset was not permissive because tools like \storm export only deterministic optimal controllers.
While, as discussed, a more permissive dataset generally yields good results, it has the disadvantage that more states are potentially reachable and need to be considered while learning.
In particular, if $\uniforma \in \mathcal{D}(s)$ then all potential successors of $s$ need to be considered.
Additionally, adding actions to the permissive dataset requires thought to not break optimality.%
\begin{example}\label{ex:progress}
	One might be tempted to simply allow all \enquote{optimal} actions, i.e.\ those yielding optimal value, by $\mathcal{D}(s) = \{a \in \actions(s) \mid \val(s, a) = \val(s)\}$.
	Indeed, this is correct for, e.g., safety.
	However, for other objectives such as reachability and total reward, it is not:
	Observe that all states and actions in \cref{fig:permissive} have a value of 1.
	If both actions are allowed for $s$ and $t$, then the controller picking $r$ in $s$ and $l$ in $t$ is dataset-consistent %
	but never reaches the target, yielding value 0.
	Intuitively, this is because it makes no \enquote{progress}.
	Moreover, there is no unique inclusion-maximal permissive dataset; both $\{s \to \{l\}, t \to \{l, r\} \}$ and $\{s \to \{l, r\}, t \to \{r\}\}$ are optimal.
\end{example}
Thus, we introduce two methods for obtaining \emph{some} optimal permissive dataset.

\permissive:
Building on \cite{DBLP:conf/tacas/ChatterjeeQSWWZ25}, 
we allow all actions with optimal value that reduce the distance to the goal with non-zero probability.
In the above example, this yields $\mathcal{D} = \{s \to \{l\}, t \to \{r\}\}$.
While this cannot break symmetry when states have the same distance to the goal, %
it often adds actions.
For objectives not requiring progress, e.g.\ safety, \permissive{} adds all actions with optimal value.

\eqval:
We can further exploit states where all actions have the same value.
There, only making progress with some probability matters.
If all actions in a state have the same value, we add the uniform-choice label $\uniforma$. 
This allows the DT to \enquote{unify} states which otherwise have disjoint sets of allowed actions, as follows:
In the example, $\uniforma$ is added for both states, i.e.\ $\mathcal{D} = \{s \to \{l, \uniforma\}, t \to \{r, \uniforma\}\}$, and thus there is a consistent one-node DT, recommending $\uniforma$.

Note that explicitly adding $\uniforma$ is different from setting $\mathcal{D}(s) = \actions(s)$ or removing $s$ from $\mathcal{D}$:
In the latter two cases, the constructed DT $\decisiontree$ may produce any action, even one which potentially compromises optimality.
($\decisiontree(s)$ \emph{may} lead to uniform choice, but is not required to do so.)
In contrast, when adding $\uniforma$, the strategy resulting from the DT chooses an action uniformly at random, ensuring progress. %
For objectives not requiring progress, \eqval{} is unnecessary:
If all actions are optimal, \permissive{} adds all actions, and then we remove the state entirely by \allSafe\ (defined below).

Based on the heuristics named above, we propose three variants of obtaining an initial dataset, or, in other words, the initial controller.
\begin{description}
	\item[\datasetController] Firstly, we directly use the dataset given by the optimal controller, i.e.\ $\mathcal{D}(s) = \{\controller(a)\}$, without any permissive choices.
	\item[\datasetPermissive] Secondly, as a fully permissive approach, we use the dataset computed by \permissive{} and \eqval, giving the best chances for determinization, but potentially increasing the size of the reachable state space.
	\item[\datasetCombined] Lastly, %
	we restrict the \datasetPermissive\ dataset to the set of states $\states_R$ reached by the optimal controller from the initial states
	by removing all actions leaving $\states_R$, i.e.\ those with $\trans(s, a)(s') > 0$ for some $s' \notin \states_R$.%
\end{description}

\begin{figure}[t]
	\centering
	\begin{tikzpicture}[state/.append style={draw,rectangle,rounded corners=2pt,inner sep=1pt,minimum size=.75cm,thick}]
		\node[state] at (0,0) (s) {$s$};
		\node[state] at (3,0) (t) {$t$};
		\node[state] at (1.5,-1.1) (g) {\color{green}\faTools};
		\path[-{Stealth},thick]
		(s) edge[bend left=15] node[above] {$r$} (t)
		(t) edge[bend left=15] node[below] {$l$} (s)
		(s) edge node[anchor=north east] {$l$} (g)
		(t) edge node[anchor=north west] {$r$} (g)
		;
	\end{tikzpicture}
	\caption{
		MDP illustrating permissive strategies.
	} \label{fig:permissive}
\end{figure}

\begin{remark}
	Obtaining a dataset this way is fundamentally different from obtaining an $\varepsilon$-optimal controller and using that for DT-learning.
	Such a controller would assign one specific action to every single state, even if that choice is entirely inconsequential, leaving us with a much more restricted dataset.
	Moreover, we lose all \enquote{imprecision budget} for simplification -- as the starting controller may already be sub-optimal, any additional imprecision might compromise the overall guarantees.
\end{remark}

\subsection{Dataset Reduction}
After the computation of the initial dataset, we can identify states that can be removed entirely from the dataset, as choosing any actions in these states ensures optimality.
We list the methods we use to identify these states.
\smallskip

\allSafe:
If a dataset allows all actions in a state $s$, i.e.\ $\mathcal{D}(s) = \actions(s)$, then it can be removed.
Recall that if the DT suggests an action not available in $s$, then one of the available actions is chosen uniformly at random.
This in particular removes states with a single available action, i.e.\ $|\actions(s)| = 1$.

\agency~\cite{DBLP:conf/ijcai/CordobaJABSSPK23}: 
The agency of a state $s$ is the maximum error that can be incurred by playing adversarially from $s$, formally $\prob_{s}^{\max}[\lozenge \gStates] - \prob_{s}^{\min}[\lozenge \gStates]$. 
States with agency 0 can be removed without affecting the value of the controller.
Moreover, we can exploit the allowed imprecision: Removing states with agency less than $\varepsilon$ only reduces the controller's value by at most $\varepsilon$.

\dominance~\cite{DBLP:conf/papm/DArgenioJJL02}:
A state $s$ is \emph{dominated} by a state $s' \neq s$ if $s'$ is almost surely reached from $s$, %
i.e. $\prob_s^{\min}[\lozenge s'] = 1$.
Thus, $s$ can be removed, since $s'$ is reached regardless of the choice in $s$.
Slight adaptations are necessary for reward objectives.
The intuitive idea of \dominance{} is that on the way from $s$ to $s'$, choices are irrelevant, as no matter what we do, we always eventually end up in the same spot, namely $s'$.
In other words, we want all ways from $s$ to $s'$ to be the \enquote{same} to the objective.
Hence, for reward-based objectives, we say that $s'$ dominates $s$ if $s'$ is a.s.\ reached from $s$ and under all controllers, the same reward is obtained in between.
Formally, if $\states'$ are all the states that are dominated by $s'$ and we have two controllers $\controller$ and $\controller'$ which only differ on $\states'$, then we have $\val^{\controller}(s) = \val^{\controller}(s') + c = \val^{\controller'}(s') + c = \val^{\controller}(s)$.
To compute this set of states, we augment the classical algorithm for finding dominating states from \cite{DBLP:conf/papm/DArgenioJJL02} by additionally tracking whether a state can be reached by two different rewards.

\unreach: 
Since we only care about the value in the initial state $\initstate$, we let \unreach($\mathcal{D}$) denote all states from $\mathcal{D}$ unreachable under any controller consistent with $\mathcal{D}$ when starting from the initial state.
Removing these does not affect the achieved value.

\smallskip
We highlight that these four conditions \allSafe, \agency, \dominance, and \unreach\ are \emph{incomparable}, i.e.\ none subsumes another, see \cref{app:4-2-example-incomparable}, and are always safe:

\begin{restatable}{theorem}{reduceOK}\label{thm:reduction-safe}
	For every MDP and precision $\varepsilon$, every dataset $\mathcal{D}$ obtained using our initial dataset approaches and reduction procedure is $\varepsilon$-optimal, i.e.\ for every controller $\controller$ consistent with $\mathcal{D}$ we have $|\val(\initstate) - \val^{\controller}(\initstate)| \leq \varepsilon$.
\end{restatable}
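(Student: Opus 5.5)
The plan is to split the argument into two parts: correctness of the three initial datasets (all $0$-optimal), and an invariant showing that each reduction step preserves $\varepsilon'$-optimality while adding at most the error it explicitly spends. Only \agency{} spends budget, namely removing states with agency below $\varepsilon$. So the overall bound follows once we know the removals are accounted for correctly.

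\textbf{Initial datasets.}
\begin{itemize}
\item \datasetController{} is a singleton dataset given by a globally optimal controller, so it is trivially optimal.
\item For \datasetPermissive{}, every consistent controller $\controller$ picks in each state $s$ either an action $a$ with $\val(s,a)=\val(s)$, or $\uniforma$ in states where all actions have value $\val(s)$. In both cases the optimal value vector $\val$ is a fixpoint of the one-step operator induced by $\controller$. For safety, or for minimisation of reachability, this already implies $\val^\controller=\val$ by the standard least or greatest fixpoint characterisation.
\item For maximising reachability and for reward objectives, a fixpoint alone is not enough, as \cref{ex:progress} shows. Here I would use the progress condition from \permissive{}, which builds on \cite{DBLP:conf/tacas/ChatterjeeQSWWZ25}: every chosen action decreases the distance to $\gStates$ with positive probability. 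For $\uniforma$, this holds because some available action makes progress and is chosen with positive probability. Hence, from every state with positive value, the induced Markov chain has no bottom SCC avoiding $\gStates$ except value-$0$ ones. A fixpoint of the Bellman equation with no such end components is unique, so $\val^\controller=\val$.
\item \datasetCombined{} only removes actions, so its set of consistent controllers is a subset of those of \datasetPermissive{}. It remains nonempty, since the optimal controller's actions stay in $\states_R$.
\end{itemize}

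\textbf{Reductions.} I would argue by induction over the sequence of reductions, maintaining the invariant that every controller consistent with the current dataset is within $\varepsilon'$ of $\val(\initstate)$, where $\varepsilon'$ is the budget used so far.
\begin{itemize}
\item \allSafe{}: removing $s$ with $\mathcal{D}(s)=\actions(s)$ does not enlarge the behaviour set. A DT recommending an action outside $\actions(s)$ yields a uniform choice, a distribution over $\mathcal{D}(s)$. This is a randomised mixture, and since the invariant holds for all deterministic consistent choices, randomisation at $s$ is covered as well; here I would cite that values are attained at deterministic controllers, so mixtures lie within the range.
\item \unreach{}: states unreachable under all consistent controllers do not influence $\prob_\initstate^\controller$, so changing choices there leaves the value unchanged.
\item \dominance{}: if $\prob^{\min}_s[\lozenge s']=1$, every controller reaches $s'$ from $s$ almost surely, with the same accumulated reward for reward objectives. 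Thus $\val^\controller(s)=c+\val^\controller(s')$ independently of choices in the dominated region. A path-decomposition argument at the first visit of $s'$ then shows that modifying choices on dominated states leaves $\val^\controller(\initstate)$ unchanged.
\end{itemize}

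\textbf{Agency, the main obstacle.} Removing the set $S_\varepsilon$ of low-agency states is the hardest step. Removing a single such state costs at most its agency. With many such states, naively summing the errors could exceed $\varepsilon$. I would instead argue via the first entry into $S_\varepsilon$. Let $\controller'$ be an arbitrary controller on the reduced dataset and $\controller$ a consistent optimal one. The two agree until $S_\varepsilon$ is first hit, at some state $t$. From $t$ onward, any controller's value lies in $[\prob^{\min}_t,\prob^{\max}_t]$, an interval of width less than $\varepsilon$, and it contains $\val(t)$. Integrating over the first-hit distribution bounds the total loss by $\varepsilon$ times the probability of hitting $S_\varepsilon$, which is at most $\varepsilon$.

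I expect this step, together with the progress/uniqueness argument for maximising reachability, to carry most of the technical weight. Finally, I would check that interleaving agency removal with the other exact reductions does not compound the error, since each exact reduction preserves values exactly.
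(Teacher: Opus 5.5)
Your proposal is correct and follows the paper's skeleton: optimality of the three initial datasets, then the reductions. \agency{} is handled exactly by the paper's first-entry decomposition: split paths by the first removed low-agency state they hit, note that the probability of each such event is unchanged, and bound the error after entry by the agency. The differences are in the supporting arguments.

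For the uniform fallback at \allSafe{} states, the paper uses a local lemma (\cref{stm:convex_combination}): randomising among optimal actions at one state preserves optimality, proved by a case split on whether that state is revisited infinitely often. You instead view the dataset as a sub-MDP, with $\uniforma$ as an extra action that is also available outside the domain. Its deterministic memoryless controllers are exactly the consistent ones, and extremal values over all controllers of a finite MDP are attained by such controllers. This is shorter and settles the progress issue for free, since the worst deterministic choice at $s$ is itself consistent and hence already within budget. But it applies only when the randomisation is over allowed choices. That is why you need the separate progress argument for the $\uniforma$ added by \eqval{}, whereas the paper's lemma serves both.

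For \datasetPermissive{} you reprove what the paper cites. Your least-fixpoint argument replaces \cite[Lemma~8]{EKKW22}. Your ``no goal-avoiding BSCC of positive value, hence unique fixpoint'' argument is essentially \cite[Proposition~5]{DBLP:conf/tacas/ChatterjeeQSWWZ25} instantiated with the distance function.

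Finally, you use an invariant over all controllers consistent with the current dataset, which already permits arbitrary behaviour at previously removed states, and you treat \dominance{} as a separate exact step. This replaces the paper's joint \agency{}/\dominance{} lemma and its explicit check that no reduction lets a consistent controller escape into states removed by \unreach{}. It is cleaner, but it silently needs each criterion to hold for the dataset it is applied to. The criteria of \agency{}, \dominance{} and \allSafe{} are unaffected by earlier removals; that of \unreach{} is not. So you should apply \unreach{} first (as the paper does) or recompute it; otherwise you need the paper's escaping argument.
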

We provide an outline of the proof here and, the full proof can be found in \cref{app:4-2-reduction-safe}.

\begin{proof}[Proof Sketch]
	We first argue that any initial dataset only contains optimal controllers and then that all reductions only introduce an $\varepsilon$-error at most.
	For the initial datasets, \datasetController{} is correct by definition, \datasetPermissive{} follows from \cite{DBLP:conf/tacas/ChatterjeeQSWWZ25} (both \permissive{} and \eqval{}), in particular their Section~3.
	Finally, for \datasetCombined{}, observe that we never remove all actions, as it contains \datasetController{}.
	Moreover, it is a subset of \datasetPermissive{}.
	As any controller compatible with the latter is optimal, so is any compatible with the former.

	For the reductions, \unreach{} clearly does not change the value.
	For \dominance, correctness similarly follows directly by definition: No matter what we do, we always end up in the dominating state.
	Similarly, \agency{} says that no matter what we do, our overall value can only decrease by at most $\varepsilon$.
	Here, the only subtlety is to observe that this error does not compound.
	This however follows from the \enquote{global} definition of \agency.
	Finally, for \allSafe{}, note that as long as the DT outputs any action in the available actions, it is optimal by definition.
	If not, we uniformly choose among the available actions, which for all objectives yields the average value of the actions we randomize over, preserving optimality.
	Here, the only subtlety lies in objectives requiring progress.
	However, we make progress with non-zero probability, preserving the value.
\end{proof}

\section{Details of Step~3: \stepThree}\label{sec:step3}
Our tool provides three novelties regarding safe DT learning: 
Firstly, we automatically extract relevant predicates from the high-level description of the model in \texttt{PRISM} language.
Secondly, we improve the impurity calculation using a model-specific measure of importance.
Thirdly, we apply aggressive early stopping and pruning heuristics, which was previously impossible to do safely, as $\varepsilon$-optimality could not be guaranteed without re-validation.

\subsection{\predicates}
We offer axis-aligned and linear splits as already implemented in \dtcontrol.
This split divides the input space by choosing one variable and comparing it to a constant.
An example is $y \leq 0$ from the example in \cref{fig:example}.
Additionally, we propose a new set of predicates:
When the model is given in the \texttt{PRISM} modelling language~\cite{KNP11-prism}, we can access the expressions defining its dynamics.
Using the transition guards as predicates gives us a fully automated way of exploiting domain knowledge and obtaining predicates that are likely relevant for the behaviour of the MDP.
This improves on the semi-automatic approach in~\cite{dtcontrol2}, where templates for predicates were manually provided using domain knowledge.

\subsection{\impurity}\label{sec:step3_impurity}
As previously mentioned, the detailed analysis in~\cite{dtcontrol2} suggested that the best choice for \impurity measure is Shannon's entropy~\cite{DBLP:journals/bstj/Shannon48a}, which is defined as $H(D) = \prob(\pred(D))\log\prob(\pred(D)) + \prob(\lnot\pred(D))\log\prob(\lnot\pred(D))$.
Usually, all states are weighed uniformly when computing the probability; we denote impurity with uniform weights by \wuniform.
We complement the idea by further weighting the impact of states with a model-specific measure of importance. %
We provide four heuristics:

\smallskip

\wcav{} and \wsimulation:
\wcav{} denotes the idea of~\cite{CAV15}: simulate the system under the optimal controller several times and weigh each state with the number of times it is sampled.
As states may get weight 0, this importance heuristic implicitly reduces the dataset, potentially compromising $\varepsilon$-optimality.
Thus, our variant \wsimulation~assigns a positive weight of $10^{-6}$ to unsampled states to ensure $\varepsilon$-optimality.

\wagency:
We use the values computed by the reduction method \agency{} (maximum vs.\ minimum probability to reach the goal) as weights.
Removing states satisfying \agency{} is equivalent to assigning weight 0 in the impurity computation, so this importance heuristic generalises the \agency{} reduction. 
For other objectives, the weights are accordingly adjusted, e.g., the difference in maximum and minimum reward.

\wmessup{} (Cost of Error):
While \agency{} captures the error incurred if we would choose the worst action in \emph{all} states, \wmessup\ considers the impact of only selecting the worst action in \emph{one} state.
A precise computation can in the worst case require a model checking query for every state (since states can depend on themselves).
Thus, we approximate this by comparing the value of the worst and the best action of a state.
Observe that this heuristic generalises \allSafe, as the cost of error in states with equal value on all actions is 0.
To ensure $\varepsilon$-optimality, similar to \wsimulation, we add a small positive weight to all states.

\subsection{\earlystop}
The idea of stopping early is standard in machine learning, but has not been employed in the context of controller representation, since so far, optimality had to be preserved.
Exploiting the allowed imprecisions, we adopt this idea as follows.
When the current impurity is below a given threshold $\theta \geq 0$, we return a leaf node. 
This is an extension of the stopping condition of \dtcontrol (Line~1 of \cref{alg:DT-construct}) which requires all states to agree on an action. 
As the choice of label is no longer straightforward, we label the leaf with an action determined through weighted voting, described below.
We start with a threshold of $\theta=0.5$ and decrease it by 0.1 if the resulting controller is not $\varepsilon$-optimal; in the worst case (when the heuristic always increases the imprecision to more than $\varepsilon$), this results in a precise DT after 6 iterations.

\paragraph{Weighted Voting}
Our \earlystop and Pruning heuristics require to decide on an action for a node when not all states agree.
We employ the following weighted voting approach:
Each state $s$ has weight $w(s)$ according to the \impurity{} method we used for building the DT (see \cref{sec:step3_impurity}).
Then, denoting by $\mathcal{D}$ the given dataset and by $X$ the set of states in the training dataset that reach the node, we select ${\argmax}_{a\in A} {\sum}_{s\in X, a \in \mathcal{D}(s)} w(s)$, i.e.\ the action with the highest weight when summing up the weights of states for which this action is allowed by the dataset/controller. %

\subsection{Pruning}

\begin{algorithm}[t]
	\caption{Greedy Pruning for $\max$}\label{alg:pruning}
	\begin{algorithmic}[1]
		\Require Optimal value $v$, precision $\varepsilon$, $\varepsilon$-precise DT $\decisiontree$
		\Ensure Reduced $\varepsilon$-precise decision tree $\decisiontree'$
		\State $\decisiontree' \gets \decisiontree$
		\If{$v \leq \varepsilon$}
		\Return trivial tree
		\EndIf
		\While{\ltrue}
		\State $b \gets 0$
		\For{\text{Node} $n$ in $\decisiontree'$}
		\If{both children of $n$ are leaves}
		\State $\decisiontree_n \gets \Call{Collapse}{\decisiontree', n}$
		\State $v_n \gets \Call{CheckValue}{\decisiontree_n, \initstate}$
		\If{$b < v_n$} \Comment{Found a better pruned tree}
		\State $b \gets v_n$, $\decisiontree_b \gets \decisiontree_n$
		\EndIf
		\EndIf
		\EndFor
		\If{$v - b > \varepsilon$} \Comment{No pruned tree is $\varepsilon$-optimal}
		\State \Return $\decisiontree'$
		\EndIf
		\State $\decisiontree' \gets \decisiontree_b$ \Comment{$\decisiontree_b$ is the best performing prune}
		\EndWhile
	\end{algorithmic}
\end{algorithm}
As part of building a reduced DT, we also apply a greedy pruning approach (see \cref{alg:pruning}), meaning that we try to delete nodes that have little influence on the value of the controller.
In each iteration, we go over all nodes which have leaves as both their children.
For each of these nodes, we obtain a new DT by replacing that inner node with a leaf whose label is determined by weighted voting as for \earlystop.
Then we model-check each of these candidates.
If any of these DTs still satisfy the precision constraints, we choose the one that decreases the value the least and repeat the process with the new DT.
As this requires several model checking queries, the process is quite expensive.
Nonetheless, since our goal is to obtain DTs that are as small as possible, it can be worth investing the time.
We outline the algorithm for $\opt = \max$ in \cref{alg:pruning}; the variant for $\min$ is completely analogous.

Note that even when early stopping is enabled, pruning can still reduce the size of the tree.
Early stopping focuses on the quality of the prediction with respect to the given dataset, while pruning focuses on the learnt controller.
This means that some splits, while necessary to represent the dataset well, are in the end not really important to the controller.

We comment on two further related mechanisms:
Firstly, in \cite{DBLP:conf/qest/AshokKLCTW19}, a heuristic called \enquote{safe pruning} was presented.
However, this heuristic was exploiting the non-determinism present in the given controller, and could only remove nodes if the resulting controller was still exactly represented.
In contrast, we allow for the resulting controller to be less defined, potentially worsening its performance; then, we rely on the model checking to ensure $\varepsilon$-optimality.
Secondly, the core idea of \cite{DTNest} is to take the DT constructed by \dtcontrol\ and then prune it by looking for subtrees that can be reduced using the exhaustive search of~\cite{andriushchenko2022inductivesynthesisfinitestatecontrollers}.
In principle, this post-processing can also be applied to our output; however, the prototypical implementation of \cite{DTNest} does not allow for this.

\section{Evaluation}
\label{sec:ev}

Our evaluation focuses on the following questions:
\begin{description}
	\item[RQ1:] How do DT sizes compare between \dte and its competitors?
	\item[RQ2:] What is the impact of allowing imprecision $\varepsilon$?
	\item[RQ3:] How do dataset reduction, early stopping, and pruning influence DT size?
	\item[RQ4:] Can \dte handle large models in reasonable time?
\end{description}
We additionally explain some obtained DTs in case studies. %
\cref{app:exp-add} contains full results, and, in particular, the impact of different methods for dataset construction and DT learning.

\subsection{Experimental Setup}

\para{Configurations.}
\dte has 12 different configurations, resulting from three initial datasets \datasetController, \datasetPermissive, and \datasetCombined, and four impurity measures \wsimulation, \wagency, \wmessup, \wuniform. %
We consider three variants of these configurations: 
\emph{our best}, i.e. running all 12 configurations and picking the smallest DT;
a \emph{portfolio} of three configurations to improve practical scalability (\datasetController+\wsimulation, \datasetPermissive+\wmessup, and \datasetCombined+\wagency);
and \datasetCombined+\wagency{} as example single configuration.

\para{Competitors}
are \dtcontrol{}~\cite{dtcontrol2},  \dtpaynt~\cite{dtpaynt}, \dtnest~\cite{DTNest}, and \enquote{CAV15}, our reimplementation of~\cite{CAV15} (the original code is unavailable).
\cref{app:exp-competitors} provides the exact versions and describes technical challenges.

\para{Benchmarks.} 
We select models from two sources:
first, all instances from the quantitative verification benchmark set \cite{qcomp-benchmarkset} which are in \texttt{PRISM} language and have between \num{40000} and \num{500000} states; 
for models with more than one supported property, we take the first of each type, e.g., only one for reachability and one for total reward.
Second, all instances used in~\cite{dtcontrol2} which are in \texttt{PRISM} language, have at least \num{100} states, and are not included (with some parametrisation) in our first selection.
Overall, we obtain 38 benchmarks: 34 from the first selection and 4 from the second.
While \dte supports other modelling languages, we focus on \texttt{PRISM} models, as these allow automatic extraction of predicates.
For LTL, there are unfortunately neither any established non-trivial (i.e. beyond a simple Until) MDP benchmarks \cite[Sec.~5]{DBLP:conf/toolympics/AndriushchenkoB23}, nor a competitor able to handle them (see \cref{app:ltl}).

\para{Quality Measure and Plots.}
We collect the runtime of the tools and the size (i.e.\ its number of nodes) of constructed DTs. 
We present our data as quantile plots, which are suitable for comparing several methods at once. 
A point $(x, y)$ for method X indicates that the $x$ best instances of X measured at most $y$ in the considered metric (size or time).

\para{Execution Environment.}
All executions for our benchmarks were run on a server with AMD EPYC 9274F CPUs inside a Docker container restricted to a single core and 16 GB of RAM.
The timeout for \dte{} is 5 minutes for model solving, 10 minutes for dataset creation and DT learning, and an additional 5 minutes for pruning (if applicable).
The competitors have a timeout of 20 minutes.

\subsection{Results}

\begin{figure*}[t]%
	\centering%
	\setlength{\quantileplotheight}{4cm}%
	\setlength{\quantileplotwidth}{0.4\textwidth}%
	\renewcommand{\quantileplotxlabel}{\empty}%
	\renewcommand{\quantileplotylabel}{\empty}%
	\begin{minipage}[b]{0.6\textwidth}\vspace{0cm}%
		\renewcommand{\quantileplotylabel}{Tree Size}%
		\renewcommand{\quantileplotxlabel}{Trees without Pruning}%
		\centering%
		\renewcommand{\quantileplotxlabel}{\shortstack{Number of Benchmarks Solved \\ Precision of $\varepsilon=10^{-6}$}}%
		\quantileplot{results/ablation-results/quantile_final_tree_nodes_with_dtnest.csv}
		{
			vanilla-0.000000/color4,
			dtnest-7-0/color1,
			cav15-0.000001/color6,
			ours-combined-agency-simple-0.000001/color2,
			portfolio-0.000001/color5,
			virtual-best-0.000001/color7
		}
		{
			\dtcontrol,
			\dtnest,
			CAV15,
			\datasetCombined+\wagency,
			Portfolio,
			Our Best
		}
		{1}{39}{1}{13000}{south west}{font=\small,at={(-0.2,0.5)},anchor=east,draw=none}%
	\end{minipage}%
	\begin{minipage}[b]{0.4\textwidth}\vspace{0cm}%
		\centering%
		\renewcommand{\quantileplotylabel}{Tree Size}
		\renewcommand{\quantileplotxlabel}{\shortstack{Number of Benchmarks Solved \\ Precision of $\varepsilon=10^{-2}$}}%
		\quantileplot{results/ablation-results/quantile_final_tree_nodes_with_dtnest.csv}
		{
			vanilla-0.000000/color4,
			dtnest-7-0/color1,
			cav15-0.010000/color6,
			ours-combined-agency-simple-0.010000/color2,
			portfolio-0.010000/color5,
			virtual-best-0.010000/color7
		}
		{}
		{1}{39}{1}{13000}{south west}{font=\small}%
	\end{minipage}%
	\caption{
		Quantile plot comparing the DT size of different configurations of \dte{} with \dtcontrol{}, \dtnest and CAV15 for $\varepsilon = 10^{-6}$ (left) and $\varepsilon = 10^{-2}$ (right).
		\dtcontrol and \dtnest have $\varepsilon = 0$ in both.
		The $y$-axis uses logarithmic scale. %
	}
	\label{fig:treesize}
\end{figure*}

\DeclareRobustCommand\legenddash[2]{\tikz[baseline=-2.75pt] \draw[line width=2pt,#1] (0,0) -- (0.3,0);#2}

\tikzset{
	dash1/.style={color1, dashed},
	dash2/.style={color2, dashed},
	dash3/.style={color3, dashed}
}

\begin{figure*}[t]
	\centering%
	\setlength{\quantileplotheight}{4cm}%
	\setlength{\quantileplotwidth}{0.5\textwidth}%
	\renewcommand{\quantileplotylabel}{Tree Size}%
	\renewcommand{\quantileplotxlabel}{Number of Benchmarks Solved}%
	\quantileplot{results/ablation-results/quantile_final_tree_nodes_with_dtnest.csv}
	{
		vanilla-0.000000/color4,
		virtual-best-0.000000/color1,
		virtual-best-0.000001/color2,
		virtual-best-0.010000/color3,
		dtnest-7-0/dash1,
		dtnest-7-1e-2/dash2,
		dtnest-7-1e-6/dash3
	}
	{
		\dtcontrol,
		\dte $\varepsilon{=}0$,
		\dte $\varepsilon{=}10^{-6}$,
		\dte $\varepsilon{=}10^{-2}$,
		\dtnest $\varepsilon{=}0$,
		\dtnest $\varepsilon{=}10^{-6}$,
		\dtnest $\varepsilon{=}10^{-2}$
	}
	{1}{39}{1}{3000}{south east}{font=\small,inner sep=.5pt,outer xsep=-2pt,nodes={scale=1,transform shape},at={(-0.2,0.5)},anchor=east,draw=none}%
	\caption{
		Comparison of \dtcontrol and \dtnest to our methods.
		The quantile plot compares the size of the final DT of \dtcontrol and \dtnest to the best DTs of \dte with different precision requirements.
		The $y$-axis uses logarithmic scale. %
	} \label{fig:dtc-evolution}
\end{figure*}

\para{RQ1: Comparison to Competitors.} 
\cref{fig:treesize} compares DT sizes for \dtcontrol, \dtnest, CAV15, and our three variants.
Our tool and CAV15 use an allowed imprecision~$\varepsilon$.
For \dtcontrol and \dtnest, we use $\varepsilon=0$ (i.e.\ same in both plots): The former cannot exploit it, the latter surprisingly performs worse for $\varepsilon>0$, see RQ2 below.
We omit \dtpaynt, which only solves 15 of 38 benchmarks, and produces larger DTs than \dte in 9 cases (see \cref{tab:tree-size} in \cref{app:subsec_tree_sizes}).
Our best DTs are always smaller than those of \dtcontrol, even by more than two orders of magnitude on 12 of 38 benchmarks.
Similarly, CAV15 is always outperformed by our best, and in only two cases beats the portfolio; also, CAV15 is aggressive and indeed sometimes produces DTs whose performance is suboptimal by more than $\varepsilon$.
For \dtnest{}, where both tools report solutions, our best DTs are always at least as small and more than an order of magnitude smaller in 10 cases (see \cref{fig:app_virtualbest-dtnest} in \cref{app:subsec_tree_sizes}).

Overall, \dte is a significant advancement of the state of the art, constructing very small DTs.
We highlight that when allowing an imprecision of $\varepsilon = 10^{-2}$, our tool returns DTs of explainable size (at most 15 nodes) in 30 of 38 benchmarks.
Moreover, we receive single-node DTs on 18 benchmarks (nearly half the benchmark set), identifying the previously unobserved fact that following a single action, and playing uniformly at random, when this action is not available, is $\varepsilon$-optimal in these benchmarks.

\para{RQ2: Effect of Precision.}
\cref{fig:dtc-evolution} shows DT sizes for \dte and \dtnest for several precisions $\varepsilon$ (including \dtcontrol for reference).
The improvement over \dtcontrol for $\varepsilon=0$ is due to the improved initial controller, safe heuristics and automatic predicate suggestions.
Allowing for imprecision always further reduces DT size.
For example, we obtain 15, 28, and 30 DTs of explainable size when $\varepsilon$ is $0$, $10^{-6}$ and $10^{-2}$, respectively.
In contrast, \dtnest can yield larger DTs for increasing $\varepsilon$ and mostly fails due to internal errors when the allowed imprecision is increased.

\begin{table*}
	\caption{
		Aggregated dataset size statistics (geometric mean, min, and max over all 38 benchmarks) for the three initial datasets \datasetPermissive, \datasetCombined, and \datasetController. The table shows the model size, and divides the resulting dataset size after all reduction heuristics of \stepTwo for varying precision~$\varepsilon$, the number of states removed by \agency for varying~$\varepsilon$, and the number of states removed by \allSafe, \dominance, and \unreach (which do not depend on~$\varepsilon$) by the model size.
	}\label{tab:state-statistics-agg}
	\centering
	\footnotesize
	\begin{tabular}{ll rrr rrr rrr}
		\toprule
		& & \multicolumn{3}{c}{Final Dataset Size} & \multicolumn{3}{c}{\agency} & & & \\
		\cmidrule(lr){3-5}\cmidrule(lr){6-8}
		Dataset & Stat & \multicolumn{1}{c}{$\varepsilon{=}0$} & \multicolumn{1}{c}{$10^{-6}$} & \multicolumn{1}{c}{$10^{-2}$} & \multicolumn{1}{c}{$0$} & \multicolumn{1}{c}{$10^{-6}$} & \multicolumn{1}{c}{$10^{-2}$} & \multicolumn{1}{c}{\allSafe} & \multicolumn{1}{c}{\dominance} & \multicolumn{1}{c}{\unreach} \\
		\midrule
		\begin{filecontents*}{results/ablation-results/state_statistics_agg.csv}
dataset,stat,Model Size,final-dataset-size-0.000000,final-dataset-size-0.000001,final-dataset-size-0.010000,agency-reduction-0.000000,agency-reduction-0.000001,agency-reduction-0.010000,all-actions-safe-states,dominated-states-reduction,reachability-reduction
\datasetPermissive,Mean,76782.6,0.45\%,0.78\%,0.75\%,8.08\%,8.65\%,8.65\%,62.84\%,29.88\%,38.39\%
\datasetPermissive,Min,235.0,0\%,0\%,0\%,0\%,0\%,0\%,1.98\%,0.98\%,0\%
\datasetPermissive,Max,530965.0,9.17\%,33.14\%,33.14\%,100.0\%,100.0\%,100.0\%,100.0\%,99.09\%,100.0\%
\datasetCombined,Mean,76782.6,0.13\%,0.14\%,0.13\%,8.08\%,8.65\%,8.65\%,62.84\%,29.88\%,62.94\%
\datasetCombined,Min,235.0,0\%,0\%,0\%,0\%,0\%,0\%,1.98\%,0.98\%,0.33\%
\datasetCombined,Max,530965.0,5.09\%,13.32\%,13.32\%,100.0\%,100.0\%,100.0\%,100.0\%,99.09\%,100.0\%
\datasetController,Mean,76782.6,0.37\%,0.39\%,0.39\%,8.08\%,8.65\%,8.65\%,30.82\%,29.88\%,58.9\%
\datasetController,Min,235.0,0\%,0\%,0\%,0\%,0\%,0\%,0.98\%,0.98\%,0.33\%
\datasetController,Max,530965.0,6.02\%,16.15\%,16.15\%,100.0\%,100.0\%,100.0\%,99.9\%,99.09\%,99.98\%
\end{filecontents*}
\csvreader[
		late after line={\\\ifnumequal{\thecsvrow}{4}{\midrule}{}\ifnumequal{\thecsvrow}{7}{\midrule}{}},
		late after last line=\\,]
		{results/ablation-results/state_statistics_agg.csv}{}
		{\compStateStatisticsAgg}
	\end{tabular}
\end{table*}

\begin{table*}
	\caption{
		Ratio of the final DT size obtained with a heuristic disabled to the DT size of the full pipeline, aggregated (geometric mean, min, max) over all 38 benchmarks and configurations for the three initial datasets and varying precision~$\varepsilon$.
	}\label{tab:ablation-dtsize}
	\centering
	\footnotesize
	\begin{tabular}{ll S[table-format=3.2,round-mode=places,round-precision=2] S[table-format=3.2,round-mode=places,round-precision=2] S[table-format=3.2,round-mode=places,round-precision=2] S[table-format=3.2,round-mode=places,round-precision=2] S[table-format=3.2,round-mode=places,round-precision=2] S[table-format=3.2,round-mode=places,round-precision=2] S[table-format=3.2,round-mode=places,round-precision=2] S[table-format=3.2,round-mode=places,round-precision=2] S[table-format=3.2,round-mode=places,round-precision=2]}
		\toprule
		& & \multicolumn{3}{c}{Disable Dataset Reduction} & \multicolumn{3}{c}{No Early Stopping} & \multicolumn{3}{c}{No Dataset Reduction or Early Stopping} \\
		\cmidrule(lr){3-5}\cmidrule(lr){6-8}\cmidrule(lr){9-11}
		Dataset & Stat & \multicolumn{1}{c}{$\varepsilon{=}0$} & \multicolumn{1}{c}{$10^{-6}$} & \multicolumn{1}{c}{$10^{-2}$} & \multicolumn{1}{c}{$0$} & \multicolumn{1}{c}{$10^{-6}$} & \multicolumn{1}{c}{$10^{-2}$} & \multicolumn{1}{c}{$0$} & \multicolumn{1}{c}{$10^{-6}$} & \multicolumn{1}{c}{$10^{-2}$} \\
		\midrule
		\begin{filecontents*}{results/ablation-analysis/ablation_size_summary.csv}
dataset,stat,nored-0.000000,nored-0.000001,nored-0.010000,nostop-0.000000,nostop-0.000001,nostop-0.010000,bothoff-0.000000,bothoff-0.000001,bothoff-0.010000
\datasetPermissive,Mean,3.35093886493384,2.529845502704834,2.1194800133993104,1.1427191017996157,1.2883912542232943,1.562795632001884,4.999495064328378,4.416429509769797,5.3266115451315805
\datasetPermissive,Min,0.09090909090909091,0.004589495155532891,0.004589495155532891,1.0,0.8947368421052632,0.7522123893805309,1.2857142857142858,1.0561199806482826,0.8592750533049041
\datasetPermissive,Max,221.0,203.0,219.0,16.68421052631579,47.0,49.0,209.0,211.0,221.0
\datasetCombined,Mean,2.6122651429492314,2.2953466530285067,2.17646345692096,1.0918622772901017,1.2510562544784838,1.6761722512411477,3.7419891767344127,3.876913649951073,5.1640348473546105
\datasetCombined,Min,0.6470588235294118,0.0989010989010989,0.0989010989010989,0.9098786828422877,0.922165820642978,0.4813979706877114,1.2701688555347093,1.0144404332129964,0.909433962264151
\datasetCombined,Max,209.0,209.0,209.0,9.0,13.31578947368421,40.142857142857146,269.0,239.0,233.0
\datasetController,Mean,2.7505345188646633,2.3882593784396775,2.187230449487921,1.0842611948137588,1.2577081122040945,1.6514439616245895,4.021637626093312,4.1636760407091735,5.599890319490528
\datasetController,Min,0.6470588235294118,0.6470588235294118,0.14754098360655737,0.8995433789954338,0.9305301645338209,0.5285913528591353,0.978021978021978,0.9079497907949791,0.9079497907949791
\datasetController,Max,725.0,725.0,725.0,9.0,28.142857142857142,54.714285714285715,773.0,773.0,773.0
\end{filecontents*}
\csvreader[
		late after line={\\\ifnumequal{\thecsvrow}{4}{\midrule}{}\ifnumequal{\thecsvrow}{7}{\midrule}{}},
		late after last line=\\,]
		{results/ablation-analysis/ablation_size_summary.csv}{}
		{\compAblationDtSize}
	\end{tabular}
\end{table*}

\begin{table}
	\caption{
		Effect of \stepFour within the full pipeline. The table reports the ratio of the DT size after pruning to the DT size before pruning, aggregated (geometric mean, min, max) over all 38 benchmarks and configurations for the three initial datasets and varying precision~$\varepsilon$. \emph{All} considers every instance; \emph{Attempted} restricts to instances where pruning changed the tree.
	}\label{tab:baseline-pruning}
	\centering
	\footnotesize
	\begin{tabular}{l l l S[table-format=1.2,round-mode=places,round-precision=2] S[table-format=1.2,round-mode=places,round-precision=2] S[table-format=1.2,round-mode=places,round-precision=2]}
		\toprule
		Dataset & Condition & Stat & \multicolumn{1}{c}{$\varepsilon{=}0$} & \multicolumn{1}{c}{$10^{-6}$} & \multicolumn{1}{c}{$10^{-2}$} \\
		\midrule
		\begin{filecontents*}{results/ablation-analysis/baseline_pruning_summary.csv}
dataset,condition,stat,0.000000,0.000001,0.010000
\datasetPermissive,All,Mean,0.6963737276739465,0.6808940119926633,0.4238686507274941
\datasetPermissive,All,Min,0.030303030303030304,0.030303030303030304,0.02040816326530612
\datasetPermissive,All,Max,1.0,1.0,1.0
\datasetPermissive,Attempted,Mean,0.3809895010370824,0.36521887248568485,0.21547746094904283
\datasetPermissive,Attempted,Min,0.030303030303030304,0.030303030303030304,0.02040816326530612
\datasetPermissive,Attempted,Max,0.9047619047619048,0.9047619047619048,0.8947368421052632
\datasetCombined,All,Mean,0.7213681678706343,0.6901640032799317,0.5203945949679097
\datasetCombined,All,Min,0.05263157894736842,0.05263157894736842,0.02127659574468085
\datasetCombined,All,Max,1.0,1.0,1.0
\datasetCombined,Attempted,Mean,0.39143163839465683,0.3783937631002024,0.23223107118421457
\datasetCombined,Attempted,Min,0.05263157894736842,0.05263157894736842,0.02127659574468085
\datasetCombined,Attempted,Max,0.9310344827586207,0.9428571428571428,0.8666666666666667
\datasetController,All,Mean,0.8360593419602071,0.7703741005828497,0.5776665815836113
\datasetController,All,Min,0.047619047619047616,0.047619047619047616,0.02127659574468085
\datasetController,All,Max,1.0,1.0,1.0
\datasetController,Attempted,Mean,0.5008115456989001,0.43774577864989284,0.2604502965908716
\datasetController,Attempted,Min,0.047619047619047616,0.047619047619047616,0.02127659574468085
\datasetController,Attempted,Max,0.9310344827586207,0.9393939393939394,0.9259259259259259
\end{filecontents*}
\csvreader[
		late after line={\\\ifnumequal{\thecsvrow}{7}{\midrule}{}\ifnumequal{\thecsvrow}{13}{\midrule}{}},
		late after last line=\\,]
		{results/ablation-analysis/baseline_pruning_summary.csv}{}
		{\compBaselinePruning}
	\end{tabular}
\end{table}

\para{RQ3: Influence of Different Heuristics.}
We first investigate the effect of the different dataset reduction heuristics in \stepTwo. %
\cref{tab:state-statistics-agg} reports the effect of the dataset reduction heuristics on the size of the dataset created in \stepTwo. 
To obtain comparable values, we divide the dataset size by the number of states in the model, which corresponds to the dataset size for \dtc, and then compute the geometric mean, the minimum and the maximum over all benchmarks and configurations.
We find that the final dataset is always significantly smaller than the number of states in the model.
However, when comparing the different values for $\varepsilon$, we observed an increased geometric mean from $\varepsilon=0$ to $\varepsilon=10^{-6}$.
This is caused by benchmarks that resulted in numerical instability and could not be treated for the former.
We further find that all reduction heuristics can help to decrease the dataset size.

To quantify the contribution of individual parts of our pipeline, we additionally rerun all 38 benchmarks with the reduction heuristics of \stepTwo (\allSafe, \agency, \dominance) disabled, with \earlystop disabled, and with both disabled, and compare against the full pipeline.
However, we always enable \unreach, as this heuristic is straight forward and often greatly reduces the dataset on its own.
\cref{tab:ablation-dtsize} reports the resulting effect on the pre-pruning DT size, as the ratio of the DT size obtained with a heuristic disabled to the DT size of the full pipeline. 
Values above~$1$ indicate that disabling the heuristic increases DT size.

On average, dataset reduction has the largest influence on \datasetPermissive, but disabling it on average more than doubles the DT size.
However, we also observe the fact, that the algorithm constructing the DT is only a heuristic, as the size may increase, even though the dataset size decreases through the reduction.
When considering early stopping, we also observe that tree size increases on average, but not as drastically as for dataset reduction.
The decrease of tree size observed in the row reporting the minima is caused by the simulation weight assignment and its inherent randomness.
Without both earl stopping and dataset reduction, the DT sizes increase on average by at least 4. 
Overall, the data shows that early stopping and dataset reduction are helpful to decrease DT size.

\cref{tab:baseline-pruning} reports the effect of pruning within the full pipeline, as the ratio of the DT size after pruning to the DT size before pruning, aggregated over all benchmarks and configurations (\emph{All}) and restricted to only those where pruning was attempted, which is the case for DTs with less than 50 nodes (\emph{Attempted}).
On average, pruning reduces DT size by $16$--$58\%$ across all instances, and by $50$--$79\%$ when restricting to instances where it was applied, with the effect increasing for larger~$\varepsilon$ in both cases.

\para{RQ4: Scalability.}
\cref{fig:treesize} shows that \dte{} solved all but one benchmark within the time limit of 20 minutes.
As our focus is on obtaining small explainable controllers, runtime is only important so long as the whole pipeline remains feasible.
Nevertheless, we include full results in \cref{tab:runtime-full,tab:runtime-dtnest_full} in \cref{app:exp-times}.
We remark that the runtime of \dtnest is similar, also handling all but one benchmark within the time limit, while \dtcontrol and \dtpaynt often time out.
Comparing the runtime of our methods to the size of the model, we observe no clear correlation, showing again that the structure is more important than size, cf.~\cite{pet2}. %
Overall, our methods are feasible even on large and complex models. %

\subsection{Case Studies}
\label{sec:case-studies}

We illustrate three qualitatively different outcomes that \dte{} can produce, each revealing a structural insight that is invisible to \dtcontrol{}.
Recall that in states not covered by the DT actions are chosen randomly.

\para{Pnueli-Zuck randomized mutual exclusion ($n{=}5$)~\cite{qcomp-benchmarkset,pnueli1984verification}}
models $n$ processes competing for a shared resource.
The non-determinism arises from the scheduler choosing which process to activate and from internal branching within each process.
The property maximises the probability that process~1 eventually reaches state $p_1 = 10$ on its way to the critical section.
The DT states, that, whenever $p_1=1$, it should be updated to $p_1=2$.
This  sends it along a path that reaches $p_1=10$ with probability~$1$.
Consequently, the optimal value is~$1$ in every reachable state, making all scheduling and branching choices simultaneously optimal.
\dtcontrol{} is unaware of this and encodes a decision for every state, yielding \num{171373} nodes.
\dte{} reduces this to a \emph{single-node} DT, showing that following a simple rule satisfies the property. %

\begin{figure}
		\centering
		\begin{tikzpicture}[baseline=(n0.center),
			node/.style={draw,ellipse,inner sep=1pt,minimum height=.6cm,font=\small},
			leaf/.style={draw,rectangle,rounded corners,minimum height=.6cm,minimum width=.6cm,font=\small},
			tedge/.style={draw},
			fedge/.style={draw,dashed},
			auto,yscale=0.9
			]
			\node[node] at (0,0) (n0) {$s_1 \leq 1$};
			\node[node] at (-1,-1) (n1) {$b \leq 1$};
			\node[leaf] at (-1.75,-2) (l1) {\emph{end1}};
			\node[leaf] at (-0.25,-2) (l2) {\emph{cd}};
			\node[leaf] at (1,-1) (l3) {\emph{end2}};
			\path[->]
			(n0) edge[tedge,swap] node[inner sep=.5pt,font=\scriptsize] {Y} (n1)
			(n0) edge[fedge] node[inner sep=.5pt,font=\scriptsize] {N} (l3)
			(n1) edge[tedge,swap] node[inner sep=.5pt,font=\scriptsize] {Y} (l1)
			(n1) edge[fedge] node[inner sep=.5pt,font=\scriptsize] {N} (l2)
			;
		\end{tikzpicture}\hspace{1cm}
		\begin{tikzpicture}[baseline=(n0.center),
			node/.style={draw,ellipse,inner sep=1pt,minimum height=.6cm,font=\small},
			leaf/.style={draw,rectangle,rounded corners,minimum height=.6cm,minimum width=.6cm,font=\small},
			tedge/.style={draw},
			fedge/.style={draw,dashed},
			auto,yscale=0.9
			]
			\node[node] at (0,0) (n0) {$s_1 \leq 1$};
			\node[leaf] at (-0.75,-1) (l1) {\emph{end1}};
			\node[leaf] at (0.75,-1) (l2) {\emph{end2}};
			\path[->]
			(n0) edge[tedge,swap] node[inner sep=.5pt,font=\scriptsize] {Y} (l1)
			(n0) edge[fedge] node[inner sep=.5pt,font=\scriptsize] {N} (l2)
			;
		\end{tikzpicture}\\[2pt]
	\caption{CSMA/CD ($N{=}2$, $K{=}6$) DTs produced by \dte{}. Left:  $\varepsilon = 10^{-6}$, right: $\varepsilon = 10^{-2}$.}
	\label{fig:csma-dts}
\end{figure}

\para{CSMA time minimisation ($N{=}2$, $K{=}6$)~\cite{qcomp-benchmarkset}}
 models $N{=}2$ stations competing for a shared bus with exponential backoff up to $K{=}6$ collision rounds.
Each station checks whether the bus is free, and if so, sends its message. 
Otherwise, it waits for a random amount of time.
The property $\mathsf{time\_min}$ minimises the expected time until stations have delivered their messages.
\dtcontrol{} yields a \num{125}-node DT and \dtnest{} a $25$-node DT, while \dte{} already produces a \emph{$5$-node} DT at $\varepsilon = 10^{-6}$ (see \cref{fig:csma-dts}).
The DT encodes the following rule: if station~1 is in its initial or transmitting phase ($s_1 \leq 1$), let it finish if the bus is collision-free ($b \leq 1$, action \emph{end1}) or trigger collision detection (\emph{cd}) otherwise.
If station~1 is backing off or done, schedule station~2 (\emph{end2}).
At $\varepsilon = 10^{-2}$, the DT collapses to \emph{$3$ nodes}: the collision-handling branch disappears and the rule reduces to choosing \emph{end1} or \emph{end2}.

\para{Israeli-Jalfon self-stabilisation ($n{=}10$) \cite{qcomp-benchmarkset,israeli1990token}}
is a self-stabilising algorithm with $n$ processes.
In the beginning, several processes have tokens. %
The scheduler chooses which process to activate in each step.
However, it may only activate processes with tokens.
After completion, the process randomly moves its token to its left or right neighbour, who deletes the token, if it already has one.
The considered property states that there should finally be exactly one token.
$\dtcontrol{}$ produces a \num{1317}-node DT and \dtnest{} a \num{719}-node DT.
\dte{} discovers a structural property of the protocol: the probability of stabilisation is 1 for any scheduling strategy, so all scheduling choices are simultaneously optimal.
Therefore, \stepTwo{} produces an empty dataset.

\section{Conclusion and Future Work} \label{sec:conclusion}

\dte combines the maturity of \dtcontrol with the ability to trade the optimality of controllers for their compactness and explainability.
Our methods for exploiting model knowledge and allowed imprecision $\varepsilon$ enable our tool to produce controllers significantly smaller than the state of the art.
Future work includes coupling our tool with \texttt{SWITSS}~\cite{SWITSS} or \texttt{PET}~\cite{pet2} to use witnessing subsystems or cores~\cite{KM20-cores} for relevancy heuristics, as well as with \dtpaynt~\cite{dtpaynt} in the sense of \cite{DTNest} for advanced post-processing.
Towards explainable controllers for LTL, we want to extend the API of \stormpy{} and allow integration of semantic LTL-to-automaton translations as provided by e.g.\ \texttt{Owl} \cite{DBLP:conf/atva/KretinskyMS18}.
\medskip

\section*{Data Availability Statement}
Our tool is continually developed at \url{https://gitlab.com/live-lab/software/dtcontrol-epsilon}.
Further, the version of the tool used for the experiments, together with all benchmarks and scripts to reproduce our results, is included in our artefact.

\section*{Acknowledgements}
This research has received funding from the European Union under Grant Agreement No. 101171844, European Research Council (ERC) project Intelligence-Oriented Verification\&Controller Synthesis (InOVationCS) and from the ERC Starting Grant 101077178 (DEUCE).  Views and opinions expressed are however those of the author(s) only and do not necessarily reflect those of the European Union or of the granting authority. Neither the European Union nor the granting authority can be held responsible for them.
This research has also received funding from the MUNI Award in Science and Humanities MUNI/I/1757/2021 of the Grant Agency of Masaryk University.

\FloatBarrier
\newpage
\bibliographystyle{splncs04}
\bibliography{references}

\newpage
\appendix
\crefalias{section}{appendix} %
\crefalias{subsection}{appendix} %
\crefalias{subsubsection}{appendix}

\subsection{Linear Temporal Logic and Decision Trees} \label{app:ltl}

We extend our approach to also represent ($\varepsilon$-optimal) controllers for LTL queries.
The major challenge is that LTL requires controllers with finite memory, such as the states of an $\omega$-automaton for the LTL formula, see, e.g.,~\cite[Chapter 10.6.4]{BK08}.
In essence, instead of simply yielding an action in each state, finite state controllers have a set of memory states $M$, and produce for each state-memory pair $(s, m)$ an action $a$, i.e.\ the action chosen in a state depends on the current memory state.
Additionally, we output the underlying memory structure in form of an automaton.
Using tight coupling with \storm, \dte captures this information by extending the state with an additional variable that represents the current memory state.
Thus, the learned DT is a mapping from MDP states and memory states to actions.
We highlight that other tools like \dtpaynt or \dtnest could theoretically represent such a mapping, too; however, \storm currently does not offer the option to export the mapping in this form, whereas our tool explicitly constructs it.
Thus, \dte is the only tool with LTL support.

Still, due to technical limitations, the interpretability of DTs obtained this way is somewhat questionable.
In particular, the memory states we can obtain through \stormpy{} are opaque numbers without any further context.
For example, the state representing \enquote{waiting for a request} may have the number 1, while \enquote{request received, preparing for send} may have the number 2, and so on.
As such, a predicate of the form \enquote{$\text{automaton state} \leq 2$} has no connection to the LTL formula and thus hardly any interpretable meaning.
In particular, the numbering we obtain is arbitrary and may change from run to run.
To tackle this issue, a connection to \emph{semantic} automaton constructions (see \cite{DBLP:journals/jacm/EsparzaKS20}) implemented by tools such as \texttt{Rabinizer}~\cite{DBLP:conf/cav/KretinskyMSZ18} or \texttt{Owl}~\cite{DBLP:conf/atva/KretinskyMS18} would be required to obtain interpretable trees, which \storm{} does not (yet) support.

We successfully evaluated our tool on several hand-crafted instances, confirming that \dte{} can indeed construct DTs for strategies including memory.
We omit LTL properties from our experimental evaluation because there are no established true LTL benchmarks on MDPs (i.e.\ more than constrained reachability), see~\cite[Section~5]{DBLP:conf/toolympics/AndriushchenkoB23}.

\subsection{Extended Example Decision Tree} \label{app:dt_example_figures}

\cref{fig:dt_example_vanilla_reduced} shows the complete and reduced DT for the example in \cref{sec:overview_example}.

\begin{figure*}[t]
	\centering
	\begin{minipage}[t]{.49\textwidth}
		\centering\vspace{0pt}%
		\includegraphics[width=.9\textwidth,height=.8\textheight,keepaspectratio]{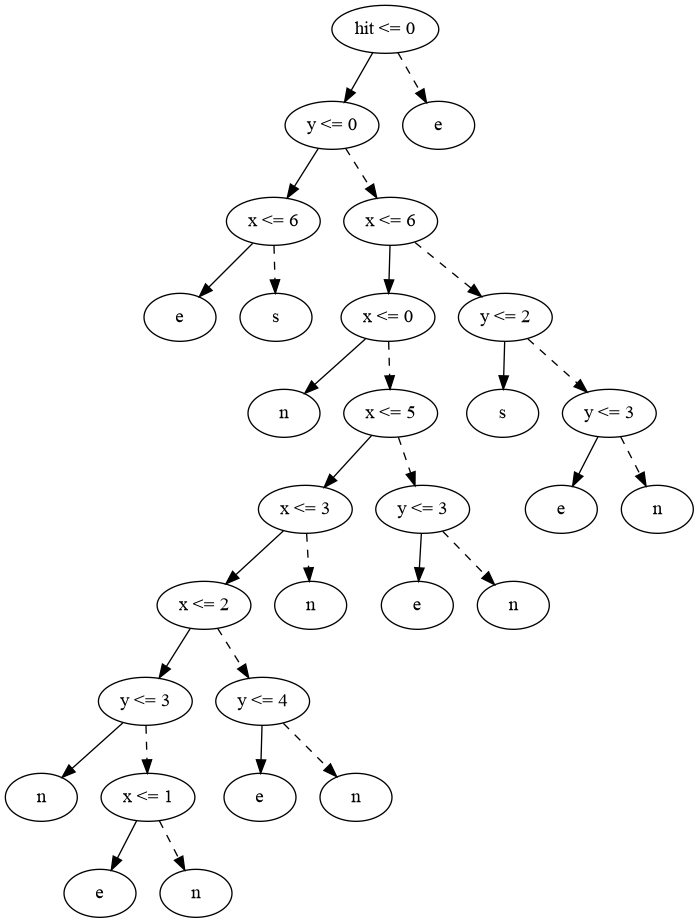}
	\end{minipage}%
	\begin{minipage}[t]{.49\textwidth}
		\centering\vspace{0pt}%
		\includegraphics[width=.9\textwidth,height=.8\textheight,keepaspectratio]{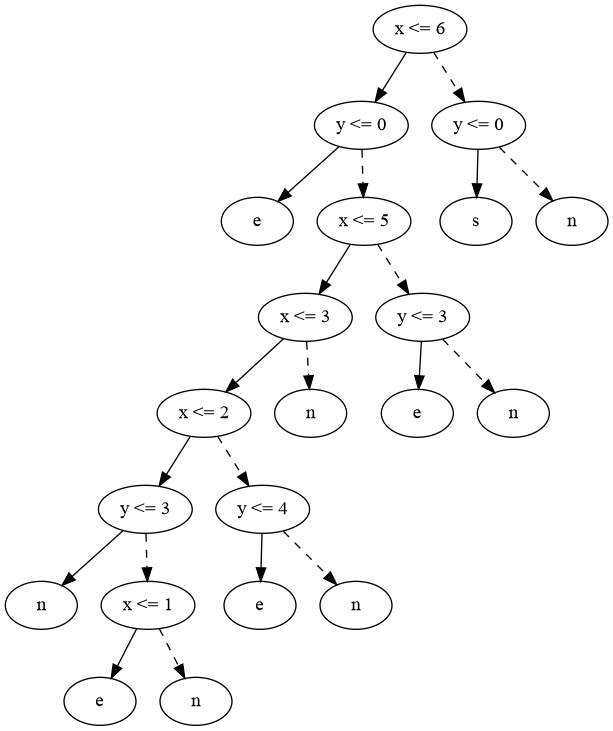}
	\end{minipage}
	\caption{
		Complete DT (left) and reduced DT (right) learnt for the optimal controller of the example in \cref{sec:overview_example}.
	} \label{fig:dt_example_vanilla_reduced}
\end{figure*}

\subsection{Incomparability of the Relevance Heuristics}\label{app:4-2-example-incomparable}

For \unreach($\mathcal{D}$), observe that whether a state is reachable or not under one particular controller is orthogonal to how many actions it has, whether it is dominated, whether its agency value is zero, or what the value of its actions are.
For example, suppose we have two copies of an MDP and, in the initial state, we choose between them. Then one of the two copies will always be unreachable, yet whichever states satisfy one condition also satisfy it in the other copy.
In particular, observe that all other methods do not depend on the controller or initial state.

For \allSafe{}, consider the following example.
Suppose in state $s$ the single action goes to a state $t$ or sink with probability 0.5 each.
In $t$, we have two choices: one leads to the goal and the other to the sink.
Here, $s$ is not dominated (no other state is reached with probability 1 under every controller), and its agency equals $0.5 > 0$.
However, as there is only one available action, all available actions are optimal, and the state can be removed.

When comparing \agency{} and \dominance{}, note that in a dominated state it is indeed relevant how the controller acts once the dominating (a.k.a. essential) state is reached.
In contrast, \agency{} prescribes that \emph{all} subsequent actions do not matter.
For example, consider state $s$ with a single action leading to state $t$, which in turn can choose between moving to the goal or a sink.
The state $s$ is dominated by $t$, but the agency of $s$ equals 1.
For the other way, consider a state with a single action that leads to a goal and sink each with probability 0.5.
This state has no agency but is not dominated.

We also note that \allSafe, \agency, and \dominance{} all imply \eqval{}. %
The former provides stronger guarantees, as these allow us to completely remove a state from the dataset instead of only adding the uniform action. %
However, \eqval{} indeed is a strictly weaker criterion, as exemplified in \cref{fig:permissive}:
The controller in which both states allow all actions is not optimal (since one can loop back and forth without reaching the goal), but the controller in which both play uniformly is optimal.

\subsection{Proof of Theorem \ref{thm:reduction-safe}}\label{app:4-2-reduction-safe}

For simplicity, we assume that, for all goal-based objectives (reachability, safety, and reachability cost), the goal/sink states are absorbing.
(Note that most model checkers, in particular \storm, perform this transformation automatically.) %
We formulate the proofs for reachability/safety, then outline the adaptation to reward-based objectives.
Before we prove the overall correctness, we introduce some useful lemmas.

\begin{lemma} \label{stm:convex_combination}
	Let $\MDP = (\states,\actions,\initstate,\trans)$ be an MDP with reachability or safety objective $\gStates$, $s \in \states$ a state, and $\controller$ an $\varepsilon$-optimal controller.
	Let $A^\varepsilon \subseteq \actions(s)$ be all $\varepsilon$-optimal actions, i.e.\ $|V(s, a) - V(s)| \leq \varepsilon$ for all $a\in A^\varepsilon$.
	Then, randomly choosing among $A^\varepsilon$ is $\varepsilon$-optimal, i.e.\ replacing the choice of $\controller$ in $s$ with randomising over $A^\varepsilon$ in any way yields an $\varepsilon$-optimal controller.
\end{lemma}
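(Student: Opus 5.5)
The plan is to separate a purely local step in $s$ from a global propagation step. The main obstacle is the second step, because of paths that return to $s$.

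\textbf{Local step.} Fix any distribution $p$ over $A^\varepsilon$. The one-step value of randomising in $s$, assuming optimal play afterwards, is
\[
\sum_{a \in A^\varepsilon} p(a)\, \val(s,a) = \sum_{a\in A^\varepsilon} p(a) \sum_{s'} \trans(s,a,s')\, \val(s').
\]
This expression is linear in $p$, so it is a convex combination of the numbers $\val(s,a)$ with $a \in A^\varepsilon$. Each of these lies in the interval $[\val(s)-\varepsilon, \val(s)+\varepsilon]$, and that interval is convex. Hence the randomised choice is itself an $\varepsilon$-optimal "action" in the sense of the hypothesis. This reduces the claim about an arbitrary randomisation to the claim for a single $\varepsilon$-optimal action, with the randomised distribution treated as one virtual action whose transition function is the $p$-mixture of the $\trans(s,a)$.

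\textbf{Global step.} Let $\controller'$ be $\controller$ with its choice in $s$ replaced by the mixture. I would write $\val^{\controller'}$ as the least fixed point (for reachability) or greatest fixed point (for safety) of the Bellman operator $\mathcal{B}^{\controller'}$ of the Markov chain induced by $\controller'$. This uses the paper's convention that goal and sink states are absorbing. The operator is monotone, so it suffices to compare iterates.

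I would decompose runs of $\controller'$ from $\initstate$ according to whether and when they first visit $s$. On runs that never visit $s$, $\controller'$ and $\controller$ agree, so their contribution is unchanged. On runs that do visit $s$, the contribution is the probability of reaching $s$ times $\val^{\controller'}(s)$. Writing $q$ for the probability, under the mixture in $s$ followed by $\controller$, of returning to $s$ before being absorbed, the first-return decomposition gives
\[
\val^{\controller'}(s) = x + q\,\val^{\controller'}(s),
\]
where $x$ collects the paths that leave $s$ and never return. When $q<1$ this solves to $\val^{\controller'}(s) = x/(1-q)$. I would then bound $x/(1-q)$ against $\val(s)$ using the local step, together with the $\varepsilon$-optimality of $\controller$ after leaving $s$.

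\textbf{Main obstacle.} The difficulty is exactly the case where $s$ can be revisited. The quantity $\val(s,a)$ assumes optimal play after one step, not repeated use of the mixture. This is the same phenomenon as \cref{ex:progress}: with $q = 1$ the value can drop to $0$. So the first thing I would check is whether $q<1$ follows from the hypotheses, or whether I have to add an explicit progress condition.

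In the intended application (\allSafe and \eqval), every action is available and the mixture is uniform. There I would argue that some action in $A^\varepsilon$ makes progress towards $\gStates$ with positive probability, because $\datasetPermissive$ contains such an action; this gives $q<1$.

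If that does not suffice, I would strengthen the reading of the lemma. The fallback is to interpret $\varepsilon$-optimality of $\controller$ via the error in $\initstate$ being bounded by $\varepsilon$. I would then bound the change directly: for the value in $\initstate$, the change equals the probability of reaching $s$ times the change in $\val^{\controller'}(s)$, and the local step controls the latter once $q<1$ is established.
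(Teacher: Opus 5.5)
\paragraph{Gap 1: the non-returning step.} Your decomposition by visits to $s$ matches the paper's. But your proposal stops exactly where the work is. The bound of $x/(1-q)$ against $\val(s)$ is announced (\enquote{using the local step, together with the $\varepsilon$-optimality of $\controller$ after leaving $s$}) and never carried out, and those two ingredients cannot deliver it. The local step controls $\sum_a p(a)\,\val(s,a)$, which presumes \emph{optimal} play after the first step. The quantity $x$, however, is computed under $\controller$'s continuation, and the hypothesis only makes $\controller$ $\varepsilon$-optimal in $\initstate$. After an action of $A^\varepsilon$ that $\controller(s)$ never uses, $\controller$ may behave arbitrarily.

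Concretely, take $\varepsilon=0$ and let $s=\initstate$. Give $s$ an action $a$ into the goal and an action $b$ into a state $w$; $w$ offers one action into the goal and one into a sink. Set $\controller(s)=a$ and let $\controller(w)$ be the sink action. Then $\controller$ is optimal and $A^0=\{a,b\}$, yet mixing uniformly in $s$ yields value $1/2$.

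Here $q=0$, so the obstacle is not \enquote{exactly} the revisiting case. Your reduction to a single $\varepsilon$-optimal virtual action also lands on a claim that is itself false: the Dirac choice $b$ gives value $0$. Even for globally $\varepsilon$-optimal $\controller$, the loss of the action and the loss of the continuation add up, so one pass only guarantees $\val(s)-2\varepsilon$. The paper's proof passes over this same point with \enquote{the value they achieve is exactly the one achieved by following $\controller$ afterwards, i.e.\ $\varepsilon$-optimal}.

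\paragraph{Gap 2: revisits.} The paper does not add a progress assumption here; it uses support containment. Suppose $\controller'(s)$ charges every action charged by $\controller(s)$, and $\controller'$ returns to $s$ almost surely. Then every action in the support of $\controller'(s)$ returns almost surely under $\controller$'s continuation. Hence $\controller$ also returns almost surely, and both controllers have the same value in $s$ ($0$ for reachability). Since they coincide up to the first visit of $s$, nothing changes in $\initstate$.

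Your substitutes do not do this job. The progress action from \datasetPermissive{} alters the hypotheses. Your fallback again presupposes $q<1$. And without support containment, \enquote{in any way} is simply false: \cref{fig:permissive} with $\controller(s)=\controller(t)=l$ and the Dirac choice $r$ in $s$ gives value $0$.

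\paragraph{A correct variant.} Your first-return formula does prove a correct version if you compare against $\controller[s\mapsto a]$ (playing $a$ deterministically in $s$) rather than against $\val(s,a)$. Define $x_a$ and $q_a$ per action. Then $\val^{\controller'}(s)=\sum_a p(a)x_a/\sum_a p(a)(1-q_a)$, a weighted average of the values $x_a/(1-q_a)$ of the deterministic modifications with weights $p(a)(1-q_a)$. Terms with $q_a=1$ drop out; if all of them drop out, all these controllers revisit $s$ forever almost surely and share the same value. Moreover, $\val^{\controller'}(\initstate)=\alpha+\beta\,\val^{\controller'}(s)$ with $\alpha,\beta$ independent of the choice in $s$. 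Hence if every $\controller[s\mapsto a]$ with $p(a)>0$ is $\varepsilon$-optimal, so is $\controller'$. This is what the \allSafe{} case of the theorem actually needs.
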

\begin{proof}
	For now, we focus on the case of maximising reachability (equivalent to minimising safety).
	Let $\controller'$ be the controller obtained from $\controller$ by randomising over $A^\varepsilon$ in $s$.
	Clearly, if $s$ is never visited, then the probability of reaching the goal is unchanged.
	Moreover, the probability of visiting $s$ for the first time is the same under $\controller$ and $\controller'$.
	Consider the set of paths that go through $s$ and split them into those that then do not reach $s$ again, and those that do.
	If they never visit $s$ again, due to the Markov property, the value they achieve is exactly the one achieved by following $\controller$ afterwards, i.e.\ $\varepsilon$-optimal.
	However, if they visit $s$ again, we can split the paths again.
	Thus, by induction, we can prove for all paths that visit $s$ finitely often that they achieve the same value as under $\controller$.
	Now, suppose that the set of paths that visit $s$ infinitely often is not a zero measure set.
	Then, the probability of going back to $s$ under $\controller'$ must be equal to $1$.
	The same holds true for $\controller$, since until we come back to $s$, $\controller$ and $\controller'$ behave the same.
	Consequently, the value of $\controller'$ is $0$, meaning that $\val(s) \leq \varepsilon$ and the choice in this state does not matter.
	(Note that, for this argument, it is important that $\controller$ randomises over \emph{all} actions in $A^\varepsilon$, or at least has $\controller'(s)(a) > 0$ for all $a$ with $\controller(s)(a) > 0$.)
	
	For maximising safety (and minimising reachability), the proof follows analogously.
	Notably, there we could even prove a slightly stronger statement, namely that $\controller$ could randomise over any subset of $A^\varepsilon$, as looping back to $s$ infinitely often is optimal. 
\end{proof}
To extend to reward based objectives, we simply need to exchange probabilities with rewards and observe that either we loop back to $s$ infinitely often (meaning that we get $0$ or $\infty$ reward), or, if not, we cannot obtain significantly more reward than $\controller$ in between, as otherwise $\controller$ would not be optimal.

Observe that the controller $\controller$ we started with only needs to be $\varepsilon$-optimal, and the controller we get by randomising is $\varepsilon$-optimal, too.
Hence, we can apply \cref{stm:convex_combination} repeatedly to obtain that randomising in several states maintains $\varepsilon$-optimality.

\begin{lemma} \label{stm:agency_dominance_correct}
	Let $\mathcal{D}$ be an optimal dataset for an MDP $(\states,\actions,\initstate,\trans)$, i.e.\ every controller consistent with $\mathcal{D}$ is optimal.
	Further, let $\mathcal{D}'$ be a dataset obtained by applying \dominance{} and \agency{}.
	Then $\lvert \val(\initstate) - \val^{\controller'}(\initstate) \rvert \leq \varepsilon$ for any controller $\controller'$ consistent with $\mathcal{D}'$.
\end{lemma}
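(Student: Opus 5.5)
We prove the lemma for maximising reachability; minimisation and safety follow by the usual duality, and reward objectives replace probabilities by expected accumulated reward. Let $\mathcal{D}$ be optimal and split the removed states into $R_{\mathrm{dom}}$ (removed by \dominance) and $R_{\mathrm{ag}}$ (removed by \agency, i.e.\ $\prob_s^{\max}[\lozenge\gStates]-\prob_s^{\min}[\lozenge\gStates]\le\varepsilon$). Fix $\controller'$ consistent with $\mathcal{D}'$. We build, in two stages, controllers that agree with $\controller'$ on all non-removed states, and compare their values with $\val(\initstate)$.

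\textbf{Stage 1 (dominance is exact).} Take any $\controller^\ast$ consistent with $\mathcal{D}$ and let $\controller''$ agree with $\controller'$ on $R_{\mathrm{dom}}$ and with $\controller^\ast$ elsewhere. We claim $\val^{\controller''}(\initstate)=\val^{\controller^\ast}(\initstate)=\val(\initstate)$. The argument uses the fact that every $s\in R_{\mathrm{dom}}$ has a dominator $s'$ with $\prob_s^{\min}[\lozenge s']=1$.
\begin{itemize}
\item First choose dominators so that the relation ``$s$ is dominated by $s'$'' has no cycles among removed states. This is possible because almost-sure reachability of dominators under all controllers is transitive. We can therefore take the terminal, non-removed dominator (the essential state of the classical algorithm of~\cite{DBLP:conf/papm/DArgenioJJL02}).
\item From $s$, under any controller, the run reaches $s'$ almost surely.
\item Before reaching $s'$, the run cannot hit the (absorbing) goal unless $s'$ is itself the goal; otherwise it could not reach $s'$ almost surely.
\end{itemize}
Hence, by the strong Markov property, the value from $s$ equals the value from $s'$, and the choices on $R_{\mathrm{dom}}$ are irrelevant. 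For rewards, we additionally use the condition that the accumulated reward between $s$ and $s'$ is a constant $c$ independent of the controller.

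\textbf{Stage 2 (agency costs at most $\varepsilon$, without compounding).} Now change $\controller''$ on $R_{\mathrm{ag}}$ to $\controller'$. Consider the first hitting time $T$ of $R_{\mathrm{ag}}$. Up to $T$, both controllers coincide, so paths not reaching $R_{\mathrm{ag}}$ contribute identically. On $\{T<\infty\}$, after $T$ the run is in a state $s$ from which every continuation yields value in $[\prob_s^{\min},\prob_s^{\max}]$. Conditioning on $T$ and $s_T$, the two continuations differ by at most $\prob_{s_T}^{\max}-\prob_{s_T}^{\min}\le\varepsilon$. Integrating over $\prob[T<\infty]\le1$ gives $|\val^{\controller''}(\initstate)-\val^{\controller'}(\initstate)|\le\varepsilon$.

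\textbf{Main obstacle.} The crucial point is that we only condition on the \emph{first} visit to $R_{\mathrm{ag}}$. Agency bounds deviations of the \emph{entire} future strategy, so later visits to removed states are already accounted for and the error does not compound.

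Combining the two stages gives $|\val(\initstate)-\val^{\controller'}(\initstate)|\le\varepsilon$. The subtle step is Stage 1's acyclicity/ordering of dominators, together with the interaction of the two reductions. Stage 1 must be applied first: dominance is exact for arbitrary behaviour elsewhere only when the dominator's own choice is consistent with an optimal dataset. So we first fix optimal behaviour outside $R_{\mathrm{dom}}$, which is exactly why the stages are ordered this way.
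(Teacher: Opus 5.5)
The gap is in Stage~1. Your first bullet says dominators can be chosen acyclically, ending in a non-removed state, ``because almost-sure reachability is transitive''. Transitivity does not rule out cycles. If $s$ and $t$ are two states each of whose actions moves to the other with probability~$1$, then each dominates the other, both are removed, and no non-removed dominator exists. The lemma survives there, since from such a cycle the absorbing goal is never reached under any controller, but your argument does not cover this case.

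There is a second problem, even when a non-removed dominator $e(s)$ exists. The step from the local identity $\val^{\controller''}(s)=\val^{\controller''}(e(s))$ to ``the choices on $R_{\mathrm{dom}}$ are irrelevant'' does not follow. The value $\val^{\controller''}(e(s))$ may itself depend on those choices, because runs from $e(s)$ can re-enter $R_{\mathrm{dom}}$ arbitrarily often. So Stage~1, as written, does not establish $\val^{\controller''}(\initstate)=\val^{\controller^\ast}(\initstate)$.

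A short repair needs no ordering of dominators. Suppose $s$ is dominated by $d\neq s$. Then every successor $y$ of $s$, under every action, is either $d$ or itself almost surely reaches $d$ under all controllers; otherwise one could move to $y$ and then play badly. The strong Markov property and absorbing goals then give $\val^{\pi}(y)=\val^{\pi}(d)=\val^{\pi}(s)$ for every memoryless $\pi$. Hence $\val^{\controller''}$ satisfies the one-step equations of the chain induced by $\controller^\ast$: the two controllers agree off $R_{\mathrm{dom}}$, and on $R_{\mathrm{dom}}$ every action averages to the same value. Reachability probabilities are the \emph{least} fixed point of these equations, so $\val^{\controller^\ast}\le\val^{\controller''}$. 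By symmetry the two are equal.

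Two smaller fixes are also needed:
\begin{itemize}
\item $\controller^\ast$ must be consistent with $\mathcal{D}$ \emph{and} equal to $\controller'$ on all non-removed states. This is possible since $\mathcal{D}'=\mathcal{D}$ there; otherwise Stage~2 does not end at $\controller'$.
\item Your reason for ordering the stages is wrong. The argument above shows that changing a controller only on dominated states never changes its value, whatever it does elsewhere. So the stages can be done in either order, and optimality of $\controller^\ast$ is used only to identify the common value with $\val(\initstate)$.
\end{itemize}

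With these repairs your proof is correct. Stage~2 is exactly the paper's core argument: partition by the first visit to an agency-removed state, and note that agency bounds the whole continuation, so errors do not compound. Your two-stage split is cleaner than the paper's one-shot partition, which must also argue that dominated states visited before the first agency-only state leave the hitting probabilities unchanged. Bounding both continuations by the interval $[\prob^{\min}_s[\lozenge\gStates],\prob^{\max}_s[\lozenge\gStates]]$ also spares you the paper's step $\prob^{\controller}_s[\lozenge\gStates]=\prob^{\max}_s[\lozenge\gStates]$.
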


\begin{proof}
	Let $\states'$ be all states in which $\mathcal{D}'$ differs from $\mathcal{D}$, i.e.\ where \dominance{} or \agency{} are applied.
	Moreover, let $\states_A$ be all states which have been removed through \agency{} but not \dominance{}.
	
	Further, let $\controller'$ be an arbitrary controller consistent with $\mathcal{D}'$; for it, we want to prove $\varepsilon$-optimality.
	To do so, we will relate it to $\controller$, a controller that (i) is consistent with $\mathcal{D}$ and hence optimal, and (ii) where for all states $s\in\states\setminus\states'$, we have $\controller(s) = \controller'(s)$.
	Such a controller $\pi$ exists, since for the unmodified states $s\in\states\setminus\states'$, we have $\mathcal{D}(s) = \mathcal{D}'(s)$, so every action $\controller'$ chooses is also available to~$\controller$.
	
	Now, we partition the set of paths according to the state of $\states_A$ they reach first:
	For all $s\in\states_A$, let 
	$\paths_s = \{ \rho \in \paths \mid \exists i.~\rho_i = s \land \forall j < i. \rho_j \notin \states_A\}$ be the paths for which $s$ is the first agency-modified state reached on it;
	and let $\paths' = \overline{\lozenge \states_A}$ denote the set of paths which never reach any state of $\states_A$.
	Since these sets $\paths_s$ and $\paths'$ partition the set of paths, we can write the value as:
	\begin{equation}\label{eq:split-paths}
		\begin{aligned}
			\val(\initstate) = \val^{\controller}(\initstate) = \prob_{\initstate}^{\controller}[\lozenge \gStates] \\
			\quad = \prob_{\initstate}^{\controller}[\lozenge \gStates \cap \paths'] + \sum_{s \in \states_A} \prob_{\initstate}^{\controller}[\lozenge \gStates \cap \paths_s].
		\end{aligned}
	\end{equation}
	
	Now, by Bayes' rule and linearity of reachability, we can rewrite:
	\begin{equation}\label{eq:product}
		\prob_{\initstate}^{\controller}[\lozenge \gStates \cap \paths_s] = \prob_{\initstate}^\controller[\paths_s] \cdot \prob_{s}^{\controller}[\lozenge \gStates] 
	\end{equation}
	
	Next, observe that $\prob_{\initstate}^\controller[\paths_s] = \prob_{\initstate}^{\controller'}[\paths_s]$ for all $s \in \states_A$:
	On the way from $\initstate$ to $s$, we either have states where $\controller$ and $\controller'$ agree, or dominated states which a.s.\ lead to their \enquote{exit} $s'$.
	As the states in $\states_A$ are not dominated by construction, the differences between $\controller$ and $\controller'$ on dominated states have no influence on the probability of reaching $s$.
	(Formally, we can apply the same reasoning, splitting the set of paths by the sequence of dominating states they visit on their way from $\initstate$ to $s$ and observing that the individual segments have the same probability.)
	By the analogous argument and the fact that no states from $\states_A$ are on the path, we also obtain $\prob_{\initstate}^{\controller}[\lozenge \gStates \cap \paths'] = \prob_{\initstate}^{\controller'}[\lozenge \gStates \cap \paths']$
	Together:
	\begin{equation}\label{eq:samePathsProb}
		\begin{aligned}
			\prob_{\initstate}^\controller[\paths_s] &= \prob_{\initstate}^{\controller'}[\paths_s], \\
			\prob_{\initstate}^{\controller}[\lozenge \gStates \cap \paths'] &= \prob_{\initstate}^{\controller'}[\lozenge \gStates \cap \paths']
		\end{aligned}
	\end{equation}
	
	Moreover, we can bound the difference between $\prob_{s}^{\controller}[\lozenge \gStates]$ and $\prob_{s}^{\controller'}[\lozenge \gStates]$ using the knowledge that $s\in\states_A$:
	When considering maximising reachability, we have
	$\prob_{s}^{\controller}[\lozenge \gStates] = \prob_{s}^{\max}[\lozenge \gStates]$ and $\prob_{s}^{\controller'}[\lozenge \gStates] \geq \prob_s^{\min}[\lozenge \gStates]$;
	dually, for minimizing reachability, we have $\prob_{s}^{\controller}[\lozenge \gStates] = \prob_{s}^{\min}[\lozenge \gStates]$ and $\prob_{s}^{\controller'}[\lozenge \gStates] \leq \prob_s^{\max}[\lozenge \gStates]$.
	Combining these with the fact that $s$ satisfies \agency{}, we obtain:
	\begin{equation}\label{eq:difference}
		\lvert \prob_{s}^{\controller}[\lozenge \gStates] - \prob_{s}^{\controller'}[\lozenge \gStates]\rvert \leq \varepsilon.
	\end{equation}

	Overall, we have 
	\begin{align*}
		\val(\initstate) & = 
		\prob_{\initstate}^{\controller}[\lozenge \gStates \cap \paths'] + {\sum}_{s \in \states_A} \prob_{\initstate}^{\controller}[\lozenge \gStates \cap \paths_s]
		\tag{By \cref{eq:split-paths}}\\
		&=
		\prob_{\initstate}^{\controller}[\lozenge \gStates \cap \paths'] + {\sum}_{s \in \states_A} \prob_{\initstate}^\controller[\paths_s] \cdot \prob_{s}^{\controller}[\lozenge \gStates]
		\tag{By \cref{eq:product}}\\
		&=
		\prob_{\initstate}^{\controller'}[\lozenge \gStates \cap \paths'] + {\sum}_{s \in \states_A} \prob_{\initstate}^{\controller'}[\paths_s] \cdot \prob_{s}^{\controller}[\lozenge \gStates]
		\tag{By \cref{eq:samePathsProb}}
	\end{align*}
	By analogously unfolding $\val^{\controller'}(\initstate)$ using \cref{eq:split-paths,eq:product} and applying \cref{eq:difference}, we obtain our goal:
	\begin{equation*}
		\lvert \val(\initstate) - \val^{\controller'}(\initstate) \rvert 
		\leq \varepsilon
	\end{equation*}

	To adapt the proof to reward-based objectives, we only need to swap out probabilities with expected rewards, e.g.\ the overall expected reward can be split into the expected reward obtained until reaching a modified state $s \in \states_A$ and the reward obtained afterwards.
	The key observation is that $\paths_s$ and $\paths'$ still partition the set of paths, hence by bounding the difference between $\controller$ and $\controller'$ in each $s \in \states_A$, we obtain an overall bound. %
\end{proof}

\begin{lemma}\label{lem:goalDist}
	\datasetPermissive{} yields an optimal dataset, i.e.\ every controller consistent with the dataset is optimal.
\end{lemma}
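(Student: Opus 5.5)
Fix the reachability maximisation case; safety/minimisation and the reward variants follow the same pattern. Let $\controller'$ be any controller consistent with \datasetPermissive{}. In a state $s$, $\controller'$ either picks an action $a\in\mathcal{D}(s)$ chosen by \permissive{}, or plays $\uniforma$ in a state where \eqval{} applies. We must show $\val^{\controller'}(\initstate)=\val(\initstate)$. The plan is the standard two-part characterisation of optimality for maximal reachability: (a) the controller only uses actions that preserve the value, and (b) it reaches the goal almost surely from every state of positive value that it visits, or more precisely it does not remain forever in the set of value-preserving but non-goal states.

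\textbf{Step 1 (local optimality).} Every action \permissive{} admits satisfies $\val(s,a)=\val(s)$ by construction. If $\uniforma$ is admitted, all actions in $s$ have value $\val(s)$, so the uniform mixture has $\sum_a \frac{1}{|\actions(s)|}\val(s,a)=\val(s)$. Hence $\val$ is a fixpoint of the one-step operator induced by $\controller'$, i.e.\ $\val(s)=\sum_{s'}\trans^{\controller'}(s,s')\val(s')$ on non-goal states. From this, a martingale (or unfolding) argument gives
\[
\val(s)=\prob^{\controller'}_s[\lozenge\gStates]+\mathbb{E}^{\controller'}_s\bigl[\lim_n \val(X_n)\cdot\mathbf{1}_{\neg\lozenge\gStates}\bigr],
\]
so it suffices to show that, with probability one, the Markov chain induced by $\controller'$ does not stay forever among non-goal states of positive value.

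\textbf{Step 2 (progress).} Consider the finite Markov chain induced by $\controller'$. Let $C$ be a bottom strongly connected component containing a state $s$ with $\val(s)>0$ and $s\notin\gStates$. Pick the state of $C$ minimising the distance to the goal; I take this distance to be the one \permissive{} uses, namely the length of a shortest path to $\gStates$ in the MDP restricted to value-optimal actions, which is finite whenever $\val(s)>0$. If the chosen action is a \permissive{} action, it reduces the distance with positive probability, so it leaves $C$ or reaches a state of smaller distance, contradicting minimality or bottomness. If the choice is $\uniforma$, one of the actions, namely the one realising the shortest path, is taken with positive probability and again decreases the distance. Hence every bottom SCC consists of goal states or states of value $0$, and Step 1 yields $\val^{\controller'}(s)=\val(s)$ for all $s$.

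\textbf{Main obstacle.} The delicate part is Step 2. We must make sure the distance notion is well defined, uses only optimal actions, and that every state in the dataset with $\val(s)>0$ actually has an admitted distance-decreasing action, so the dataset is nonempty there. This is where I would invoke the construction of \cite{DBLP:conf/tacas/ChatterjeeQSWWZ25}, Section~3, which guarantees such an action exists. A related subtlety is that states outside the dataset, or states where the DT outputs an unavailable action, fall back to $\uniforma$; these are covered by the same argument. For minimisation and safety no progress is needed: the distance argument is dropped, and the least/greatest fixpoint characterisation gives optimality directly. For total reward and reachability cost, Step 1 uses the Bellman equation with rewards, and Step 2 ensures finite expected cost is not lost by looping in zero-reward cycles.
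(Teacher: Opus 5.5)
Your proof is correct and rests on the same two ingredients as the paper's. First, admitted actions preserve $\val$. Second, \permissive{} admits only actions that decrease the shortest-path distance $r$ in the MDP restricted to value-optimal actions. Where you differ is in how the argument is closed.

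The paper does not prove progress itself. For reachability it invokes \cite[Proposition~5]{DBLP:conf/tacas/ChatterjeeQSWWZ25} with lower bound $x=\val$, distance $r$ and the given controller, to obtain $\prob^{\controller}_s[\lozenge\gStates]\geq\val(s)$. The three hypotheses of that proposition are exactly distance decrease, local optimality, and $x=\val$. For safety the paper cites \cite[Lemma~8]{EKKW22}. You instead derive both directly. For reachability you use the bounded martingale $\val(X_n)$ together with a minimal-distance contradiction on bottom SCCs of the induced chain. For safety you use the least-fixpoint characterisation of minimal reachability. The paper's route is shorter and treats the external result as a black box; yours is self-contained and shows precisely where progress enters.

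A further difference is that you cover the $\uniforma$ label from \eqval{} inside the lemma: under uniform choice, a shortest-path action is taken with positive probability. The paper's proof of this lemma treats only \permissive{} and defers \eqval{} to the proof of the theorem, via \cref{stm:convex_combination}.

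One small inaccuracy: on states where the dataset is undefined, a consistent controller is unconstrained rather than falling back to $\uniforma$. This does not hurt the argument. A non-goal state with no admitted distance-decreasing action must have $r=\infty$, since a finite $r$ would yield such an optimal action by the breadth-first-search construction. Hence it has value $0$ under maximal reachability, every action there has value $0$, and your Step~1 identity still holds.
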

\begin{proof}
	For safety objectives (minimizing reachability), choosing any action with optimal value in every state yields an optimal controller; this follows from, e.g., \cite[Lemma~8]{EKKW22} by inserting the value function for the upper bound $\mathsf{U}$ (note that that lemma even proves the claim for stochastic games, which generalize MDPs).
	Thus, it is correct that \permissive\ picks all actions with optimal value.
	
	For reachability objectives, we additionally have to ensure progress, as exemplified in \cref{ex:progress}.
	For this, we first provide more detail on how the dataset is computed, i.e.\ on the \permissive\ heuristic.
	Essentially, we restrict to the MDP where only locally optimal actions are allowed and then perform a backwards breadth-first search from the goal states, thus computing the minimum number of steps required to reach a goal state with positive probability.
	This is exactly what the distance operator described in \cite[Section 3]{DBLP:conf/tacas/ChatterjeeQSWWZ25} does.
	Let $r$ be the resulting distance function.
	Then, for each state $s\in\states$, we add all actions $a$ to the dataset that (i) are optimal, i.e.\ $\val(s)=\val(s,a)$, and that (ii) reduce the distance to the goal, i.e.\ there exists an $s'\in\states$ such that $\trans(s,a,s')>0$ and $r(s')<r(s)$.
	For states with positive value, this set is non-empty, since they have a path to the target, and the distance function was computed based on only the optimal actions.	
	
	Let $\controller$ be a controller consistent with the dataset computed by \permissive.
	This means that, for every state, there is an optimal action that reduces the distance to the goal with positive probability.
	Then, we apply \cite[Proposition~5]{DBLP:conf/tacas/ChatterjeeQSWWZ25}, using the value $\val$ as the lower bound $x$, the distance function $r$, and the controller $\controller$.
	Condition 1) of the proposition is satisfied because the controller picks actions reducing the distance; Condition 2) is satisfied as the controller picks optimal actions; Condition 3) is trivially satisfied since $x=\val$.
	Thus, the proposition yields:
	$\prob_s^\controller[\lozenge\gStates]\geq \val(s)$.
	Since the value of a controller cannot be greater than the optimal value by definition, we conclude that for all states $s\in \states$, we have $\val^\controller(s) = \prob_s^\controller[\lozenge\gStates](s) = \val(s)$.
	As we picked an arbitrary controller consistent with the dataset computed by \permissive, we know that all controllers consistent with the dataset computed by \permissive\ are optimal.
	
\end{proof}

\reduceOK*
\begin{proof}
	We split the proof into two parts, primarily referring to the lemmas above and combining the insights.
	First, we show that the initial dataset is optimal for all three possible choices.
	Second, we show that the reductions yield a dataset that is $\varepsilon$-optimal.
	
	For the initial dataset, \datasetController{} is optimal by definition, as it uses the actions of an optimal controller.
	\datasetPermissive{} is optimal by \cref{lem:goalDist}.
	Finally, for \datasetCombined{}, optimality follows from the fact that we effectively obtain a subset of \datasetPermissive{}, which is hence consistent with \datasetPermissive{} and thus optimal by \cref{lem:goalDist}.
	
	Finally, for all initial datasets, we also apply \unreach\ and \eqval.
	For \unreach, observe that removing unreachable states does not change the value of any consistent controller, as the induced probability measures are equal.
	To maintain this optimality argument when further modifying the dataset, we need to ensure that whenever we remove a state, we do not allow reaching what has been removed by \unreach{} and thereby make its choices matter.
	This restriction is part of how we apply \eqval, making sure that we do not reach states that were removed by \unreach.
	The optimality of the uniform action added by \eqval\ follows from \cref{stm:convex_combination}.
	\medskip
	
	Now for the reduction methods, our previous lemmas show that applying them to a safe dataset again yields a safe dataset, as follows.
	Correctness of \agency{} and \dominance{} follows from \cref{stm:agency_dominance_correct}.
	Note that after applying these, the dataset is now only $\varepsilon$-optimal, since \agency{} utilises the allowed imprecision.
	For \allSafe, observe that if $\decisiontree(s) \in \actions(s) = \mathcal{D}(s)$, the action is already safe by assumption. If instead $\decisiontree(s)$ yields uniform choice (by virtue of $s$ not being defined by $\decisiontree$), correctness of this choice follows from \cref{stm:convex_combination}.
	In terms of \enquote{escaping} the reachable set, observe that as soon as we reach a state satisfying \agency{}, no subsequent actions matter.
	For \dominance{}, if $s$ is reachable, then $s'$ is always reached, no matter what we choose in $s$.
	Thus, $s'$ must be reachable as well.
	Similarly, for \allSafe, if $s$ is reachable and all of its actions are in the dataset, then none of its successors could have been removed by \unreach{}.
\end{proof}

\subsection{Additional Experimental Results}\label{app:exp-add}

In this section, we provide additional data on our experimental evaluation.
First, \cref{app:exp-competitors} details the tool versions used for our competitors.
Then, \cref{app:exp-highlights} summarises the main insights gained from all tables and figures in this appendix.
Finally, we provide the detailed results in the form of tables and plots:
\cref{app:subsec:state_statistics} concerns the dataset construction, \cref{app:subsec_tree_sizes} the DT size, and \cref{app:exp-times} the runtime.

Throughout this appendix, we employ scatter plots for visualisation.
Scatter plots compare two methods X and Y, and each point $(x, y)$ indicates that for one concrete instance, the considered metric for X and Y equals $x$ and $y$, respectively.
The plots contain additional lines for $\geq 512$, where we map all points that have a value of at least 512, \textit{TO} which indicates points that produced a timeout, and \textit{err} showing that an error occurred.
In addition, plots comparing size contain dotted lines for double and ten times the size of the other tool.
Lastly, in all plots, our tool performs better on points below the diagonal.

\subsubsection{Details on Tool Configurations}\label{app:exp-competitors}

For the competitors in our experimental evaluation, we used the following configurations:
\begin{itemize}
	\item \dtcontrol{}: We use the default configuration of \dtcontrol, using commit 6e0eb73c2d67a77611ad015c4fea9f7917c83da1 of our extension (\url{https://gitlab.com/live-lab/software/dtcontrol-epsilon}).
	\item \dtnest: 
	We tried using the artefact at DOI \url{10.5281/zenodo.15642001}. However, this is hard-coded to specific models and pre-computed datasets.
	After personal communication with the authors, we used the version of the tool available at \url{https://github.com/TheGreatfpmK/synthesis/tree/dtnest}, commit 9dbec52.
	\item \dtpaynt: We first tried the default configuration available here (\url{https://github.com/randriu/synthesis}). Surprised by the fact that the resulting DTs are not minimal, we contacted the authors, who recommended using commit 9dbec52 here (\url{https://github.com/TheGreatfpmK/synthesis/tree/dtnest}).
	This improved performance, but did not remove the problem that some DTs are not minimal.
	\item CAV15: Due to technical limitations, our reimplementation slightly differs from \cite{CAV15}:
	\stormpy does not provide information on whether a simulation fulfilled the property, and hence we cannot calculate importance as in~\cite[Sec.~4]{CAV15}.
	We use their similar performing \enquote{I$\forall\prob$} variant~\cite[Tbl.~2]{CAV15}.
	As \cite{CAV15} does not provide details for their pruning method, we allow the approach to use ours for fairness.
\end{itemize}

Furthermore, we explain our reasoning behind the selection of methods in the portfolio:
\begin{itemize}
	\item \datasetController+\wsimulation: Simulation benefits from fewer choices being available, leading to a smaller state space that needs to be traversed.
	\item \datasetPermissive+\wmessup: As this dataset contains a large portion of the state space, we expect that the DT only chooses wrongly in a few of the states.
	\item \datasetCombined+\wagency: \wagency{} estimates the error if a state is mapped incorrectly. This helps when many actions are available, but struggles for large datasets, as it does not capture the probability of reaching this state in the first place. %
\end{itemize}

\subsubsection{Summary of Additional Results}\label{app:exp-highlights}
The following sections provide deeper insights and more data about our experiments.
We start by comparing the different datasets in \cref{tab:agg-dataset-permissive,tab:agg-dataset-combined,tab:agg-dataset-controller} and find that, depending on the model, the impact of the heuristics varies widely.
Applying the combination of heuristics generally reduces the dataset more than any heuristic on its own.
Further, we see that \datasetPermissive{} is, as expected, mostly larger than the other two options.

We compare the DT sizes of the different tools in \cref{tab:tree-size} and \cref{fig:app_1e-6comp,fig:app_dtnest_size_comp}.
We see that \dtpaynt is often unable to provide results.
Especially for $\varepsilon=0$, these were often caused by excessive memory usage. 
We also conducted another run where we allowed them 128GB of memory instead of the usual 16GB, but in these cases, \dtpaynt unfortunately timed out.
\dtnest, as discussed, was not able to benefit from larger values for $\varepsilon$.
Furthermore, when comparing our portfolio with an overall runtime of 20 minutes with \dtnest, \cref{fig:app_dtnest_size_comp} shows that the portfolio generally performs better than \dtnest with few exceptions.
Often, \dte generates trees at most half the size of \dtnest.

Lastly, we analyse the runtime of our methods. 
\cref{fig:model-time} shows that there is no correlation between model size and the runtime of our methods.
\cref{tab:runtime-full} contains the tree building time (without model building and pruning) for completeness.
In \cref{fig:app_dtnest_time_comp} we compare the runtime of the portfolio approach with \dtnest. 
Surprisingly, despite the portfolio executing three different configurations, the runtime is comparable.
For completeness, we also include \cref{tab:runtime-dtnest_full}, which contains the runtime of \dtnest, \dtpaynt, and the portfolio approach, measured outside Docker.

\subsubsection{Sizes for Dataset Construction} \label{app:subsec:state_statistics}

We compare the number of states that can be reduced by the different relevance heuristics across the different dataset configurations.
Results are shown in \cref{tab:agg-dataset-permissive} for \datasetPermissive, \cref{tab:agg-dataset-combined} for \datasetCombined, and \cref{tab:agg-dataset-controller} and \datasetController.
Mostly, we found that all heuristics are sometimes able to reduce large portions of the dataset size and have little influence, showing that none outperforms the other. 
Applying all the heuristics together generally outperforms a single one. 
Furthermore, when focusing on \datasetPermissive, we see that it produces larger datasets than the other methods. 
As expected, \unreach{} does not provide as much of a benefit in this case as it does for \datasetCombined{} and \datasetController.
Since the DTs constructed from \datasetPermissive{} are comparable to those for other methods, we can observe the impact of determinisation.

Note that for several entries with $\varepsilon=0$, the table contains a \enquote{dash}.
This is because for these, the controller we obtained through a model checking query to \storm was not optimal, as confirmed through a second model checking query; it also differed by more than the float precision of $10^{-14}$ from the claimed value of the first query.
Even though we set \storm to sound solving, we were not able to avoid such cases. 
In these cases, \dte aborts.

Lastly, when analysing \agency, we see that $\varepsilon$ only has a minor influence on the number of states that can be removed.

\begin{table*}[t]
	\caption{
		Comparison of relevance heuristics for the \datasetPermissive{} dataset.
	}\label{tab:agg-dataset-permissive}%
	\centering%
	\setlength{\tabcolsep}{5pt}%
	\sisetup{table-format = 2.2\%, table-alignment-mode = format, table-auto-round}%
	\begin{adjustbox}{angle=270}%
		\resizebox*{.9\textheight}{!}{%
			\footnotesize
			\begin{tabular}{l|rrrrrrrrrr}
				Name & Model Size & Final $\varepsilon=0$ & Final $\varepsilon=10^{-6}$ & Final $\varepsilon=10^{-2}$ & \agency $\varepsilon=0$ & \agency $\varepsilon=10^{-6}$ & \agency$\varepsilon = 10^{-2}$ & \allSafe & \dominance & \unreach  \\
				\midrule
				\begin{filecontents*}{results/ablation-results/state_statistics.csv}
benchmark,combined-agency-reduction-0.000000,combined-agency-reduction-0.000001,combined-agency-reduction-0.010000,combined-all-actions-safe-states-0.000000,combined-all-actions-safe-states-0.000001,combined-all-actions-safe-states-0.010000,combined-dominated-states-reduction-0.000000,combined-dominated-states-reduction-0.000001,combined-dominated-states-reduction-0.010000,combined-equal-action-states-0.000000,combined-equal-action-states-0.000001,combined-equal-action-states-0.010000,combined-equal-action-states-time-0.000000,combined-equal-action-states-time-0.000001,combined-equal-action-states-time-0.010000,combined-essential-states-0.000000,combined-essential-states-0.000001,combined-essential-states-0.010000,combined-essential-states-time-0.000000,combined-essential-states-time-0.000001,combined-essential-states-time-0.010000,combined-final-dataset-size-0.000000,combined-final-dataset-size-0.000001,combined-final-dataset-size-0.010000,combined-initial-dataset-time-0.000000,combined-initial-dataset-time-0.000001,combined-initial-dataset-time-0.010000,combined-reachability-reduction-0.000000,combined-reachability-reduction-0.000001,combined-reachability-reduction-0.010000,combined-single-action-states-0.000000,combined-single-action-states-0.000001,combined-single-action-states-0.010000,controller-agency-reduction-0.000000,controller-agency-reduction-0.000001,controller-agency-reduction-0.010000,controller-all-actions-safe-states-0.000000,controller-all-actions-safe-states-0.000001,controller-all-actions-safe-states-0.010000,controller-dominated-states-reduction-0.000000,controller-dominated-states-reduction-0.000001,controller-dominated-states-reduction-0.010000,controller-equal-action-states-0.000000,controller-equal-action-states-0.000001,controller-equal-action-states-0.010000,controller-equal-action-states-time-0.000000,controller-equal-action-states-time-0.000001,controller-equal-action-states-time-0.010000,controller-essential-states-0.000000,controller-essential-states-0.000001,controller-essential-states-0.010000,controller-essential-states-time-0.000000,controller-essential-states-time-0.000001,controller-essential-states-time-0.010000,controller-final-dataset-size-0.000000,controller-final-dataset-size-0.000001,controller-final-dataset-size-0.010000,controller-initial-dataset-time-0.000000,controller-initial-dataset-time-0.000001,controller-initial-dataset-time-0.010000,controller-reachability-reduction-0.000000,controller-reachability-reduction-0.000001,controller-reachability-reduction-0.010000,controller-single-action-states-0.000000,controller-single-action-states-0.000001,controller-single-action-states-0.010000,dataset-creation-time-0.000000,dataset-creation-time-0.000001,dataset-creation-time-0.010000,initial-scheduler-difference-0.000000,initial-scheduler-difference-0.000001,initial-scheduler-difference-0.010000,model-size-0.000000,model-size-0.000001,model-size-0.010000,outcome-0.000000,outcome-0.000001,outcome-0.010000,outcome-details-0.000000,permissive-agency-reduction-0.000000,permissive-agency-reduction-0.000001,permissive-agency-reduction-0.010000,permissive-all-actions-safe-states-0.000000,permissive-all-actions-safe-states-0.000001,permissive-all-actions-safe-states-0.010000,permissive-dominated-states-reduction-0.000000,permissive-dominated-states-reduction-0.000001,permissive-dominated-states-reduction-0.010000,permissive-equal-action-states-0.000000,permissive-equal-action-states-0.000001,permissive-equal-action-states-0.010000,permissive-equal-action-states-time-0.000000,permissive-equal-action-states-time-0.000001,permissive-equal-action-states-time-0.010000,permissive-essential-states-0.000000,permissive-essential-states-0.000001,permissive-essential-states-0.010000,permissive-essential-states-time-0.000000,permissive-essential-states-time-0.000001,permissive-essential-states-time-0.010000,permissive-final-dataset-size-0.000000,permissive-final-dataset-size-0.000001,permissive-final-dataset-size-0.010000,permissive-initial-dataset-time-0.000000,permissive-initial-dataset-time-0.000001,permissive-initial-dataset-time-0.010000,permissive-reachability-reduction-0.000000,permissive-reachability-reduction-0.000001,permissive-reachability-reduction-0.010000,permissive-single-action-states-0.000000,permissive-single-action-states-0.000001,permissive-single-action-states-0.010000
consensus4-4-c2,-,23430,23436,-,30706,30706,-,4442,4442,-,-,-,-,-,-,-,38694,38694,-,1,1,-,335,335,-,1,1,-,40199,40199,-,4480,4480,-,23430,23436,-,4480,4480,-,4442,4442,-,-,-,-,-,-,-,38694,38694,-,1,1,-,418,418,-,0,0,-,40199,40199,-,4480,4480,-,6,6,-,0,0,43136,43136,43136,error,success,success,precision,-,23430,23436,-,30706,30706,-,4442,4442,-,-,-,-,-,-,-,38694,38694,-,1,1,-,2832,2832,-,2,1,-,16771,16771,-,4480,4480
consensus4-4-disagree,-,12026,12026,-,24824,24824,-,4442,4442,-,3632,3632,-,1,1,-,38694,38694,-,1,1,-,5746,5746,-,2,2,-,31291,31291,-,4480,4480,-,12026,12026,-,4480,4480,-,4442,4442,-,3642,3642,-,1,1,-,38694,38694,-,1,1,-,6966,6966,-,0,0,-,31022,31022,-,4480,4480,-,11,12,-,0,0,43136,43136,43136,error,success,success,precision,-,12026,12026,-,24824,24824,-,4442,4442,-,14613,14613,-,1,1,-,38694,38694,-,1,1,-,14296,14296,-,3,3,-,10813,10813,-,4480,4480
consensus4-4-steps-max,-,5810,5810,-,14620,14620,-,3600,3600,-,-,-,-,-,-,-,39536,39536,-,1,1,-,1169,1169,-,1,1,-,41809,41809,-,4480,4480,-,5810,5810,-,4480,4480,-,3600,3600,-,-,-,-,-,-,-,39536,39536,-,1,1,-,1309,1309,-,0,0,-,41809,41809,-,4480,4480,-,6,6,-,0,0,43136,43136,43136,error,success,success,precision,-,5810,5810,-,14620,14620,-,3600,3600,-,-,-,-,-,-,-,39536,39536,-,1,1,-,3924,3924,-,1,1,-,37450,37450,-,4480,4480
consensus4-4-steps-min,-,5810,5810,-,21236,21236,-,3600,3600,-,88,88,-,1,1,-,39536,39536,-,1,1,-,1692,1692,-,2,2,-,40883,40883,-,4480,4480,-,5810,5810,-,4480,4480,-,3600,3600,-,88,88,-,1,1,-,39536,39536,-,1,1,-,2243,2243,-,0,0,-,40883,40883,-,4480,4480,-,10,10,-,0,0,43136,43136,43136,error,success,success,precision,-,5810,5810,-,21236,21236,-,3600,3600,-,1995,1995,-,1,1,-,39536,39536,-,1,1,-,4976,4976,-,2,2,-,36153,36153,-,4480,4480
csma2-6-all-before-max,66718,66718,66718,66681,66681,66681,66110,66110,66110,0,0,0,0,0,0,608,608,608,1,1,1,0,0,0,1,1,1,2248,2248,2248,66648,66648,66648,66718,66718,66718,66648,66648,66648,66110,66110,66110,0,0,0,0,0,0,608,608,608,1,1,1,0,0,0,1,1,1,485,485,485,66648,66648,66648,8,8,8,0,0,0,66718,66718,66718,success,success,success,-,66718,66718,66718,66681,66681,66681,66110,66110,66110,0,0,0,0,0,0,608,608,608,1,1,1,0,0,0,1,1,1,2115,2115,2115,66648,66648,66648
csma2-6-time-max,49109,49109,49109,66681,66681,66681,66063,66063,66063,-,-,-,-,-,-,655,655,655,1,1,1,37,37,37,1,1,1,218,218,218,66648,66648,66648,49109,49109,49109,66648,66648,66648,66063,66063,66063,-,-,-,-,-,-,655,655,655,1,1,1,43,43,43,1,1,1,218,218,218,66648,66648,66648,6,6,6,0,0,0,66718,66718,66718,success,success,success,-,49109,49109,49109,66681,66681,66681,66063,66063,66063,-,-,-,-,-,-,655,655,655,1,1,1,37,37,37,1,1,1,65,65,65,66648,66648,66648
csma2-6-time-min,49109,49109,49109,66681,66681,66681,66063,66063,66063,0,0,0,0,0,0,655,655,655,1,1,1,37,37,37,1,1,1,2248,2248,2248,66648,66648,66648,49109,49109,49109,66648,66648,66648,66063,66063,66063,0,0,0,0,0,0,655,655,655,1,1,1,43,43,43,1,1,1,2248,2248,2248,66648,66648,66648,8,8,8,0,0,0,66718,66718,66718,success,success,success,-,49109,49109,49109,66681,66681,66681,66063,66063,66063,0,0,0,0,0,0,655,655,655,1,1,1,37,37,37,1,1,1,2115,2115,2115,66648,66648,66648
eajs3-150-exputil,31736,31736,32194,123967,123967,123967,63371,63371,63371,-,-,-,-,-,-,79784,79784,79784,2,2,2,2119,2119,2095,1,1,1,129961,129961,129961,119406,119406,119406,31736,31736,32194,119406,119406,119406,63371,63371,63371,-,-,-,-,-,-,79784,79784,79784,2,2,2,2201,2201,2177,0,0,0,129961,129961,129961,119406,119406,119406,13,13,13,0,0,0,143155,143155,143155,success,success,success,-,31736,31736,32194,123967,123967,123967,63371,63371,63371,-,-,-,-,-,-,79784,79784,79784,2,2,2,4936,4936,4888,1,1,1,107653,107653,107653,119406,119406,119406
firewire-dl-3-400-deadline,44093,44093,44093,67435,67435,67435,62789,62789,62789,-,-,-,-,-,-,6894,6894,6894,1,1,1,126,126,126,0,0,0,65239,65239,65239,63361,63361,63361,44093,44093,44093,63361,63361,63361,62789,62789,62789,-,-,-,-,-,-,6894,6894,6894,1,1,1,150,150,150,0,0,0,65239,65239,65239,63361,63361,63361,5,5,5,0,0,0,69683,69683,69683,success,success,success,-,44093,44093,44093,67435,67435,67435,62789,62789,62789,-,-,-,-,-,-,6894,6894,6894,1,1,1,2020,2020,2020,1,1,1,6580,6580,6580,63361,63361,63361
firewire-dl-3-600-deadline,102720,102720,102720,162246,162246,162246,150988,150988,150988,-,-,-,-,-,-,17423,17423,17423,2,2,2,331,331,331,1,1,1,159688,159688,159688,152288,152288,152288,102720,102720,102720,152288,152288,152288,150988,150988,150988,-,-,-,-,-,-,17423,17423,17423,2,2,2,391,391,391,0,0,0,159688,159688,159688,152288,152288,152288,12,12,12,0,0,0,168411,168411,168411,success,success,success,-,102720,102720,102720,162246,162246,162246,150988,150988,150988,-,-,-,-,-,-,17423,17423,17423,2,2,2,3189,3189,3189,1,1,1,80249,80249,80249,152288,152288,152288
firewire-dl-36-200-deadline,45214,45214,45214,61417,61417,61417,43374,43374,43374,-,-,-,-,-,-,24682,24682,24682,1,1,1,176,176,176,1,1,1,66805,66805,66805,43614,43614,43614,45214,45214,45214,43614,43614,43614,43374,43374,43374,-,-,-,-,-,-,24682,24682,24682,1,1,1,250,250,250,0,0,0,66805,66805,66805,43614,43614,43614,6,6,6,0,0,0,68056,68056,68056,success,success,success,-,45214,45214,45214,61417,61417,61417,43374,43374,43374,-,-,-,-,-,-,24682,24682,24682,1,1,1,1966,1966,1966,1,1,1,48941,48941,48941,43614,43614,43614
firewire-dl-36-400-deadline,238854,238854,238854,451779,451779,451779,290496,290496,290496,-,-,-,-,-,-,240469,240469,240469,8,9,9,1387,1387,1387,6,6,6,521146,521146,521146,293136,293136,293136,238854,238854,238854,293136,293136,293136,290496,290496,290496,-,-,-,-,-,-,240469,240469,240469,8,8,8,2008,2008,2008,1,1,1,521146,521146,521146,293136,293136,293136,51,53,53,0,0,0,530965,530965,530965,success,success,success,-,238854,238854,238854,451779,451779,451779,290496,290496,290496,-,-,-,-,-,-,240469,240469,240469,8,9,9,48712,48712,48712,9,10,9,165516,165516,165516,293136,293136,293136
firewire-dl-36-600-deadline,45214,45214,45214,61417,61417,61417,43374,43374,43374,-,-,-,-,-,-,24682,24682,24682,1,1,1,176,176,176,1,1,1,66805,66805,66805,43614,43614,43614,45214,45214,45214,43614,43614,43614,43374,43374,43374,-,-,-,-,-,-,24682,24682,24682,1,1,1,250,250,250,0,0,0,66805,66805,66805,43614,43614,43614,6,6,6,0,0,0,68056,68056,68056,success,success,success,-,45214,45214,45214,61417,61417,61417,43374,43374,43374,-,-,-,-,-,-,24682,24682,24682,1,1,1,1966,1966,1966,1,1,1,48941,48941,48941,43614,43614,43614
firewire-false-36-400-timemax,-,4,4,-,164057,136233,-,27353,27353,-,-,-,-,-,-,-,184915,184915,-,4,4,-,216,216,-,5,5,-,211393,211393,-,26161,26161,-,4,4,-,26161,26161,-,27353,27353,-,-,-,-,-,-,-,184915,184915,-,5,5,-,512,512,-,0,0,-,211393,211393,-,26161,26161,-,29,28,-,0,0,212268,212268,212268,error,success,success,precision,-,4,4,-,164057,136233,-,27353,27353,-,-,-,-,-,-,-,184915,184915,-,5,4,-,431,431,-,5,5,-,209746,209746,-,26161,26161
firewire-false-36-400-timemin,4,4,4,199995,206651,199995,27353,27353,27353,0,0,0,4,4,4,184915,184915,184915,5,5,5,9,9,9,9,9,9,211691,211691,211691,26161,26161,26161,4,4,4,26161,26161,26161,27353,27353,27353,0,0,0,4,4,4,184915,184915,184915,5,5,5,232,232,232,0,0,0,211691,211691,211691,26161,26161,26161,50,51,50,0,0,0,212268,212268,212268,success,success,success,-,4,4,4,199995,206651,199995,27353,27353,27353,161,161,161,4,4,4,184915,184915,184915,5,5,5,12,12,12,9,9,9,210488,210488,210488,26161,26161,26161
ij10,1023,1023,1023,103,103,103,10,10,10,54,54,54,0,0,0,1013,1013,1013,0,0,0,0,0,0,0,0,0,568,568,568,10,10,10,1023,1023,1023,10,10,10,10,10,10,137,137,137,0,0,0,1013,1013,1013,0,0,0,0,0,0,0,0,0,452,452,452,10,10,10,1,1,1,0,0,0,1023,1023,1023,success,success,success,-,1023,1023,1023,103,103,103,10,10,10,883,883,883,0,0,0,1013,1013,1013,0,0,0,0,0,0,0,0,0,20,20,20,10,10,10
pacman5,235,235,235,235,235,235,195,195,195,-,-,-,-,-,-,40,40,40,0,0,0,0,0,0,0,0,0,180,180,180,199,199,199,235,235,235,199,199,199,195,195,195,-,-,-,-,-,-,40,40,40,0,0,0,0,0,0,0,0,0,180,180,180,199,199,199,0,0,0,0,0,0,235,235,235,success,success,success,-,235,235,235,235,235,235,195,195,195,-,-,-,-,-,-,40,40,40,0,0,0,0,0,0,0,0,0,0,0,0,199,199,199
pnueli-zuck-5-live,4096,4096,4096,6094,6094,6094,4096,4096,4096,0,0,0,13,13,13,303427,303427,303427,12,12,12,4,4,4,35,35,34,307518,307518,307518,4096,4096,4096,4096,4096,4096,4096,4096,4096,4096,4096,4096,0,0,0,13,13,13,303427,303427,303427,12,12,12,2395,2395,2395,1,1,1,305127,305127,305127,4096,4096,4096,150,152,149,0,0,0,307523,307523,307523,success,success,success,-,4096,4096,4096,6094,6094,6094,4096,4096,4096,0,0,0,13,13,13,303427,303427,303427,12,12,12,4,4,4,34,35,34,307518,307518,307518,4096,4096,4096
pnueli3-live,64,64,64,202,202,202,64,64,64,0,0,0,0,0,0,1885,1885,1885,0,0,0,4,4,4,0,0,0,1944,1944,1944,64,64,64,64,64,64,64,64,64,64,64,64,0,0,0,0,0,0,1885,1885,1885,0,0,0,71,71,71,0,0,0,1877,1877,1877,64,64,64,1,1,1,0,0,0,1949,1949,1949,success,success,success,-,64,64,64,202,202,202,64,64,64,0,0,0,0,0,0,1885,1885,1885,0,0,0,4,4,4,0,0,0,1944,1944,1944,64,64,64
rabin3-live,1088,1088,1088,794,794,794,384,384,384,0,0,0,0,0,0,704,704,704,0,0,0,0,0,0,0,0,0,1077,1077,1077,741,741,741,1088,1088,1088,741,741,741,384,384,384,0,0,0,0,0,0,704,704,704,0,0,0,0,0,0,0,0,0,561,561,561,741,741,741,0,0,0,0,0,0,1088,1088,1088,success,success,success,-,1088,1088,1088,794,794,794,384,384,384,1,1,1,0,0,0,704,704,704,0,0,0,0,0,0,0,0,0,1057,1057,1057,741,741,741
wlan3-0-cost-max,7,7,7,89857,89869,89857,83972,83972,83972,-,-,-,-,-,-,12330,12330,12330,2,2,2,4897,4890,4897,1,1,1,30239,30264,30239,68957,68957,68957,7,7,7,68957,68957,68957,83972,83972,83972,-,-,-,-,-,-,12330,12330,12330,2,2,2,5795,5807,5795,1,1,1,30239,30264,30239,68957,68957,68957,11,11,11,0,0,0,96302,96302,96302,success,success,success,-,7,7,7,89857,89869,89857,83972,83972,83972,-,-,-,-,-,-,12330,12330,12330,2,2,2,5545,5541,5545,2,2,2,9155,9051,9155,68957,68957,68957
wlan3-0-cost-min,7,7,7,90407,90761,90407,83972,83972,83972,0,0,0,1,1,1,12330,12330,12330,2,2,2,11,11,11,2,2,2,96238,96238,96238,68957,68957,68957,7,7,7,68957,68957,68957,83972,83972,83972,0,0,0,1,1,1,12330,12330,12330,2,2,2,12,12,12,0,0,0,96236,96236,96236,68957,68957,68957,14,14,14,0,0,0,96302,96302,96302,success,success,success,-,7,7,7,90407,90761,90407,83972,83972,83972,0,0,0,1,1,1,12330,12330,12330,2,2,2,21,21,21,2,2,2,96178,96178,96178,68957,68957,68957
wlan4-0-cost-max,9,9,9,331807,331961,331807,311003,311003,311003,-,-,-,-,-,-,33997,33997,33997,6,6,6,10000,10000,10000,5,5,5,102211,102211,102211,249897,249897,249897,9,9,9,249897,249897,249897,311003,311003,311003,-,-,-,-,-,-,33997,33997,33997,6,6,6,12082,12082,12082,2,3,3,102211,102211,102211,249897,249897,249897,39,40,40,0,0,0,345000,345000,345000,success,success,success,-,9,9,9,331807,331961,331807,311003,311003,311003,-,-,-,-,-,-,33997,33997,33997,6,6,6,10791,10687,10791,5,6,6,25717,25717,25717,249897,249897,249897
wlan4-0-cost-min,9,9,9,333169,333853,333169,311003,311003,311003,0,0,0,2,2,2,33997,33997,33997,6,6,6,11,11,11,8,8,8,344936,344936,344936,249897,249897,249897,9,9,9,249897,249897,249897,311003,311003,311003,0,0,0,2,2,2,33997,33997,33997,6,6,6,12,12,12,1,1,1,344934,344934,344934,249897,249897,249897,49,50,50,0,0,0,345000,345000,345000,success,success,success,-,9,9,9,333169,333853,333169,311003,311003,311003,0,0,0,2,2,2,33997,33997,33997,6,6,6,21,21,21,8,8,8,344876,344876,344876,249897,249897,249897
wlandl0-80-deadline,117876,117876,117876,186506,186506,186506,140065,140065,140065,-,-,-,-,-,-,49638,49638,49638,3,3,3,478,478,478,2,2,2,138867,138867,138867,126234,126234,126234,117876,117876,117876,126234,126234,126234,140065,140065,140065,-,-,-,-,-,-,49638,49638,49638,3,3,3,3517,3517,3517,1,1,1,138867,138867,138867,126234,126234,126234,19,19,19,0,0,0,189703,189703,189703,success,success,success,-,117876,117876,117876,186506,186506,186506,140065,140065,140065,-,-,-,-,-,-,49638,49638,49638,3,3,3,727,727,727,3,3,3,62571,62571,62571,126234,126234,126234
wlandl1-80-deadline,313182,313182,313182,446782,446782,446782,354173,354173,354173,-,-,-,-,-,-,96454,96454,96454,7,7,7,478,478,478,5,5,5,293641,293641,293641,309376,309376,309376,313182,313182,313182,309376,309376,309376,354173,354173,354173,-,-,-,-,-,-,96454,96454,96454,7,7,7,4260,4260,4260,2,2,2,293641,293641,293641,309376,309376,309376,45,46,45,0,0,0,450627,450627,450627,success,success,success,-,313182,313182,313182,446782,446782,446782,354173,354173,354173,-,-,-,-,-,-,96454,96454,96454,7,7,7,727,727,727,7,7,7,142272,142272,142272,309376,309376,309376
zeroconf-1000-2-false-correctmax,-,29186,29189,-,56356,56351,-,22875,22875,-,1,1,-,1,1,-,65983,65983,-,1,2,-,328,328,-,3,3,-,87540,87540,-,24338,24338,-,29186,29189,-,24338,24338,-,22875,22875,-,1,1,-,1,1,-,65983,65983,-,2,1,-,464,464,-,0,0,-,87519,87519,-,24338,24338,-,17,18,-,0,0,88858,88858,88858,error,success,success,precision,-,29186,29189,-,56356,56351,-,22875,22875,-,31,31,-,1,1,-,65983,65983,-,2,2,-,732,732,-,3,3,-,85992,85992,-,24338,24338
zeroconf-1000-2-false-correctmin,-,29186,29189,-,49921,48515,-,22875,22875,-,-,-,-,-,-,-,65983,65983,-,2,1,-,3,4,-,2,2,-,88397,88411,-,24338,24338,-,29186,29189,-,24338,24338,-,22875,22875,-,-,-,-,-,-,-,65983,65983,-,2,1,-,32,27,-,0,0,-,88397,88411,-,24338,24338,-,11,11,-,0,0,88858,88858,88858,error,success,success,precision,-,29186,29189,-,49921,48515,-,22875,22875,-,-,-,-,-,-,-,65983,65983,-,2,1,-,308,331,-,2,2,-,85158,86370,-,24338,24338
zeroconf-1000-4-false-correctmax,-,93317,93323,-,188525,188509,-,82336,82336,-,73,73,-,4,4,-,224249,224249,-,6,5,-,2108,2108,-,11,11,-,300571,300571,-,82170,82170,-,93317,93323,-,82170,82170,-,82336,82336,-,73,73,-,4,4,-,224249,224249,-,5,5,-,2844,2844,-,1,1,-,300456,300456,-,82170,82170,-,62,61,-,0,0,306585,306585,306585,error,success,success,precision,-,93317,93323,-,188525,188509,-,82336,82336,-,1093,1093,-,4,4,-,224249,224249,-,5,5,-,3959,3959,-,11,11,-,292348,292348,-,82170,82170
zeroconf-1000-4-false-correctmin,-,93317,93323,-,199768,182670,-,82336,82336,-,-,-,-,-,-,-,224249,224249,-,6,6,-,3,1,-,6,6,-,305840,305876,-,82170,82170,-,93317,93323,-,82170,82170,-,82336,82336,-,-,-,-,-,-,-,224249,224249,-,5,5,-,50,47,-,1,1,-,305840,305876,-,82170,82170,-,39,38,-,0,0,306585,306585,306585,error,success,success,precision,-,93317,93323,-,199768,182670,-,82336,82336,-,-,-,-,-,-,-,224249,224249,-,6,5,-,3614,1562,-,6,6,-,255291,286602,-,82170,82170
zeroconf-20-2-false-correctmax,29186,29186,29192,56362,56365,56362,22875,22875,22875,1,1,1,1,1,1,65983,65983,65983,2,1,2,355,355,355,3,3,3,87479,87479,87479,24338,24338,24338,29186,29186,29192,24338,24338,24338,22875,22875,22875,1,1,1,1,1,1,65983,65983,65983,1,2,1,496,496,496,0,0,0,87454,87454,87454,24338,24338,24338,17,18,17,0,0,0,88858,88858,88858,success,success,success,-,29186,29186,29192,56362,56365,56362,22875,22875,22875,31,31,31,1,1,1,65983,65983,65983,2,2,2,732,732,732,3,3,3,85992,85992,85992,24338,24338,24338
zeroconf-20-2-false-correctmin,29186,29186,29192,50507,53349,50507,22875,22875,22875,-,-,-,-,-,-,65983,65983,65983,1,2,1,1,3,1,2,2,2,88413,88399,88413,24338,24338,24338,29186,29186,29192,24338,24338,24338,22875,22875,22875,-,-,-,-,-,-,65983,65983,65983,1,2,1,25,28,25,0,0,0,88413,88399,88413,24338,24338,24338,11,11,11,0,0,0,88858,88858,88858,success,success,success,-,29186,29186,29192,50507,53349,50507,22875,22875,22875,-,-,-,-,-,-,65983,65983,65983,1,2,1,354,707,354,2,2,2,85251,79983,85251,24338,24338,24338
zeroconf-20-4-false-correctmax,93317,93317,93329,188752,188753,188752,82336,82336,82336,36,36,36,4,4,4,224249,224249,224249,5,6,5,1437,1437,1437,11,11,11,302164,302164,302164,82170,82170,82170,93317,93317,93329,82170,82170,82170,82336,82336,82336,36,36,36,4,4,4,224249,224249,224249,5,5,5,1910,1910,1910,1,1,1,302062,302062,302062,82170,82170,82170,60,63,61,0,0,0,306585,306585,306585,success,success,success,-,93317,93317,93329,188752,188753,188752,82336,82336,82336,513,513,513,4,4,4,224249,224249,224249,5,6,5,2717,2717,2717,11,11,11,296553,296553,296553,82170,82170,82170
zeroconf-20-4-false-correctmin,-,93317,93329,-,242165,217779,-,82336,82336,-,-,-,-,-,-,-,224249,224249,-,6,5,-,2,1,-,6,5,-,305842,305877,-,82170,82170,-,93317,93329,-,82170,82170,-,82336,82336,-,-,-,-,-,-,-,224249,224249,-,5,5,-,47,46,-,1,1,-,305842,305877,-,82170,82170,-,41,39,-,0,0,306585,306585,306585,error,success,success,precision,-,93317,93329,-,242165,217779,-,82336,82336,-,-,-,-,-,-,-,224249,224249,-,6,5,-,34433,39296,-,8,8,-,82695,120081,-,82170,82170
zeroconfdl-1000-1-false-20-deadlinemax,27405,27405,27600,34925,34925,34925,18040,18040,18040,17,17,17,1,1,1,25907,25907,25907,1,1,1,359,359,358,1,2,1,39740,39740,39740,17986,17986,17986,27405,27405,27600,17986,17986,17986,18040,18040,18040,22,22,22,1,1,1,25907,25907,25907,1,1,1,614,614,609,0,0,0,37889,37889,37889,17986,17986,17986,8,8,8,0,0,0,43947,43947,43947,success,success,success,-,27405,27405,27600,34925,34925,34925,18040,18040,18040,641,641,641,1,1,1,25907,25907,25907,1,1,1,749,749,748,1,2,1,31648,31648,31648,17986,17986,17986
zeroconfdl-1000-1-false-20-deadlinemin,27405,27405,27600,38207,38207,38207,18040,18040,18040,-,-,-,-,-,-,25907,25907,25907,1,1,1,554,554,554,1,1,1,39038,39038,39038,17986,17986,17986,27405,27405,27600,17986,17986,17986,18040,18040,18040,-,-,-,-,-,-,25907,25907,25907,1,1,1,614,614,614,0,0,0,39038,39038,39038,17986,17986,17986,5,5,5,0,0,0,43947,43947,43947,success,success,success,-,27405,27405,27600,38207,38207,38207,18040,18040,18040,-,-,-,-,-,-,25907,25907,25907,1,1,1,838,838,838,1,1,1,35341,35341,35341,17986,17986,17986
zeroconfdl-1000-1-false-50-deadlinemax,155738,155738,156341,250073,250073,250073,101663,101663,101663,18,18,18,5,5,5,279018,279018,279018,7,7,7,1890,1890,1889,13,14,13,371991,371991,371991,116170,116170,116170,155738,155738,156341,116170,116170,116170,101663,101663,101663,22,22,22,5,5,5,279018,279018,279018,7,7,7,3134,3134,3129,1,1,1,369543,369543,369543,116170,116170,116170,75,78,75,0,0,0,380681,380681,380681,success,success,success,-,155738,155738,156341,250073,250073,250073,101663,101663,101663,701,701,701,5,5,5,279018,279018,279018,7,7,7,4229,4229,4228,13,14,13,357797,357797,357797,116170,116170,116170
zeroconfdl-1000-1-false-50-deadlinemin,155738,155738,156341,326528,326528,326528,101663,101663,101663,-,-,-,-,-,-,279018,279018,279018,7,7,7,991,991,991,7,7,7,369439,369439,369439,116170,116170,116170,155738,155738,156341,116170,116170,116170,101663,101663,101663,-,-,-,-,-,-,279018,279018,279018,7,7,7,1731,1731,1731,1,1,1,369439,369439,369439,116170,116170,116170,48,49,48,0,0,0,380681,380681,380681,success,success,success,-,155738,155738,156341,326528,326528,326528,101663,101663,101663,-,-,-,-,-,-,279018,279018,279018,7,7,7,19102,19102,19102,8,9,8,188897,188897,188897,116170,116170,116170
\end{filecontents*}
\csvreader[
				late after line=\\,
				late after last line=\\,]
				{results/ablation-results/state_statistics.csv}{}
				{\compStateStatisticsPermissive}
		\end{tabular}}
	\end{adjustbox}
\end{table*}

\begin{table*}[t]
	\caption{
		Comparison of relevance heuristics for the \datasetCombined{} dataset.
	}\label{tab:agg-dataset-combined}%
	\centering%
	\setlength{\tabcolsep}{5pt}%
	\sisetup{table-format = 2.2\%, table-alignment-mode = format, table-auto-round}%
	\begin{adjustbox}{angle=270}%
		\resizebox*{.9\textheight}{!}{%
			\footnotesize
			\begin{tabular}{l|rrrrrrrrrr}
				Name & Model Size & Final $\varepsilon=0$ & Final $\varepsilon=10^{-6}$ & Final $\varepsilon=10^{-2}$ & \agency $\varepsilon=0$ & \agency $\varepsilon=10^{-6}$ & \agency$\varepsilon = 10^{-2}$ & \allSafe & \dominance & \unreach  \\
				\midrule
				\csvreader[
				late after line=\\,
				late after last line=\\,]
				{results/ablation-results/state_statistics.csv}{}
				{\compStateStatisticsCombined}
		\end{tabular}}
	\end{adjustbox}
\end{table*}

\begin{table*}[t]
	\caption{
		Comparison of relevance heuristics for the \datasetController{} dataset.
	}\label{tab:agg-dataset-controller}%
	\centering%
	\setlength{\tabcolsep}{5pt}%
	\sisetup{table-format = 2.2\%, table-alignment-mode = format, table-auto-round}%
	\begin{adjustbox}{angle=270}%
		\resizebox*{.9\textheight}{!}{%
			\footnotesize
			\begin{tabular}{l|rrrrrrrrrr}
				Name & Model Size & Final $\varepsilon=0$ & Final $\varepsilon=10^{-6}$ & Final $\varepsilon=10^{-2}$ & \agency $\varepsilon=0$ & \agency $\varepsilon=10^{-6}$ & \agency$\varepsilon = 10^{-2}$ & \allSafe & \dominance & \unreach  \\
				\midrule
				\csvreader[
				late after line=\\,
				late after last line=\\,]
				{results/ablation-results/state_statistics.csv}{}
				{\compStateStatisticsController}
		\end{tabular}}
	\end{adjustbox}
\end{table*}

\subsubsection{Results on DT Sizes}\label{app:subsec_tree_sizes}
\cref{tab:tree-size} shows the sizes of the computed DTs.
We highlight the columns for \dtpaynt.
While the tool aims to provide the smallest possible DTs, we found provably smaller ones, suggesting either an internal timeout during some steps of the computation or a bug in the code.

Note that for several entries with $\varepsilon=0$, \dte does not produce a result, indicated by a \enquote{dash}.
This is because for these, the controller we obtained through a model checking query to \storm was not optimal, as confirmed through a second model checking query; it also differed by more than the float precision of $10^{-14}$ from the claimed value of the first query.
Even though we set \storm to sound solving, we were not able to avoid such cases. 
In these cases, \dte aborts.
Even though we set \storm to sound solving, we were not able to avoid such cases. 

The comparison to CAV15 in \cref{fig:app_1e-6comp} shows that the heuristics of the portfolio are, at least for $\varepsilon=10^{-6}$, often incomparable to CAV15.
However, as has been shown before in \cref{fig:treesize}, their combination outperforms CAV15.
Furthermore, CAV15 is not guaranteed to provide results as it is aggressive.

We compare the size of DTs produced by our portfolio and \dtnest in \cref{fig:app_dtnest_size_comp}. 
For this plot, we ran the portfolio with an overall timeout of 10 minutes in addition to 5 minutes for model solving and 5 minutes for pruning.
We start with \datasetCombined+\wagency, continue with \datasetPermissive+\wmessup, and, if time permits, finish with \datasetController+\wsimulation.
\dtnest had a total time budget of 20 minutes.
For $\varepsilon=0$, we can see the computation errors for \dte mentioned above. 
Generally, especially for larger values of $\varepsilon$, \dte outperforms \dtnest.

\begin{table*}[tp]
	\caption{
		Concrete sizes of the DTs obtained through the portfolio methods for individual benchmarks (Step~3, \cref{sec:step3}), also considering different precision requirements. A \enquote{-} denotes a failure to finish.
	}\label{tab:tree-size}\centering%
	\begin{adjustbox}{angle=270}%
		\resizebox*{.9\textheight}{!}{%
			\setlength{\tabcolsep}{5pt}%
			\begin{tabular}{>{\scriptsize}lrrrrrrrrr||rrrrrrrrrrrrr}
				Name & \multicolumn{3}{c}{\wsimulation, \datasetController} & \multicolumn{3}{c}{\wagency, \datasetCombined} & \multicolumn{3}{c}{\wmessup, \datasetPermissive} & \multicolumn{3}{c}{Our Best}& \multicolumn{3}{c}{CAV15} & \dtcontrol & \multicolumn{3}{c}{\dtnest} & \multicolumn{3}{c}{\dtpaynt}\\
				\bfseries & {$0$} & {$10^{-6}$} & {$10^{-2}$} & {$0$} & {$10^{-6}$} & {$10^{-2}$} & {$0$} & {$10^{-6}$} & {$10^{-2}$}  & {$0$} & {$10^{-6}$} & {$10^{-2}$} & {$0$} & {$10^{-6}$} & {$10^{-2}$} & & {$0$} & {$10^{-6}$} & {$10^{-2}$} & {$0$} & {$10^{-6}$} & {$10^{-2}$} \\
				\midrule
				\begin{filecontents*}{results/ablation-results/final_tree_nodes_with_dtn_dtp.csv}
benchmark,cav15-0.000000,cav15-0.000001,cav15-0.010000,ours-combined-agency-simple-0.000000,ours-combined-agency-simple-0.000001,ours-combined-agency-simple-0.010000,ours-combined-cost-of-error-simple-0.000000,ours-combined-cost-of-error-simple-0.000001,ours-combined-cost-of-error-simple-0.010000,ours-combined-simulation-simple-0.000000,ours-combined-simulation-simple-0.000001,ours-combined-simulation-simple-0.010000,ours-combined-uniform-simple-0.000000,ours-combined-uniform-simple-0.000001,ours-combined-uniform-simple-0.010000,ours-controller-agency-simple-0.000000,ours-controller-agency-simple-0.000001,ours-controller-agency-simple-0.010000,ours-controller-cost-of-error-simple-0.000000,ours-controller-cost-of-error-simple-0.000001,ours-controller-cost-of-error-simple-0.010000,ours-controller-simulation-simple-0.000000,ours-controller-simulation-simple-0.000001,ours-controller-simulation-simple-0.010000,ours-controller-uniform-simple-0.000000,ours-controller-uniform-simple-0.000001,ours-controller-uniform-simple-0.010000,ours-permissive-agency-simple-0.000000,ours-permissive-agency-simple-0.000001,ours-permissive-agency-simple-0.010000,ours-permissive-cost-of-error-simple-0.000000,ours-permissive-cost-of-error-simple-0.000001,ours-permissive-cost-of-error-simple-0.010000,ours-permissive-simulation-simple-0.000000,ours-permissive-simulation-simple-0.000001,ours-permissive-simulation-simple-0.010000,ours-permissive-uniform-simple-0.000000,ours-permissive-uniform-simple-0.000001,ours-permissive-uniform-simple-0.010000,portfolio-0.000000,portfolio-0.000001,portfolio-0.010000,vanilla-0.000000,virtual-best-0.000000,virtual-best-0.000001,virtual-best-0.010000,dtnest-11-0,dtnest-7-0,dtnest-9-0,dtnest-11-1e-2,dtnest-7-1e-2,dtnest-9-1e-2,dtnest-11-1e-6,dtnest-7-1e-6,dtnest-9-1e-6,dtpaynt-128-0,dtpaynt-16-0,dtpaynt-128-1e-2,dtpaynt-16-1e-2,dtpaynt-128-1e-6,dtpaynt-16-1e-6
consensus4-4-c2,err,107,107,err,31,23,err,33,25,err,55,25,err,31,23,err,31,23,err,31,25,err,59,25,err,31,23,err,31,25,err,31,25,err,31,25,err,31,25,-,31,23,err,-,31,23,-,-,-,-,-,-,-,-,-,-,-,-,-,-,-
consensus4-4-disagree,err,err,1479,err,1561,895,err,1997,315,err,1769,267,err,1585,1163,err,1545,839,err,2059,2059,err,1883,197,err,1651,999,err,741,639,err,829,267,err,983,469,err,751,547,-,829,197,err,-,741,197,-,-,-,-,-,-,-,-,-,-,-,-,-,-,-
consensus4-4-steps-max,err,341,341,err,231,231,err,265,265,err,277,235,err,233,233,err,241,241,err,277,277,err,271,147,err,239,239,err,339,339,err,241,241,err,381,469,err,247,331,-,231,147,err,-,231,147,-,-,-,-,-,-,-,-,-,-,-,-,-,-,-
consensus4-4-steps-min,err,1021,1021,err,767,737,err,887,887,err,895,499,err,767,737,err,795,711,err,897,897,err,929,575,err,801,717,err,2067,2067,err,1785,435,err,2055,1931,err,2067,2067,-,767,435,err,-,767,435,-,-,-,-,-,-,-,-,-,-,-,-,-,-,-
csma2-6-all-before-max,105,105,105,1,1,1,1,1,1,1,1,1,1,1,1,1,1,1,1,1,1,1,1,1,1,1,1,1,1,1,1,1,1,1,1,1,1,1,1,1,1,1,133,1,1,1,29,29,29,29,29,29,29,29,29,-,-,1,1,1,1
csma2-6-time-max,107,107,107,5,5,3,5,5,3,11,11,9,5,5,3,5,5,3,5,5,3,11,11,9,5,5,3,5,5,3,5,5,3,5,5,3,5,5,3,5,5,3,135,5,5,3,25,25,25,5,5,5,5,5,5,-,-,5,5,5,5
csma2-6-time-min,83,83,83,5,5,3,5,5,3,11,11,9,5,5,3,5,5,3,5,5,3,11,11,9,5,5,3,5,5,3,5,5,3,5,5,3,5,5,3,5,5,3,125,5,5,3,25,25,25,7,7,7,7,7,7,-,-,5,5,11,11
eajs3-150-exputil,59,59,59,13,13,13,13,13,13,13,13,11,13,13,13,13,13,13,13,13,13,13,13,11,13,13,13,25,25,25,21,21,25,15,19,15,25,25,23,13,13,11,467,13,13,11,75,75,75,75,75,75,75,75,75,-,-,-,-,-,-
firewire-dl-3-400-deadline,15,15,15,9,9,9,7,7,7,9,9,9,11,11,11,9,9,9,7,7,7,9,9,9,11,11,11,9,9,9,9,9,9,13,13,11,9,9,9,9,9,9,57,7,7,7,37,37,37,9,9,9,9,9,9,-,-,41,41,41,41
firewire-dl-3-600-deadline,15,15,15,13,13,13,13,13,11,13,13,11,15,15,13,13,13,13,13,13,11,13,13,9,15,15,11,19,19,7,19,19,7,19,19,9,19,19,7,13,13,7,57,13,13,7,35,35,35,13,15,15,13,15,15,-,-,27,27,27,27
firewire-dl-36-200-deadline,13,13,13,13,13,13,11,11,11,13,13,11,13,13,13,13,13,13,11,11,11,13,11,13,13,13,13,17,17,17,17,17,17,17,17,17,17,17,17,13,11,13,57,11,11,11,23,23,23,13,13,13,13,13,13,-,-,-,-,-,-
firewire-dl-36-400-deadline,21,21,23,13,13,13,13,13,13,15,15,11,13,13,13,13,13,13,13,13,13,15,15,11,13,13,13,19,19,19,19,19,17,19,19,17,19,19,17,13,13,11,57,13,13,11,-,-,-,15,15,15,15,15,15,-,-,63,-,63,-
firewire-dl-36-600-deadline,13,13,13,13,13,13,11,11,11,11,13,13,13,13,13,13,13,13,11,11,11,13,13,13,13,13,13,17,17,17,17,17,17,17,17,17,17,17,17,13,13,13,57,11,11,11,23,23,23,13,13,13,13,13,13,-,-,-,-,-,-
firewire-false-36-400-timemax,err,51,51,err,19,19,err,9,9,err,21,21,err,19,19,err,19,19,err,9,9,err,21,21,err,19,19,err,15,13,err,9,7,err,19,19,err,17,17,-,9,7,err,-,9,7,-,17,17,17,17,17,21,21,21,-,-,25,-,25,-
firewire-false-36-400-timemin,31,31,25,21,21,9,17,17,13,21,21,19,27,27,23,21,21,9,17,17,11,21,21,19,27,27,23,13,13,9,13,13,7,11,11,9,13,13,5,13,13,7,12913,11,11,5,-,31,-,-,-,-,-,-,-,-,-,29,-,29,-
ij10,753,753,753,1,1,1,1,1,1,1,1,1,1,1,1,1,1,1,1,1,1,1,1,1,1,1,1,1,1,1,1,1,1,1,1,1,1,1,1,1,1,1,1317,1,1,1,719,719,719,719,719,719,719,719,719,-,-,1,1,1,1
pacman5,1,1,1,1,1,1,1,1,1,1,1,1,1,1,1,1,1,1,1,1,1,1,1,1,1,1,1,1,1,1,1,1,1,1,1,1,1,1,1,1,1,1,15,1,1,1,3,3,3,3,3,3,3,3,3,-,-,1,1,1,1
pnueli-zuck-5-live,2673,2673,2673,1,1,1,1,1,1,1,1,1,1,1,1,2033,2033,1865,2037,2037,1869,1319,1319,1149,2033,2033,1865,1,1,1,1,1,1,1,1,1,1,1,1,1,1,1,171373,1,1,1,-,-,-,-,-,-,-,-,-,-,-,1,1,1,1
pnueli3-live,99,99,99,1,1,1,1,1,1,1,1,1,1,1,1,91,91,91,93,93,93,89,89,89,91,91,91,1,1,1,1,1,1,1,1,1,1,1,1,1,1,1,1729,1,1,1,97,97,97,97,97,97,97,97,97,-,-,1,1,1,1
rabin3-live,119,119,119,1,1,1,1,1,1,1,1,1,1,1,1,1,1,1,1,1,1,1,1,1,1,1,1,1,1,1,1,1,1,1,1,1,1,1,1,1,1,1,113,1,1,1,33,33,33,33,33,33,33,33,33,-,-,1,1,1,1
wlan3-0-cost-max,err,221,221,279,279,227,275,275,275,231,231,231,211,211,211,281,281,233,277,277,277,233,233,233,213,213,213,119,119,119,157,157,157,129,143,129,135,135,135,157,157,157,775,119,119,119,325,325,325,-,-,-,-,101,111,-,-,-,-,-,-
wlan3-0-cost-min,41,41,35,13,13,9,15,15,11,13,13,11,13,13,11,17,17,9,17,17,13,15,15,13,15,15,13,17,17,11,19,19,11,13,13,11,13,13,9,13,13,9,633,13,13,9,13,13,13,9,9,9,13,13,13,-,-,13,13,13,13
wlan4-0-cost-max,err,149,149,397,397,381,399,399,399,611,611,611,253,253,253,399,399,383,401,401,401,613,613,613,255,255,255,109,103,103,245,245,245,107,103,121,103,103,103,245,245,245,1263,103,103,103,-,-,-,-,-,-,-,-,-,-,-,-,-,-,-
wlan4-0-cost-min,41,41,35,13,13,9,15,15,11,13,13,11,13,13,11,17,17,9,17,17,13,15,15,13,15,15,13,17,17,11,19,19,11,17,17,11,13,13,9,13,13,9,1053,13,13,9,13,13,13,9,9,9,13,13,13,-,-,15,15,15,15
wlandl0-80-deadline,TO,TO,TO,539,539,539,523,523,523,531,519,533,533,533,533,501,501,501,523,523,523,515,547,541,541,541,533,61,61,57,73,73,23,63,63,67,59,59,27,73,73,23,3233,59,59,23,617,617,617,-,-,-,-,139,-,-,-,-,-,-,-
wlandl1-80-deadline,TO,TO,TO,575,575,557,511,511,511,577,591,95,563,563,553,559,559,547,511,511,511,551,519,519,563,563,557,61,61,51,73,73,23,69,67,29,61,61,35,73,73,23,4035,61,61,23,-,659,659,-,-,-,-,-,229,-,-,-,-,-,-
zeroconf-1000-2-false-correctmax,err,err,77,err,77,1,err,73,1,err,81,1,err,17,1,err,71,1,err,71,1,err,67,1,err,15,1,err,19,1,err,17,1,err,17,1,err,15,1,-,17,1,err,-,15,1,111,111,111,-,-,-,-,-,-,-,-,-,-,-,-
zeroconf-1000-2-false-correctmin,err,23,1,err,3,1,err,1,1,err,5,1,err,5,1,err,3,1,err,1,1,err,5,1,err,5,1,err,9,1,err,1,1,err,7,1,err,7,1,-,1,1,err,-,1,1,9,9,9,-,-,-,-,-,-,-,-,-,-,-,-
zeroconf-1000-4-false-correctmax,err,65,65,err,139,117,err,67,53,err,161,1,err,161,135,err,175,145,err,69,57,err,379,1,err,169,169,err,9,1,err,9,1,err,87,1,err,59,55,-,9,1,err,-,9,1,-,509,509,-,-,-,-,-,-,-,-,-,-,-,-
zeroconf-1000-4-false-correctmin,err,err,1,err,3,1,err,1,1,err,5,1,err,5,1,err,3,1,err,1,1,err,5,1,err,5,1,err,25,25,err,61,61,err,61,61,err,61,61,-,3,1,err,-,1,1,9,9,9,-,-,-,-,-,-,-,-,-,-,-,-
zeroconf-20-2-false-correctmax,err,err,1,93,11,1,105,9,1,65,9,1,65,13,1,91,11,1,109,9,1,73,61,61,73,61,61,19,9,1,17,9,1,17,11,1,17,11,1,17,9,1,1373,17,9,1,-,127,127,-,-,-,-,-,55,-,-,-,-,-,-
zeroconf-20-2-false-correctmin,err,err,1,3,3,1,1,1,1,5,3,1,5,3,1,3,3,1,1,1,1,5,3,1,5,3,1,11,11,1,1,1,1,11,11,1,11,11,1,1,1,1,7737,1,1,1,9,9,9,-,-,-,-,-,-,-,-,-,-,-,-
zeroconf-20-4-false-correctmax,err,3,1,241,71,71,253,1,1,91,91,91,91,91,91,237,69,69,255,1,1,219,1,1,135,135,135,111,1,1,109,1,1,31,19,19,31,19,19,109,1,1,4553,31,1,1,303,303,303,-,-,-,-,-,-,-,-,-,-,-,-
zeroconf-20-4-false-correctmin,err,1,1,err,1,1,err,1,1,err,1,1,err,1,1,err,1,1,err,1,1,err,1,1,err,1,1,err,2143,2143,err,933,933,err,1961,1961,err,1913,1913,-,1,1,err,-,1,1,9,9,9,-,-,-,-,-,-,-,-,-,-,-,-
zeroconfdl-1000-1-false-20-deadlinemax,err,err,113,101,101,65,103,103,1,109,109,1,57,57,55,107,107,63,125,125,1,115,115,1,69,69,73,21,19,1,17,17,1,51,51,1,21,21,1,17,17,1,1553,17,17,1,-,303,303,-,-,-,-,-,-,-,-,-,-,-,-
zeroconfdl-1000-1-false-20-deadlinemin,19,9,1,13,3,1,17,5,1,15,5,1,13,3,1,15,3,1,17,5,1,15,7,1,13,5,1,13,9,1,15,15,1,19,5,1,13,5,1,13,3,1,823,13,3,1,35,35,35,-,-,-,-,-,-,-,-,-,-,-,-
zeroconfdl-1000-1-false-50-deadlinemax,err,err,107,171,171,105,187,187,1,175,175,1,115,115,109,205,205,117,205,205,1,195,195,1,127,127,117,51,51,1,37,37,1,57,57,1,23,23,1,37,37,1,5045,23,23,1,-,389,389,-,-,-,-,-,-,-,-,-,-,-,-
zeroconfdl-1000-1-false-50-deadlinemin,13,15,9,1,1,1,1,1,1,5,5,1,3,3,1,3,3,1,1,1,1,19,19,1,5,5,1,1,1,1,1,1,1,65,65,65,65,65,65,1,1,1,2215,1,1,1,35,35,35,-,-,-,-,-,-,-,-,-,-,-,-
\end{filecontents*}
\csvreader[
				late after line=\\,
				late after last line=\\,]
				{results/ablation-results/final_tree_nodes_with_dtn_dtp.csv}{}
				{\compTabletreesize}
		\end{tabular}}
	\end{adjustbox}
\end{table*}

\begin{figure}[tp]
	\centering
	\setlength{\scatterplotsize}{0.42\textwidth}
	\scatterplottrees{results/ablation-results/final_tree_nodes_scatter_with_dtnest.csv}{dtnest-7-0}{\dtnest $\varepsilon=0$ Size}{virtual-best-0.000000}{Virtual Best $\varepsilon=0$}{false}
	\scatterplottrees{results/ablation-results/final_tree_nodes_scatter_with_dtnest.csv}{dtnest-7-0}{\dtnest $\varepsilon=0$ Size}{virtual-best-0.000001}{Virtual Best $\varepsilon=10^{-6}$}{false}
	\scatterplottrees{results/ablation-results/final_tree_nodes_scatter_with_dtnest.csv}{dtnest-7-0}{\dtnest $\varepsilon=0$ Size}{virtual-best-0.010000}{Virtual Best $\varepsilon=10^{-2}$}{false}
	
	\caption{DT sizes comparing \dtnest with the best of \dte.}
	\label{fig:app_virtualbest-dtnest}
\end{figure}

\begin{figure}[tp]
	\centering
	\setlength{\scatterplotsize}{0.42\textwidth}
	\scatterplottrees{results/ablation-results/final_tree_nodes_scatter.csv}{cav15-0.000001}{CAV15}{ours-controller-simulation-simple-0.000001}{\wsimulation, \datasetController}{false}
	\scatterplottrees{results/ablation-results/final_tree_nodes_scatter.csv}{cav15-0.000001}{CAV15}{ours-combined-agency-simple-0.000001}{\wagency, \datasetCombined}{false}
	\scatterplottrees{results/ablation-results/final_tree_nodes_scatter.csv}{cav15-0.000001}{CAV15}{ours-permissive-cost-of-error-simple-0.000001}{\wmessup, \datasetPermissive}{false}
	
	\caption{DT size after Step~3, comparing our portfolio heuristics to CAV15 for $\varepsilon=10^{-6}$.}
	\label{fig:app_1e-6comp}
\end{figure}

\begin{figure}[tp]
	\centering
	\setlength{\scatterplotsize}{0.42\textwidth}
	\scatterplottrees{results/logs-time-portfolio/time_portfolio_with_dtnest.csv}{dtnest-7-size-0}{\dtnest Size}{dteps-size-0}{Portfolio}{false}
	\scatterplottrees{results/logs-time-portfolio/time_portfolio_with_dtnest.csv}{dtnest-7-size-0}{\dtnest Size}{dteps-size-1e-6}{Portfolio}{false}
	\scatterplottrees{results/logs-time-portfolio/time_portfolio_with_dtnest.csv}{dtnest-7-size-0}{\dtnest Size}{dteps-size-1e-2}{Portfolio}{false}
	
	\caption{DT size comparison between \dtnest and the \dte portfolio. While \dtnest always uses $\varepsilon=0$, \dte is shown with $\varepsilon=0$ (top left), $\varepsilon=10^{-6}$ (top right) and $\varepsilon=10^{-2}$ (bottom).}
	\label{fig:app_dtnest_size_comp}
\end{figure}

\subsubsection{Results on Runtime} \label{app:exp-times} 
This section compares runtime across the different configurations implemented in \dte, as well as with \dtnest.

\subsubsection{Comparison of Different Configurations}
For the following experiments, we measured the time within \dtc and excluded the time required for model solving, as it is the same for all methods.
We also exclude the time for pruning, as pruning often tends to exhaust the given budget.
\cref{tab:runtime-full} provides an overview of the runtime observed.
A "-" either reports a timeout, precision error in model checking as described before, or cases where the size of the dataset is 0, meaning that no tree building is required.
We find that in many cases, the time required for tree building is relatively low. For example, with \datasetCombined+\agency{} and $\varepsilon=10^{-6}$, we have only one actual timeout for \texttt{wlan4-0-cost-max} and solve 28 benchmarks in less than one minute.

\cref{fig:model-time} compares the size of the model to the required runtime.
We find no correlation, showing that model structure has a bigger influence than model size.

\begin{figure}[tp]
	\centering%
	\renewcommand{\scatterplotsize}{.5\textwidth}%
	\scatterplottimeToModel{results/ablation-results/tree_fitting_time_to_model_size_0.000001.csv}{modelSize}{model size}{tree-fitting-time}{runtime}{true}%
	\scatterplotSizeToModel{results/ablation-results/final_tree_nodes_to_model_size_0.000001.csv}{modelSize}{model size}{final-tree-nodes}{tree size}{true}%
	\caption{Comparison of runtime and DT size in relationship to the underlying model size for $\varepsilon=10^{-6}$.}
	\label{fig:model-time}
\end{figure}

\begin{table*}[tp]
	\caption{
		Detailed runtime of our considered methods (excluding pruning) per concrete instance (in seconds).
	}\label{tab:runtime-full}%
	\centering%
	\begin{adjustbox}{angle=270}%
		\resizebox*{.9\textheight}{!}{%
			\setlength{\tabcolsep}{5pt}
			\begin{tabular}{>{\scriptsize}lrrrrrrrrr||rrrrrrr}
				Name & \multicolumn{3}{c}{\wsimulation, \datasetController} & \multicolumn{3}{c}{\wagency, \datasetCombined} & \multicolumn{3}{c}{\wmessup, \datasetPermissive} & \multicolumn{3}{c}{Our Best}& \multicolumn{3}{c}{CAV15} & \dtcontrol \\
				\bfseries & {$0$} & {$10^{-6}$} & {$10^{-2}$} & {$0$} & {$10^{-6}$} & {$10^{-2}$} & {$0$} & {$10^{-6}$} & {$10^{-2}$}  & {$0$} & {$10^{-6}$} & {$10^{-2}$} & {$0$} & {$10^{-6}$} & {$10^{-2}$} &\\
				\midrule
				\begin{filecontents*}{results/ablation-results/tree_fitting_time.csv}
benchmark,cav15-0.000000,cav15-0.000001,cav15-0.010000,ours-combined-agency-simple-0.000000,ours-combined-agency-simple-0.000001,ours-combined-agency-simple-0.010000,ours-combined-cost-of-error-simple-0.000000,ours-combined-cost-of-error-simple-0.000001,ours-combined-cost-of-error-simple-0.010000,ours-combined-simulation-simple-0.000000,ours-combined-simulation-simple-0.000001,ours-combined-simulation-simple-0.010000,ours-combined-uniform-simple-0.000000,ours-combined-uniform-simple-0.000001,ours-combined-uniform-simple-0.010000,ours-controller-agency-simple-0.000000,ours-controller-agency-simple-0.000001,ours-controller-agency-simple-0.010000,ours-controller-cost-of-error-simple-0.000000,ours-controller-cost-of-error-simple-0.000001,ours-controller-cost-of-error-simple-0.010000,ours-controller-simulation-simple-0.000000,ours-controller-simulation-simple-0.000001,ours-controller-simulation-simple-0.010000,ours-controller-uniform-simple-0.000000,ours-controller-uniform-simple-0.000001,ours-controller-uniform-simple-0.010000,ours-permissive-agency-simple-0.000000,ours-permissive-agency-simple-0.000001,ours-permissive-agency-simple-0.010000,ours-permissive-cost-of-error-simple-0.000000,ours-permissive-cost-of-error-simple-0.000001,ours-permissive-cost-of-error-simple-0.010000,ours-permissive-simulation-simple-0.000000,ours-permissive-simulation-simple-0.000001,ours-permissive-simulation-simple-0.010000,ours-permissive-uniform-simple-0.000000,ours-permissive-uniform-simple-0.000001,ours-permissive-uniform-simple-0.010000,portfolio-0.000000,portfolio-0.000001,portfolio-0.010000,vanilla-0.000000,virtual-best-0.000000,virtual-best-0.000001,virtual-best-0.010000
consensus4-4-c2,err,0.8,0.8,err,4.7,4.7,err,1.2,1.2,err,6.2,5.3,err,3.5,3.6,err,4.5,4.6,err,1.2,1.2,err,5.9,5.0,err,3.4,3.5,err,3.3,3.3,err,3.3,3.3,err,8.2,2.8,err,3.2,3.2,-,3.3,3.3,err,-,1.2,1.2
consensus4-4-disagree,err,err,1.6,err,24.8,13.5,err,27.7,11.8,err,26.0,12.6,err,24.6,16.2,err,22.5,12.8,err,26.8,26.6,err,25.3,10.2,err,25.0,15.2,err,27.0,20.3,err,30.4,19.3,err,29.8,24.8,err,25.8,20.2,-,24.8,10.2,err,-,22.5,10.2
consensus4-4-steps-max,err,1.1,1.1,err,7.1,6.9,err,11.5,11.6,err,10.7,9.5,err,7.0,7.0,err,7.7,7.6,err,12.3,12.3,err,10.8,3.3,err,7.2,7.0,err,7.2,7.0,err,17.2,17.2,err,7.3,7.3,err,7.0,7.1,-,7.1,3.3,err,-,7.0,3.3
consensus4-4-steps-min,err,1.2,1.2,err,6.9,5.5,err,14.4,14.5,err,13.3,5.2,err,6.6,5.3,err,8.1,5.2,err,10.1,10.4,err,13.0,9.9,err,8.1,5.1,err,13.1,13.3,err,21.4,9.6,err,16.2,13.2,err,13.1,13.2,-,6.9,5.5,err,-,6.6,5.1
csma2-6-all-before-max,1.3,1.3,1.3,-,-,-,-,-,-,-,-,-,-,-,-,-,-,-,-,-,-,-,-,-,-,-,-,-,-,-,-,-,-,-,-,-,-,-,-,-,-,-,2.1,-,-,-
csma2-6-time-max,1.3,1.3,1.3,5.7,5.8,1.4,8.4,8.5,1.4,7.7,7.8,1.3,1.4,1.5,1.5,5.7,5.8,1.4,8.4,8.5,1.4,7.8,7.8,1.3,1.4,1.5,1.5,5.6,5.6,1.4,8.4,8.5,1.4,3.9,4.8,4.7,1.4,1.4,1.4,5.7,5.8,1.3,2.2,1.4,1.4,1.3
csma2-6-time-min,1.2,1.2,1.2,5.7,5.7,1.4,8.3,8.4,1.4,8.6,8.7,1.5,1.4,1.4,1.4,5.7,5.7,1.4,8.3,8.4,1.4,8.6,8.8,1.5,1.4,1.4,1.4,5.5,5.6,1.4,8.2,8.4,1.4,6.9,7.0,6.9,1.4,1.4,1.4,5.7,5.7,1.4,2.1,1.4,1.4,1.4
eajs3-150-exputil,2.5,2.6,2.5,15.9,16.3,15.9,6.5,6.6,6.4,6.5,9.9,3.4,19.1,19.5,19.1,15.9,16.3,15.9,6.5,6.6,6.4,6.5,6.6,3.4,19.1,19.5,19.0,6.6,6.7,6.6,15.9,16.1,15.8,3.7,3.8,3.7,15.8,16.1,16.0,6.5,6.6,3.4,4.9,3.7,3.8,3.4
firewire-dl-3-400-deadline,1.2,1.2,1.2,1.7,1.7,1.8,1.9,1.9,1.9,1.5,1.5,1.5,1.7,1.7,1.8,1.7,1.7,1.8,1.9,1.9,1.9,4.2,1.5,1.5,5.6,5.6,5.7,2.3,2.3,2.3,2.3,2.3,2.3,11.4,9.5,9.9,2.3,2.3,2.3,1.7,1.5,1.5,2.7,1.5,1.5,1.5
firewire-dl-3-600-deadline,2.6,2.7,2.6,4.0,4.1,4.0,21.6,21.9,3.8,20.0,12.2,13.0,4.0,4.1,4.1,12.4,12.6,12.5,19.0,19.1,3.3,20.5,20.6,3.7,12.3,12.3,12.4,4.9,5.0,4.9,4.8,4.9,4.8,4.9,23.9,5.0,4.9,4.9,4.9,4.0,4.1,3.7,7.7,4.0,4.1,3.3
firewire-dl-36-200-deadline,1.1,1.1,1.1,4.8,4.8,4.8,7.8,7.8,7.8,1.5,1.4,6.4,4.3,4.3,4.3,4.8,4.8,4.8,7.8,7.8,7.8,1.4,6.8,1.4,4.3,4.3,4.3,4.9,4.8,4.8,2.0,2.0,2.0,9.1,8.4,9.5,4.9,4.8,4.8,1.4,2.0,1.4,2.2,1.4,1.4,1.4
firewire-dl-36-400-deadline,8.4,8.4,8.5,37.1,38.2,37.5,68.7,71.1,69.6,73.3,78.4,67.0,11.5,11.6,11.6,21.9,22.2,22.0,68.8,71.1,69.7,77.0,78.5,13.3,11.4,11.6,11.4,32.7,33.3,33.0,32.7,33.4,32.9,33.5,89.3,33.8,32.2,32.9,32.5,32.7,33.4,13.3,29.6,11.4,11.6,11.4
firewire-dl-36-600-deadline,1.1,1.1,1.1,4.9,4.8,4.8,7.9,7.9,7.9,6.5,1.5,1.5,4.4,4.4,4.4,4.9,4.8,4.8,7.9,7.9,7.9,1.5,1.5,1.4,4.4,4.3,4.4,4.9,4.9,4.9,2.0,2.0,2.0,8.5,9.6,9.2,4.9,4.9,4.9,1.5,1.5,1.4,2.2,1.5,1.5,1.4
firewire-false-36-400-timemax,err,3.9,4.0,err,12.8,13.1,err,19.6,20.0,err,19.5,19.9,err,15.2,15.7,err,12.9,13.0,err,19.6,20.0,err,19.5,20.2,err,15.2,15.5,err,9.9,10.1,err,13.2,13.5,err,18.7,19.0,err,18.9,19.4,-,12.8,13.1,err,-,9.9,10.1
firewire-false-36-400-timemin,4.1,4.1,4.1,22.4,23.6,23.7,22.4,25.0,24.9,25.5,25.6,25.8,19.4,19.7,19.6,22.4,23.5,23.6,22.3,25.1,25.1,25.6,25.7,25.9,19.5,19.6,19.6,14.8,15.0,7.5,42.1,42.4,42.5,7.0,7.0,7.0,7.0,7.0,7.0,22.4,23.6,23.7,18.0,7.0,7.0,7.0
ij10,0.1,0.1,0.1,-,-,-,-,-,-,-,-,-,-,-,-,-,-,-,-,-,-,-,-,-,-,-,-,-,-,-,-,-,-,-,-,-,-,-,-,-,-,-,0.2,-,-,-
pacman5,0.0,0.0,0.0,-,-,-,-,-,-,-,-,-,-,-,-,-,-,-,-,-,-,-,-,-,-,-,-,-,-,-,-,-,-,-,-,-,-,-,-,-,-,-,0.0,-,-,-
pnueli-zuck-5-live,12.8,12.8,12.8,17.8,17.8,17.7,17.7,17.9,17.8,19.3,19.7,19.4,17.7,17.7,17.8,142.8,145.1,79.4,143.4,144.9,79.2,160.7,162.6,116.3,142.8,144.9,79.3,17.9,17.9,17.9,17.9,18.0,17.9,19.2,19.3,19.4,17.8,17.8,17.9,17.8,17.8,17.7,40.2,17.7,17.7,17.7
pnueli3-live,0.1,0.1,0.1,0.1,0.1,0.1,0.1,0.1,0.1,0.1,0.1,0.1,0.1,0.1,0.1,0.5,0.5,0.5,0.5,0.5,0.5,0.5,0.5,0.5,0.5,0.5,0.5,0.1,0.1,0.1,0.1,0.1,0.1,0.1,0.1,0.1,0.1,0.1,0.1,0.1,0.1,0.1,0.3,0.1,0.1,0.1
rabin3-live,0.0,0.0,0.0,-,-,-,-,-,-,-,-,-,-,-,-,-,-,-,-,-,-,-,-,-,-,-,-,-,-,-,-,-,-,-,-,-,-,-,-,-,-,-,0.1,-,-,-
wlan3-0-cost-max,err,1.6,1.6,49.2,48.9,42.2,66.8,66.8,66.2,34.9,35.3,35.2,50.9,50.9,50.7,49.8,49.5,43.7,67.1,67.1,66.5,35.9,35.7,35.6,50.8,51.0,50.5,50.5,50.1,50.3,45.4,45.6,45.8,45.1,43.7,42.7,49.1,48.9,48.8,35.9,35.7,35.6,2.7,34.9,35.3,35.2
wlan3-0-cost-min,1.7,1.7,1.7,10.3,10.4,5.2,13.9,14.1,13.9,2.2,2.2,2.2,2.2,2.2,2.2,5.2,5.3,5.2,17.1,17.0,17.0,2.7,2.7,2.7,2.7,2.7,2.7,9.4,9.4,9.3,15.7,15.8,15.7,2.6,2.7,2.6,2.7,2.7,2.6,2.7,2.7,2.7,2.5,2.2,2.2,2.2
wlan4-0-cost-max,err,5.0,5.0,130.5,131.5,119.4,198.1,200.3,208.9,94.1,95.0,95.7,133.7,134.7,139.9,128.8,130.3,118.1,198.4,200.9,208.7,94.5,96.1,98.6,132.6,133.9,139.0,130.0,119.0,123.1,124.4,125.5,130.4,118.5,115.5,116.9,124.9,125.7,130.3,94.5,96.1,98.6,9.7,94.1,95.0,95.7
wlan4-0-cost-min,5.8,5.9,5.8,36.9,37.1,18.4,49.7,50.2,49.6,7.9,8.0,7.9,7.9,7.9,7.9,18.6,18.8,18.5,61.3,61.5,60.8,9.7,9.8,9.6,9.6,9.7,9.5,33.2,33.4,32.9,55.8,56.5,55.6,9.4,9.5,9.3,9.4,9.5,9.3,9.7,9.8,9.6,9.1,7.9,7.9,7.9
wlandl0-80-deadline,TO,TO,TO,44.8,44.9,44.6,60.7,60.6,60.5,42.5,38.5,40.9,39.2,39.1,39.1,38.6,38.9,38.6,61.2,61.1,60.7,42.2,41.5,40.8,40.1,40.1,35.3,24.2,24.5,19.0,32.4,32.5,5.4,37.1,36.3,36.3,24.3,24.4,4.8,32.4,32.5,5.4,7.6,24.2,24.4,4.8
wlandl1-80-deadline,TO,TO,TO,78.7,78.4,67.0,119.7,118.7,118.6,118.4,92.5,54.0,75.9,75.6,65.5,81.7,81.4,70.0,120.6,119.7,119.3,92.3,91.7,96.7,79.1,78.6,67.8,55.3,55.3,30.7,74.4,74.2,11.0,99.4,139.8,66.5,56.1,56.1,10.0,74.4,74.2,11.0,15.9,55.3,55.3,10.0
zeroconf-1000-2-false-correctmax,err,err,1.8,err,14.6,2.5,err,13.9,2.4,err,15.2,2.7,err,2.5,2.6,err,13.4,2.4,err,13.8,2.4,err,15.2,2.7,err,2.7,2.7,err,13.5,2.5,err,13.4,2.5,err,12.3,2.9,err,4.7,2.5,-,13.4,2.5,err,-,2.5,2.4
zeroconf-1000-2-false-correctmin,err,1.9,1.9,err,2.1,2.1,err,2.4,2.4,err,2.1,2.2,err,2.1,2.1,err,2.1,2.1,err,2.4,2.5,err,2.2,2.2,err,2.1,2.1,err,3.9,2.1,err,2.4,2.5,err,7.4,2.5,err,7.4,2.0,-,2.1,2.1,err,-,2.1,2.0
zeroconf-1000-4-false-correctmax,err,5.4,5.4,err,20.5,8.3,err,24.4,7.5,err,14.6,9.6,err,14.7,8.5,err,20.7,8.7,err,24.6,7.6,err,57.9,10.2,err,9.8,9.8,err,14.4,8.2,err,20.6,8.1,err,45.6,8.1,err,21.0,8.8,-,20.5,8.1,err,-,9.8,7.5
zeroconf-1000-4-false-correctmin,err,err,6.5,err,7.1,7.1,err,8.5,8.5,err,46.4,7.8,err,14.9,7.8,err,7.1,7.1,err,8.5,8.5,err,46.4,7.7,err,14.9,7.8,err,8.0,8.1,err,10.0,10.0,err,8.0,7.9,err,7.9,7.9,-,7.1,7.1,err,-,7.1,7.1
zeroconf-20-2-false-correctmax,err,err,1.8,12.5,2.4,2.4,14.4,2.4,2.5,10.2,2.6,2.8,10.1,2.8,2.8,12.7,2.4,2.4,14.1,2.3,2.3,8.5,2.5,2.5,8.5,2.5,2.5,13.4,2.5,2.5,13.3,2.5,2.5,12.2,2.5,2.5,9.0,2.5,2.5,8.5,2.4,2.4,3.6,8.5,2.3,2.3
zeroconf-20-2-false-correctmin,err,err,1.9,2.1,2.1,2.1,2.4,2.5,2.4,2.1,2.1,2.1,2.0,2.1,2.1,2.1,2.1,2.1,2.4,2.5,2.5,2.1,2.1,2.1,2.1,2.1,2.1,2.6,2.6,2.6,2.9,3.0,3.0,2.6,2.6,2.7,2.6,2.6,2.6,2.1,2.1,2.1,4.9,2.0,2.1,2.1
zeroconf-20-4-false-correctmax,err,6.3,6.2,43.8,8.7,8.5,51.4,8.7,8.4,7.8,7.9,7.7,7.8,7.8,7.8,42.2,8.3,8.1,50.3,8.8,8.6,55.9,9.8,9.8,9.1,9.1,9.0,39.2,7.9,7.8,39.1,7.4,7.3,16.9,9.4,9.2,17.0,9.4,9.2,39.1,7.4,7.3,13.1,7.8,7.4,7.3
zeroconf-20-4-false-correctmin,err,6.8,6.9,err,8.0,8.1,err,8.5,8.7,err,7.8,7.9,err,7.8,7.9,err,8.0,8.0,err,8.5,8.7,err,7.9,8.0,err,7.9,8.0,err,73.2,73.9,err,46.6,47.2,err,73.5,73.8,err,73.3,73.6,-,7.9,8.0,err,-,7.8,7.9
zeroconfdl-1000-1-false-20-deadlinemax,err,err,0.8,7.2,7.1,1.5,8.1,8.1,1.4,8.1,8.2,1.5,1.5,1.5,1.5,7.7,7.8,1.8,8.3,8.2,1.5,9.5,9.5,1.7,2.4,2.4,1.8,7.4,7.5,1.7,8.3,8.3,1.6,7.8,7.8,1.5,5.3,5.3,1.8,7.2,7.1,1.5,2.1,1.5,1.5,1.4
zeroconfdl-1000-1-false-20-deadlinemin,0.8,0.8,0.8,1.4,1.4,1.5,1.4,1.4,1.4,7.4,7.5,1.3,2.3,1.3,1.3,1.6,1.6,1.7,1.5,1.5,1.5,7.4,7.5,1.3,2.9,1.6,1.6,1.7,1.7,1.7,1.7,1.7,1.7,7.6,7.7,1.3,1.5,1.5,1.5,1.4,1.4,1.3,1.7,1.4,1.3,1.3
zeroconfdl-1000-1-false-50-deadlinemax,err,err,7.3,63.3,63.1,11.7,68.5,68.0,11.6,77.7,77.2,13.5,21.1,21.1,11.3,68.3,67.6,13.1,72.4,71.6,12.4,79.4,79.1,13.7,11.7,11.5,13.0,66.1,66.0,13.0,70.1,69.5,12.4,66.9,67.1,12.0,13.8,13.7,13.7,63.3,63.1,11.7,18.8,11.7,11.5,11.3
zeroconfdl-1000-1-false-50-deadlinemin,7.3,7.4,7.4,10.3,10.6,10.5,9.4,9.6,9.6,57.9,59.2,9.6,10.0,10.2,10.3,11.1,11.3,11.3,10.1,10.2,10.3,63.5,65.2,10.7,11.8,12.2,12.0,19.3,19.7,19.4,21.6,22.1,21.7,21.3,21.7,21.6,21.3,21.7,21.5,10.3,10.6,10.5,15.2,9.4,9.6,9.6
\end{filecontents*}
\csvreader[
				late after line=\\,
				late after last line=\\,]
				{results/ablation-results/tree_fitting_time.csv}{}
				{\compTabletime}
		\end{tabular}}
	\end{adjustbox}
\end{table*}

\subsubsection{Runtime Comparison to \dtnest} \label{app:subsec_dtn_time}
In this section, we report on the runtime of the \dte portfolio and \dtnest. 
For both, we measured the time outside of the Docker container, meaning the time includes startup and shutdown.
Experiments were repeated 5 times.
\dte is again run as a portfolio, meaning there is a timeout of 5 minutes for model solving, 10 minutes for running all portfolio configurations (first \datasetCombined+\wagency, then \datasetPermissive+\wmessup, last \datasetController+\wsimulation) until the time budget is exhausted, and a 5-minute timeout for pruning.
\cref{fig:app_dtnest_time_comp} shows results.
We always choose $\varepsilon=0$ for \dtnest, as we have seen before that \dtnest struggles when this value is replaced and solves fewer benchmarks.
Surprisingly, even though \dte ran consecutively in three configurations, the runtime is often comparable, especially for larger values of $\varepsilon$.

For completeness, we include the runtime of our portfolio, \dtnest{} and \dtpaynt in \cref{tab:runtime-dtnest_full}.
We ran \dtpaynt with 16GB and 128GB of memory, as we encountered many cases in which the process was killed due to memory usage.
Unfortunately, these cases then mostly turned into timeouts.

\begin{figure}[tp]
	\centering
	\setlength{\scatterplotsize}{0.42\textwidth}
	\scatterplottime{results/logs-time-portfolio/time_portfolio_with_dtnest.csv}{dtnest-7-runtime-0}{\dtnest}{dteps-runtime-0}{Portfolio}{false}
	\scatterplottime{results/logs-time-portfolio/time_portfolio_with_dtnest.csv}{dtnest-7-runtime-0}{\dtnest}{dteps-runtime-1e-6}{Portfolio}{false}
	\scatterplottime{results/logs-time-portfolio/time_portfolio_with_dtnest.csv}{dtnest-7-runtime-0}{\dtnest}{dteps-runtime-1e-2}{Portfolio}{false}
	
	\caption{Runtime comparison between \dtnest and the \dte portfolio. While \dtnest always uses $\varepsilon=0$, \dte is shown with $\varepsilon=0$ (top left), $\varepsilon=10^{-6}$ (top right) and $\varepsilon=10^{-2}$ (bottom).}
	\label{fig:app_dtnest_time_comp}
\end{figure}

\begin{table*}[tp]
	\caption{
		Detailed runtime of our portfolio, \dtnest, and \dtpaynt with 128GB and 16GB of RAM (in seconds).
	}\label{tab:runtime-dtnest_full}%
	\centering%
	\begin{adjustbox}{angle=270}%
		\resizebox*{.62\textheight}{!}{%
			\setlength{\tabcolsep}{5pt}
			\begin{tabular}{>{\scriptsize}lrrrrrrrrrrrr}
				Name & \multicolumn{3}{c}{\dte} & \multicolumn{3}{c}{\dtnest} & \multicolumn{3}{c}{\dtpaynt, 16GB} & \multicolumn{3}{c}{\dtpaynt, 128GB} \\
				\bfseries & {$0$} & {$10^{-6}$} & {$10^{-2}$} & {$0$} & {$10^{-6}$} & {$10^{-2}$} & {$0$} & {$10^{-6}$} & {$10^{-2}$}  & {$0$} & {$10^{-6}$} & {$10^{-2}$} \\
				\midrule
				\begin{filecontents*}{results/logs-time-portfolio/time_portfolio_table_with_dtnest_with_dtpyant.csv}
benchmark,dteps-size-0,dteps-runtime-0,dteps-comment-0,dteps-size-1e-2,dteps-runtime-1e-2,dteps-comment-1e-2,dteps-size-1e-6,dteps-runtime-1e-6,dteps-comment-1e-6,dtnest-11-size-0,dtnest-11-runtime-0,dtnest-11-comment-0,dtnest-7-size-0,dtnest-7-runtime-0,dtnest-7-comment-0,dtnest-9-size-0,dtnest-9-runtime-0,dtnest-9-comment-0,dtnest-11-size-1e-2,dtnest-11-runtime-1e-2,dtnest-11-comment-1e-2,dtnest-7-size-1e-2,dtnest-7-runtime-1e-2,dtnest-7-comment-1e-2,dtnest-9-size-1e-2,dtnest-9-runtime-1e-2,dtnest-9-comment-1e-2,dtnest-11-size-1e-6,dtnest-11-runtime-1e-6,dtnest-11-comment-1e-6,dtnest-7-size-1e-6,dtnest-7-runtime-1e-6,dtnest-7-comment-1e-6,dtnest-9-size-1e-6,dtnest-9-runtime-1e-6,dtnest-9-comment-1e-6,dtpaynt-128-size-0,dtpaynt-128-runtime-0,dtpaynt-128-comment-0,dtpaynt-16-size-0,dtpaynt-16-runtime-0,dtpaynt-16-comment-0,dtpaynt-128-size-1e-2,dtpaynt-128-runtime-1e-2,dtpaynt-128-comment-1e-2,dtpaynt-16-size-1e-2,dtpaynt-16-runtime-1e-2,dtpaynt-16-comment-1e-2,dtpaynt-128-size-1e-6,dtpaynt-128-runtime-1e-6,dtpaynt-128-comment-1e-6,dtpaynt-16-size-1e-6,dtpaynt-16-runtime-1e-6,dtpaynt-16-comment-1e-6
consensus4-4-c2,err,err,Success,25,38,Success,31,43,Success,err,err,OptimisticValueIterationHelper.cpp:230,err,err,OptimisticValueIterationHelper.cpp:230,err,err,OptimisticValueIterationHelper.cpp:230,err,err,OptimisticValueIterationHelper.cpp:230,err,err,OptimisticValueIterationHelper.cpp:230,err,err,OptimisticValueIterationHelper.cpp:230,err,err,OptimisticValueIterationHelper.cpp:230,err,err,OptimisticValueIterationHelper.cpp:230,err,err,OptimisticValueIterationHelper.cpp:230,err,err,OptimisticValueIterationHelper.cpp:230,err,err,OptimisticValueIterationHelper.cpp:230,err,err,OptimisticValueIterationHelper.cpp:230,err,err,OptimisticValueIterationHelper.cpp:230,err,err,OptimisticValueIterationHelper.cpp:230,err,err,OptimisticValueIterationHelper.cpp:230
consensus4-4-disagree,err,err,Success,203,66,Success,512,512,Success,err,err,OptimisticValueIterationHelper.cpp:230,err,err,OptimisticValueIterationHelper.cpp:230,err,err,OptimisticValueIterationHelper.cpp:230,err,err,OptimisticValueIterationHelper.cpp:230,err,err,OptimisticValueIterationHelper.cpp:230,err,err,OptimisticValueIterationHelper.cpp:230,err,err,OptimisticValueIterationHelper.cpp:230,err,err,OptimisticValueIterationHelper.cpp:230,err,err,OptimisticValueIterationHelper.cpp:230,err,err,OptimisticValueIterationHelper.cpp:230,err,err,OptimisticValueIterationHelper.cpp:230,err,err,OptimisticValueIterationHelper.cpp:230,err,err,OptimisticValueIterationHelper.cpp:230,err,err,OptimisticValueIterationHelper.cpp:230,err,err,OptimisticValueIterationHelper.cpp:230
consensus4-4-steps-max,err,err,Success,147,94,Success,231,125,Success,err,err,OptimisticValueIterationHelper.cpp:230,err,err,OptimisticValueIterationHelper.cpp:230,err,err,OptimisticValueIterationHelper.cpp:230,err,err,OptimisticValueIterationHelper.cpp:230,err,err,OptimisticValueIterationHelper.cpp:230,err,err,OptimisticValueIterationHelper.cpp:230,err,err,OptimisticValueIterationHelper.cpp:230,err,err,OptimisticValueIterationHelper.cpp:230,err,err,OptimisticValueIterationHelper.cpp:230,err,err,OptimisticValueIterationHelper.cpp:230,err,err,OptimisticValueIterationHelper.cpp:230,err,err,OptimisticValueIterationHelper.cpp:230,err,err,OptimisticValueIterationHelper.cpp:230,err,err,OptimisticValueIterationHelper.cpp:230,err,err,OptimisticValueIterationHelper.cpp:230
consensus4-4-steps-min,err,err,Success,441,43,Success,512,109,Success,err,err,OptimisticValueIterationHelper.cpp:230,err,err,OptimisticValueIterationHelper.cpp:230,err,err,OptimisticValueIterationHelper.cpp:230,err,err,OptimisticValueIterationHelper.cpp:230,err,err,OptimisticValueIterationHelper.cpp:230,err,err,OptimisticValueIterationHelper.cpp:230,err,err,OptimisticValueIterationHelper.cpp:230,err,err,OptimisticValueIterationHelper.cpp:230,err,err,OptimisticValueIterationHelper.cpp:230,err,err,OptimisticValueIterationHelper.cpp:230,err,err,OptimisticValueIterationHelper.cpp:230,err,err,OptimisticValueIterationHelper.cpp:230,err,err,OptimisticValueIterationHelper.cpp:230,err,err,OptimisticValueIterationHelper.cpp:230,err,err,OptimisticValueIterationHelper.cpp:230
csma2-6-all-before-max,1,11,Success,1,10,Success,1,11,Success,29,10,Success,29,10,Success,29,10,Success,29,10,Success,29,10,Success,29,10,Success,29,10,Success,29,10,Success,29,10,Success,TO,TO,Timeout,err,err,Stop,1,1,Success,1,2,Success,1,2,Success,1,2,Success
csma2-6-time-max,5,58,Success,3,20,Success,5,57,Success,25,15,Success,25,15,Success,25,15,Success,5,24,Success,5,24,Success,5,23,Success,5,24,Success,5,24,Success,5,24,Success,TO,TO,Timeout,err,err,Stop,5,69,Success,5,69,Success,5,94,Success,5,94,Success
csma2-6-time-min,5,60,Success,3,21,Success,5,59,Success,25,18,Success,25,18,Success,25,18,Success,7,16,Success,7,16,Success,7,16,Success,7,16,Success,7,16,Success,7,16,Success,TO,TO,Timeout,err,err,Stop,5,49,Success,5,49,Success,11,95,Success,11,95,Success
eajs3-150-exputil,13,69,Success,11,63,Success,13,72,Success,75,283,Success,75,50,Success,75,81,Success,75,280,Success,75,50,Success,75,80,Success,75,285,Success,75,51,Success,75,81,Success,TO,TO,Timeout,err,err,Stop,err,err,AssertionError,err,err,AssertionError,err,err,AssertionError,err,err,AssertionError
firewire-dl-3-400-deadline,9,25,Success,9,21,Success,9,25,Success,37,47,Success,37,6,Success,37,48,Success,9,93,Success,9,78,Success,9,93,Success,9,93,Success,9,78,Success,9,93,Success,err,err,Stop,err,err,Stop,41,131,Success,41,131,Success,41,131,Success,41,132,Success
firewire-dl-3-600-deadline,13,214,Success,7,65,Success,13,215,Success,35,132,Success,35,29,Success,35,46,Success,13,57,Success,15,104,Success,15,96,Success,13,58,Success,15,104,Success,15,96,Success,err,err,Stop,err,err,Stop,27,134,Success,27,135,Success,27,133,Success,27,134,Success
firewire-dl-36-200-deadline,13,29,Success,13,29,Success,13,29,Success,23,7,Success,23,7,Success,23,7,Success,13,35,Success,13,35,Success,13,35,Success,13,35,Success,13,36,Success,13,35,Success,err,err,Stop,err,err,Stop,err,err,Stop,err,err,Stop,err,err,Stop,err,err,Stop
firewire-dl-36-400-deadline,13,241,Success,11,240,Success,13,172,Success,err,err,Stop,err,err,ValueError,err,err,ValueError,15,111,Success,15,106,Success,15,101,Success,15,110,Success,15,107,Success,15,101,Success,err,err,Stop,err,err,Stop,63,1033,Success,err,err,Stop,63,1035,Success,err,err,Stop
firewire-dl-36-600-deadline,13,30,Success,13,29,Success,13,29,Success,23,6,Success,23,7,Success,23,6,Success,13,36,Success,13,36,Success,13,35,Success,13,35,Success,13,36,Success,13,36,Success,err,err,Stop,err,err,Stop,err,err,Stop,err,err,Stop,err,err,Stop,err,err,Stop
firewire-false-36-400-timemax,err,err,Success,7,148,Success,9,142,Success,err,err,Stop,17,249,Success,17,246,Success,17,370,Success,17,256,Success,17,255,Success,21,285,Success,21,174,Success,21,172,Success,err,err,Stop,err,err,Stop,25,262,Success,err,err,Stop,25,260,Success,err,err,Stop
firewire-false-36-400-timemin,13,340,Success,7,409,Success,13,337,Success,err,err,Stop,31,86,Success,err,err,Stop,err,err,Stop,err,err,Stop,err,err,Stop,err,err,Stop,err,err,Stop,err,err,Stop,err,err,Stop,err,err,Stop,29,261,Success,err,err,Stop,29,261,Success,err,err,Stop
ij10,1,6,Success,1,6,Success,1,7,Success,719,42,Success,719,17,Success,719,42,Success,719,41,Success,719,18,Success,719,41,Success,719,43,Success,719,17,Success,719,41,Success,TO,TO,Timeout,err,err,Stop,1,0,Success,1,1,Success,1,1,Success,1,0,Success
pacman5,1,7,Success,1,6,Success,1,7,Success,3,3,Success,3,3,Success,3,3,Success,3,3,Success,3,3,Success,3,3,Success,3,3,Success,3,3,Success,3,3,Success,TO,TO,Timeout,err,err,Stop,1,1,Success,1,0,Success,1,0,Success,1,1,Success
pnueli-zuck-5-live,1,408,Success,1,361,Success,1,410,Success,err,err,Stop,TO,TO,Timeout,err,err,Stop,err,err,Stop,TO,TO,Timeout,err,err,Stop,err,err,Stop,TO,TO,Timeout,err,err,Stop,err,err,Stop,err,err,Stop,1,68,Success,1,69,Success,1,67,Success,1,67,Success
pnueli3-live,1,14,Success,1,14,Success,1,14,Success,97,61,Success,97,11,Success,97,18,Success,97,63,Success,97,11,Success,97,18,Success,97,62,Success,97,11,Success,97,18,Success,TO,TO,Timeout,err,err,Stop,1,1,Success,1,1,Success,1,0,Success,1,1,Success
rabin3-live,1,7,Success,1,7,Success,1,7,Success,33,8,Success,33,8,Success,33,8,Success,33,8,Success,33,8,Success,33,8,Success,33,8,Success,33,7,Success,33,8,Success,TO,TO,Timeout,err,err,Stop,1,0,Success,1,1,Success,1,1,Success,1,0,Success
wlan3-0-cost-max,512,512,Success,361,512,Success,512,512,Success,325,944,Success,325,310,Success,325,503,Success,95,1221,Success,TO,TO,Timeout,95,1214,Success,111,1235,Success,99,1220,Success,TO,TO,Timeout,TO,TO,Timeout,err,err,Stop,TO,TO,Timeout,TO,TO,Timeout,TO,TO,Timeout,TO,TO,Timeout
wlan3-0-cost-min,13,54,Success,9,48,Success,13,54,Success,13,14,Success,13,14,Success,13,15,Success,9,55,Success,9,54,Success,9,54,Success,13,48,Success,13,47,Success,13,47,Success,TO,TO,Timeout,err,err,Stop,13,81,Success,13,81,Success,13,72,Success,13,72,Success
wlan4-0-cost-max,TO,TO,Success,453,512,Success,TO,TO,Success,TO,TO,Timeout,277,1233,Success,TO,TO,Timeout,TO,TO,Timeout,TO,TO,Timeout,TO,TO,Timeout,TO,TO,Timeout,TO,TO,Timeout,TO,TO,Timeout,TO,TO,Timeout,err,err,Stop,TO,TO,Timeout,err,err,Stop,TO,TO,Timeout,err,err,Stop
wlan4-0-cost-min,13,176,Success,9,157,Success,13,175,Success,13,39,Success,13,38,Success,13,38,Success,9,106,Success,9,105,Success,9,105,Success,13,83,Success,13,83,Success,13,84,Success,TO,TO,Timeout,err,err,Stop,15,146,Success,15,146,Success,15,133,Success,15,131,Success
wlandl0-80-deadline,512,507,Success,23,487,Success,512,497,Success,err,err,Stop,617,239,Success,617,368,Success,TO,TO,Timeout,TO,TO,Timeout,err,err,ValueError,TO,TO,Timeout,141,1225,Success,err,err,ValueError,err,err,Stop,err,err,Stop,TO,TO,Timeout,err,err,Stop,TO,TO,Timeout,err,err,Stop
wlandl1-80-deadline,512,512,Success,23,512,Success,512,512,Success,err,err,Stop,659,747,Success,659,1022,Success,TO,TO,Timeout,TO,TO,Timeout,TO,TO,Timeout,TO,TO,Timeout,205,1229,Success,229,1222,Success,TO,TO,Timeout,err,err,Stop,TO,TO,Timeout,err,err,Stop,TO,TO,Timeout,err,err,Stop
zeroconf-1000-2-false-correctmax,err,err,Success,1,43,Success,15,239,Success,111,548,Success,111,138,Success,111,410,Success,err,err,IllegalArgumentException,err,err,IllegalArgumentException,err,err,IllegalArgumentException,err,err,IllegalArgumentException,err,err,IllegalArgumentException,err,err,IllegalArgumentException,TO,TO,Timeout,err,err,Stop,err,err,AssertionError,err,err,AssertionError,err,err,AssertionError,err,err,AssertionError
zeroconf-1000-2-false-correctmin,err,err,Success,1,33,Success,3,44,Success,9,19,Success,9,19,Success,9,19,Success,err,err,IllegalArgumentException,err,err,IllegalArgumentException,err,err,IllegalArgumentException,err,err,IllegalArgumentException,err,err,IllegalArgumentException,err,err,IllegalArgumentException,TO,TO,Timeout,err,err,Stop,err,err,AssertionError,err,err,AssertionError,err,err,AssertionError,err,err,AssertionError
zeroconf-1000-4-false-correctmax,err,err,Success,1,112,Success,9,290,Success,err,err,Stop,509,337,Success,509,624,Success,err,err,IllegalArgumentException,err,err,IllegalArgumentException,err,err,IllegalArgumentException,err,err,Stop,err,err,IllegalArgumentException,err,err,Stop,TO,TO,Timeout,err,err,Stop,err,err,AssertionError,err,err,Stop,err,err,AssertionError,err,err,Stop
zeroconf-1000-4-false-correctmin,err,err,Success,1,87,Success,3,127,Success,9,58,Success,9,59,Success,9,59,Success,err,err,IllegalArgumentException,err,err,IllegalArgumentException,err,err,IllegalArgumentException,err,err,IllegalArgumentException,err,err,IllegalArgumentException,err,err,IllegalArgumentException,TO,TO,Timeout,err,err,Stop,err,err,AssertionError,err,err,AssertionError,err,err,AssertionError,err,err,AssertionError
zeroconf-20-2-false-correctmax,73,91,Success,1,43,Success,9,37,Success,err,err,Stop,127,72,Success,127,200,Success,err,err,IllegalArgumentException,err,err,IllegalArgumentException,err,err,IllegalArgumentException,err,err,IllegalArgumentException,err,err,IllegalArgumentException,55,231,Success,TO,TO,Timeout,err,err,Stop,err,err,AssertionError,err,err,AssertionError,err,err,AssertionError,err,err,AssertionError
zeroconf-20-2-false-correctmin,3,28,Success,1,29,Success,3,29,Success,9,17,Success,9,18,Success,9,18,Success,err,err,IllegalArgumentException,err,err,IllegalArgumentException,err,err,IllegalArgumentException,err,err,IllegalArgumentException,err,err,IllegalArgumentException,err,err,IllegalArgumentException,TO,TO,Timeout,err,err,Stop,err,err,AssertionError,err,err,AssertionError,err,err,AssertionError,err,err,AssertionError
zeroconf-20-4-false-correctmax,135,244,Success,1,134,Success,1,133,Success,303,534,Success,303,133,Success,303,255,Success,err,err,IllegalArgumentException,err,err,IllegalArgumentException,err,err,IllegalArgumentException,err,err,IllegalArgumentException,err,err,IllegalArgumentException,err,err,IllegalArgumentException,TO,TO,Timeout,err,err,Stop,err,err,AssertionError,err,err,Stop,err,err,AssertionError,err,err,Stop
zeroconf-20-4-false-correctmin,err,err,Success,1,118,Success,1,117,Success,9,55,Success,9,55,Success,9,56,Success,err,err,IllegalArgumentException,err,err,IllegalArgumentException,err,err,IllegalArgumentException,err,err,IllegalArgumentException,err,err,IllegalArgumentException,err,err,IllegalArgumentException,TO,TO,Timeout,err,err,Stop,err,err,AssertionError,err,err,AssertionError,err,err,AssertionError,err,err,AssertionError
zeroconfdl-1000-1-false-20-deadlinemax,512,60,Success,1,27,Success,512,60,Success,err,err,ValueError,303,37,Success,303,82,Success,err,err,ValueError,err,err,IllegalArgumentException,err,err,IllegalArgumentException,err,err,ValueError,err,err,IllegalArgumentException,err,err,IllegalArgumentException,TO,TO,Timeout,err,err,Stop,err,err,AssertionError,err,err,AssertionError,err,err,AssertionError,err,err,AssertionError
zeroconfdl-1000-1-false-20-deadlinemin,15,39,Success,1,18,Success,15,38,Success,35,62,Success,35,27,Success,35,32,Success,err,err,IllegalArgumentException,err,err,IllegalArgumentException,err,err,IllegalArgumentException,err,err,IllegalArgumentException,err,err,IllegalArgumentException,err,err,IllegalArgumentException,TO,TO,Timeout,err,err,Stop,TO,TO,Timeout,err,err,Stop,TO,TO,Timeout,err,err,Stop
zeroconfdl-1000-1-false-50-deadlinemax,512,512,Success,1,177,Success,512,512,Success,err,err,Stop,389,201,Success,389,280,Success,err,err,IllegalArgumentException,err,err,IllegalArgumentException,err,err,IllegalArgumentException,err,err,IllegalArgumentException,err,err,IllegalArgumentException,err,err,IllegalArgumentException,TO,TO,Timeout,err,err,Stop,TO,TO,Timeout,err,err,Stop,TO,TO,Timeout,err,err,Stop
zeroconfdl-1000-1-false-50-deadlinemin,1,417,Success,1,120,Success,1,417,Success,35,285,Success,35,192,Success,35,205,Success,err,err,IllegalArgumentException,err,err,IllegalArgumentException,err,err,IllegalArgumentException,err,err,IllegalArgumentException,err,err,IllegalArgumentException,err,err,IllegalArgumentException,TO,TO,Timeout,err,err,Stop,TO,TO,Timeout,err,err,Stop,TO,TO,Timeout,err,err,Stop
\end{filecontents*}
\csvreader[
				late after line=\\,
				late after last line=\\,]
				{results/logs-time-portfolio/time_portfolio_table_with_dtnest_with_dtpyant.csv}{}
				{\compTablePayntNestTime}
		\end{tabular}}
	\end{adjustbox}
\end{table*}

\end{document}